\providecommand{\U}[1]{\protect\rule{.1in}{.1in}}
\newtheorem{thm}{\bf Theorem}      % Defines the environments for Theorems,
\newtheorem{cor}{\bf Corollary}[section]     % Corollaries, Lemmas, etc.
\newtheorem{prop}{\bf Proposition}
\newtheorem{rem}{\bf Remark}
\newenvironment{proof}[1][Proof]{\noindent\textbf{#1.} }{\ \rule{0.5em}{0.5em}}
\normalsize\setlength{\parskip}{1em}
\titlespacing*{\section} {0pt}{9pt}{0pt}
\numberwithin{equation}{section}
\begin{document}
\normalsize
\title{A  unified consensus-based parallel ADMM algorithm for high-dimensional  regression with combined regularizations}

\author{ Xiaofei Wu\thanks{College of Mathematics and Statistics, Chongqing University, Chongqing, 401331, P.R. China. Email: xfwu1016@163.com}, \ \  Zhimin Zhang\thanks{Corresponding author. College of Mathematics and Statistics, Chongqing University, Chongqing, 401331, P.R. China. Email: zmzhang@cqu.edu.cn}, \ \ Zhenyu Cui \thanks{School of Business, Stevens Institute of Technology, Hoboken, United States, NJ 07030. Email: zcui6@stevens.edu}.}
\date{}
\maketitle
\vspace{-.40in}

\begin{abstract}
The parallel alternating direction method of multipliers (ADMM) algorithm is widely recognized for its effectiveness in handling large-scale datasets stored in a distributed manner, making it a popular choice for solving statistical learning models. However, there is currently limited research on parallel algorithms specifically designed for high-dimensional  regression with combined (composite) regularization terms. These terms, such as elastic-net, sparse group lasso, sparse fused lasso, and their nonconvex variants, have gained significant attention in various fields due to their ability to incorporate prior information and promote sparsity within specific groups or fused variables. The scarcity of parallel algorithms for combined regularizations can be attributed to the inherent nonsmoothness and complexity of these terms, as well as the absence of closed-form solutions for certain proximal operators associated with them.  In this paper, we propose a  \textit{unified}  constrained optimization formulation based on the consensus problem for these types of convex and nonconvex regression problems and derive the corresponding parallel ADMM algorithms. Furthermore, we prove that the proposed algorithm not only has global convergence but also exhibits linear convergence rate. Extensive simulation experiments, along with a financial example, serve to demonstrate the reliability, stability, and scalability of our algorithm. The R package  for implementing the proposed algorithms can be obtained at \url{https://github.com/xfwu1016/CPADMM}.
\end{abstract}

\textbf{Keywords:} Combined regularization; Global convergence; High-dimensional regression; Massive data; Parallel ADMM
\section{Introduction}
The advancement of modern science and technology has made data collection increasingly easy, resulting in the generation of massive amounts of data. However, due to the sheer volume of data and other factors such as privacy concerns, it has become necessary to store this data in a distributed manner. Therefore, it is essential to design parallel algorithms that can adapt to this massive and distributed storage dataset. Interested readers can refer to \cite{SNEBJ}, \cite{P2014}, \cite{LFL}, and their references for further information.

In this paper, our focus is on high-dimensional regression problems with massive data. When not considering distributed storage, the data required for a regression model typically includes the following,
\begin{equation}\label{data}
\{ {y_i},{\boldsymbol{x}_i}\} _i^n = \{ {y_i},{x_{i,1}},{x_{i,2}}, \cdots ,{x_{i,p}}\} _i^n =\{\boldsymbol y, \boldsymbol{X} \}.
\end{equation}
Here, $y_i$ denotes the target value for the $i$-th observation $\boldsymbol{x}_i$, and $\boldsymbol{x}_i$ represents a $p$-dimensional vector that signifies the $i$-th observation value of $p$ features. These features could be the original observations and/or selected functions constructed from them.
The objective of regression is to establish a relationship between the features in $\bm X$ and the target values in $\bm y$ by estimating coefficients that minimize the difference between the predicted and actual values of $\bm y$.  To avoid overfitting and improve interpretability, some regularization terms    are considered to be added to the objective optimization function, that is,
\begin{align}\label{hdlr}
\mathop {\arg \min }\limits_{\bm \beta \in \mathbb{R}^p} \quad \mathcal{L} (\bm y - \bm X \bm \beta)  + P_\lambda(|\bm \beta|),
\end{align}
where  $\mathcal{L} (\bm y - \bm X \bm \beta) = \frac{1}{n}\sum_{i=1}^{n} {L} (y_i - \bm x_i^\top \bm \beta)$.  Here, the loss function ${L}$ is a generic function that includes least squares,  Huber loss in \cite{H1964} and quantile loss in \cite{K1970}.  We also consider situations where $\mathcal{L} (\bm y - \bm X \bm \beta)$ cannot be separated, such as square root in \cite{BCW}. The regularization term $P_\lambda(|\bm \beta|)$ with tuning parameter $\lambda > 0$  can be a single regularization term, such as $\ell_2$ or ridge in \cite{H1970}, total variation in \cite{RO}, lasso in (\cite{T}), group lasso in \cite{YL2008}; or a combination of them, such as sparse fused lasso (lasso plus total variation) in \cite{TSRZ}, elastic-net (lasso plus $\ell_2$) in \cite{Z},  sparse group lasso (lasso plus group lasso) in \cite{W2008}; or their nonconvex variants.  Due to the lack of unified algorithms for these combined regularization  regressions, this paper focuses primarily on addressing this issue.

The alternating direction method of multipliers (ADMM)  is an iterative optimization algorithm commonly used to solve convex minimization problems with linear constraint. 
It decomposes these problems into subproblems and can be applied to various statistical learning models. \cite{SNEBJ} provided  a good overview of its development and  detailed applications in various fields.
Because some of the functions $L$ are nonsmooth and nondifferentiable, it is necessary to introduce the linear constraint $\bm{r}=\bm{y}-\bm{X}\bm{\beta}$ in order to use ADMM to uniformly solve (\ref{hdlr}). The constrained optimization problem is then formulated as follows,
\begin{align}\label{intr1}
\min_{\bm \beta, \bm r} & \quad   \mathcal{L}(\bm r)  + {P}_\lambda(|\bm \beta|), \notag \\
\text{s.t.} \ & \bm X \bm \beta  + \bm r   = \bm y.
\end{align}
Thus, the augmented Lagrangian form of (\ref{intr1}) is
\begin{align}
L_\mu(\bm \beta, \bm r, \bm d) = 
\mathcal{L}(\bm r)  + {P}_\lambda(|\bm \beta|)  - \bm d^\top (\bm X \bm \beta  + \bm r  - \bm y) +  \frac{\mu}{2}  \|  \bm X \bm \beta + \bm r - \bm y \|_2^2,
\end{align}
where $\bm d$  is dual variable corresponding to the linear constraint, and $\mu>0$ is a given penalty parameter.
The iterative scheme of ADMM for (\ref{intr1}) is
\begin{equation}\label{twoupadmm}
\left\{ \begin{array}{l}
\bm \beta^{k+1} \ \leftarrow  \mathop {\arg \min }\limits_{\bm \beta} \left\{ L_\mu (\bm \beta, \bm r^k,  \bm d^k) \right \};\\
\bm r^{k+1} \ \leftarrow  \mathop {\arg \min }\limits_{\bm r} \left\{ L_\mu (\bm \beta^{k+1}, \bm r, \bm d^k)\right \}; \\
\bm d^{k+1} \ \leftarrow  \bm{d}^{k} - \mu(\bm X \bm \beta^{k+1} + \bm r^{k+1} - \bm y).
\end{array} \right.
\end{equation}

In general, when designing algorithms for distributed parallel processing, it requires a central machine and several  local machines. Assume that 
\begin{align}\label{mdata}
\bm X =(\bm X_1^\top, \bm X_2^\top, \dots, \bm X_M^\top)^\top \quad \text{and} \quad  \bm y = (\bm y_1^\top, \bm y_2^\top, \dots, \bm y_M^\top)^\top
\end{align}
are stored in a distributed manner across $M$ local machines. The central machine receives information transmitted by the local machines, consolidates the information, and then forwards it to the respective local machines. The decomposition in (\ref{mdata}) enables the algorithm to be naturally parallelizable across multiple local machines. Each local machine can independently solve its designated subproblem, and the solutions are then communicated and coordinated between the central machine and the local machines through variable updates. This distributed storage allows for parallel processing and efficient handling of large-scale datasets.
The traditional ADMM algorithm in (\ref{twoupadmm}) can only handle problems on a single machine and is not sufficient for data stored in a distributed manner.
Recently, many parallel ADMM algorithms have been proposed to solve statistical learning models when data is stored in a distributed manner, such as linear regression models (see \cite{SNEBJ}, \cite{YLW} and  \cite{FLY}).  The existing parallel  ADMM  algorithms employ strategies similar to the regularized consensus problem described by \cite{SNEBJ}. 
The consensus problem in distributed computing refers to the need for multiple nodes in a distributed system to agree on a common decision.

These existing parallel algorithms are specifically designed for individual regularization terms which can  achieve sparse feature selection, such as  lasso, group lasso, scad (\cite{FL}) and mcp (\cite{ZC}). However, practical applications often involve other types of structural information beyond sparse feature structures. These include high correlation, feature grouping, or smooth trends among features. Combined regularization terms can simultaneously utilize different regularization techniques to integrate multiple structural information in the model, enhancing its performance and generalization ability.
Thus, in such cases, many studies have suggested replacing a single regularization term with a combined regularization term. For example, elastic-net  in \cite{Z2005}, fused lasso  in \cite{TSRZ}, and sparse group lasso  in \cite{W2008}.
Moreover, many nonconvex variants of these combined regularization terms have been developed. For example, elastic-net variants such as snet (scad plus $\ell_2$) in \cite{W2010} and mnet (mcp plus $\ell_2$) in \cite{HBLMZ}; sparse fused lasso variants like sctv (\textbf{sc}ad plus \textbf{t}otal \textbf{v}ariation) and mctv (\textbf{mc}p plus \textbf{t}otal \textbf{v}ariation) in \cite{XLLK}; sparse group lasso variants such as scgl (\textbf{sc}ad plus \textbf{g}roup \textbf{l}asso) and mcgl (\textbf{mc}p plus \textbf{g}roup \textbf{l}asso) in \cite{H2012} and \cite{T2021b}.

However, when the individual regularization  is  replaced by the  combined regularization,  the parallel algorithms mentioned above cannot be directly applied to these combined regularization regressions due to the complexity of the combination penalty terms, as well as the non-separability of certain composite regularization terms (such as sparse fused lasso and its nonconvex variants), which do not have closed-form solutions for their proximal operators. Currently, there are no parallel algorithms proposed for solving these high-dimensional  problems with combined regularizations. 
The main work of this paper is to fill this gap and develop a  unified consensus-based parallel ADMM (CPADMM) algorithm for high-dimensional regression problems with combined regularizations. The proposed algorithm has three main advantages: 1. It can not only handle massive amounts of data, but also adapt to the structure of distributed data storage. 2. Our algorithm is applicable to a wide range of convex loss functions, as long as we can derive the closed-form solutions for the proximal operators   corresponding to the losses. 3. It has a guarantee of global convergence and a linear convergence rate.

The rest of the paper is organized as follows. Section \ref{sec2}  introduces some widely used combined regularization terms, and introduces some proximal  operators that can play a crucial role in the algorithm. In Section \ref{sec3}, we first review the existing parallel ADMM algorithms for high-dimensional linear regression, and then propose a unified regression parallel ADMM algorithm for solving combined regularization. Section \ref{sec4} discusses the specific implementation of the algorithm for solving various  combined regularization regression. The convergence analysis is provided in Section \ref{sec5}.  Section \ref{sec6} presents numerical results demonstrating the scalability, efficiency and accuracy of the proposed algorithm. Section \ref{sec7} summarizes the findings and concludes the paper, with a discussion on future research directions. Technical proofs, algorithm extensions and additional experiments are provided in the Appendix. The R package CPADMM for implementing the parallel ADMM algorithms can be obtained at \url{https://github.com/xfwu1016/CPADMM}.

\textbf{Notations}:
$\bm 0_n$ and $\bm 1_n$ represent $n$-dimensional vectors with all elements being 0 and 1, respectively.
$\bm F$ is a $(p-1) \times p$ matrix with all elements being 0, except for 1 on the diagonal and -1 on the superdiagonal.
$\bm I_n$ represents the $n$-dimensional identity matrix.
The Hadamard product is denoted by $\odot$.
The sign$(\cdot)$ function is defined component-wise such that sign$(t) = 1$ if $t > 0$, sign$(t) = 0$ if $t = 0$, and sign$(t) = -1$ if $t < 0$.
$(\cdot)_+$ signifies the element-wise operation of extracting the positive component, while $|\cdot|$ denotes the element-wise absolute value function.
For any vector $\bm u$, $\|\bm u\|_1$, $\|\bm u\|_2$, and $\|\bm u\|_{2,1}$ denote the $\ell_1$ norm, the $\ell_2$ norm, and the $\ell_{2,1}$ norm of $\bm u$, respectively.
$\|\bm u\|_{\bm H} := \sqrt{\bm u^\top \bm H \bm u}$ is used to denote the norm of $\bm u$ under the matrix $\bm H$, where $\bm H$ is a matrix.

\newpage

\section{Preliminaries and Literature Review}\label{sec2}
In this section, we first review some commonly used combined regularization techniques and loss functions. Then we derive their corresponding proximal operators, as they play a crucial role in our algorithm.
\vspace{-1em}
\subsection{Combined regularizations}
The combined regularization can be defined as 
\begin{align}
P_\lambda(|\bm \beta|) =  P_{\lambda_1}(|\bm \beta|) +  P_{\lambda_2}(|\bm \beta|),
\end{align}
where $\lambda_1, \lambda_2 > 0$ are two  tuning parameters, and $P_{\lambda_1}(|\bm \beta|)$ and $P_{\lambda_2}(|\bm \beta|)$ are two individual regularization terms. Next, we will discuss in detail the various expressions for combined regularization.
\vspace{-1em}
\subsubsection{Elastic-net}
\vspace{-1em}
Elastic-net regularization in \cite{Z2005} is a popular regularization technique used in machine learning and statistics. It is specifically designed for scenarios where there are a large number of variables and multicollinearity exists among them. This method combines the strengths of two other regularization techniques, $\ell_1$  (lasso) and $\ell_2$ (ridge), that is
\begin{align}
\lambda_1\| \bm \beta \|_1 + \lambda_2 \|\bm \beta \|_2^2,
\end{align}
to achieve a more effective and stable model. Due to these advantages, elastic-net regularization is widely applied in various fields, including  high-dimensional data analysis in \cite{ZT2006}, \cite{ZZ2009} and \cite{ZX2018};  genomics and bioinformatics in \cite{O2012}, \cite{ZH2015} and \cite{Z2019}; and feature selection and variable screening in  \cite{Z2020} and \cite{FJQ2020}.
Many algorithms have also been proposed to solve various loss functions with elastic-net regularization, such as path solution algorithm in  \cite{RZ}; cyclical coordinate descent in \cite{JTHR, FHT},  \cite{YZ2013} and \cite{Y2017}; ADMM algorithm in \cite{GFK} and \cite{L2023}. Although these algorithms perform well in high-dimensional data, they are not suitable for handling massive data stored in distributed storage.
\vspace{-1em}
\subsubsection{Sparse fused lasso}
\vspace{-1em}
For problems in which covariates are sparse and blocky  structure are desired, the sparse fused lasso   regularization (\cite{TSRZ}) has proved to be very efficient.  The  sparse fused lasso regularization term is defined as 
\begin{align} 
{\lambda}_1  {\left \| {{\boldsymbol{\beta}}} \right \|_1}  + \lambda_2 \sum\limits_{j = 2}^p \left| {{\boldsymbol{\beta} _j} - {\boldsymbol{\beta} _{j - 1}}} \right| =  {\lambda}_1 {\left \| {{\boldsymbol{\beta}}} \right \|_1}  + \lambda_2 \left \| \bm F \bm \beta \right \|_1
\end{align}
We refer to \cite{LZ}, \cite{TP},  \cite{AE}, \cite{YHMF}, \cite{PS2016} and \cite{CSM} for wide applications of the fused lasso regularization. 
There have been several algorithms proposed to solve various types of high-dimensional  fused lasso regression ($\bm X$  is not an identity matrix), including the majorization-minimization algorithm (\cite{YWL}),  various variations of the ADMM  algorithms (\cite{YX}, \cite{LMY}, \cite{JLD} and \cite{W2023}).
However, the above algorithms are also not suitable for massively distributed storage of data.
\vspace{-1em}
\subsubsection{Sparse group lasso}
\vspace{-1em}
 In high-dimensional supervised learning problems, leveraging problem-specific assumptions often leads to improved accuracy. 
For problems involving grouped covariates, it is widely believed that the effects are sparsely distributed at both the group level and within the groups. Many studies (see \cite{F2010}, \cite{Q2012}, \cite{S2013} and \cite{CH2020}) have shown that employing sparse group lasso regularization  can accurately identify these grouped covariates. The  sparse group lasso regularization  is defined as 
\begin{align}
 {\lambda}_1   {\left \| {{\boldsymbol{\beta}}} \right \|_1}  + \lambda_2 {\left\| \bm \beta \right\|_{2,1}},
\end{align}
where $\ell_{2,1}$-norm  is a rotational invariant of the $\ell_1$ norm.  If a $p$-vector $\bm \beta$ is partitioned into $G$ disjoint groups denoted respectively by $\bm \beta_1, \bm \beta_2, \dots, \bm \beta_G$,  then its  $\ell_{2,1}$-norm is defined as
\begin{align}
{\left\| \bm \beta \right\|_{2,1}} = \sum\limits_{g=1}^{G} \left\| \bm \beta \right\|_{2}
\end{align}
We refer to \cite{H2009}, \cite{Z2015}, \cite{C2020}, \cite{L2020} and \cite{H2022} for wide applications of the group lasso regularization.  So far, several algorithms have been proposed to solve the high-dimensional sparse group regularization regression problem,  including but not limited to  coordinate descent algorithm (\cite{W2008}, \cite{D2017} and \cite{J2019}), accelerated generalized gradient descent algorithm (\cite{S2013}), linearized ADMM algorithm (\cite{LMY}), inexact semismooth newton based augmented lagrangian algorithm (\cite{Zh2020}), and  proximal gradient descent algorithm (\cite{K2020}). However, these algorithms are not suitable for massive data stored in distributed storage.
\vspace{-1em}
\subsubsection{Nonconvex extensions for combined  regularizations} \label{sec214}
\vspace{-1em}
Although these combined  regularizations effectively utilize prior information about the structure of features, they still rely on lasso to achieve sparsity. Many studies, such as \cite{FL} and \cite{Z}, have confirmed that while lasso has the ability to induce sparsity, it can introduce non-negligible estimation bias due to the use of the same penalty for all terms. Therefore,  Many non-convex penalty functions have been proposed for solving this problem, and the most famous ones among them are scad (\cite{FL}) and mcp (\cite{ZC}).  In this paper, we mainly focus our attention on two popular nonconvex regularizations, $\text{scad}(|\bm \beta|)$ and $\text{mcp}(|\bm \beta|)$, to replace $\lambda_1\|\bm\beta \|_1$ to form nonconvex combined regularization. We summarize these combined regularizations in Table \ref{Tab1}.  These nine combined regularizations are also the main focus of this paper.
\begin{table}[h]\small
\centering
\caption{\footnotesize{Nine types of combined regularization.}}
\renewcommand{\arraystretch}{1.5}
\begin{tabular}{ccccc}
\Xhline{1pt}
                        & $P_{\lambda_2}(|\bm \beta|) = \lambda_2\|\bm\beta \|_2^2$ & $P_{\lambda_2}(|\bm \beta|)=\lambda_2 {\left\| \bm \beta \right\|_{2,1}}$  & $P_{\lambda_2}(|\bm \beta|) = \lambda_2 \left \| \bm F \bm \beta \right \|_1$       \\
\Xhline{0.7pt}
$P_{\lambda_1}(|\bm \beta|) = \lambda_1\|\bm\beta \|_1$ & elastic-net & sgla   & sfla   \\
$P_{\lambda_1}(|\bm \beta|) = \text{scad}(|\bm\beta |)$ & snet        & scgl & sctv \\
$P_{\lambda_1}(|\bm \beta|) = \text{mcp}(|\bm\beta |)$ & mnet        & mcgl  & mctv  \\
\Xhline{1pt}
\end{tabular}
\label{Tab1}
\end{table}

Local linear approximation (LLA) method proposed by \cite{ZL} is an effective algorithm for solving nonconvex regularization regression. The main idea is to transform the statistical model algorithm for solving scad and mcp into solving several iterative reweighted lasso problems. \cite{FLY2014} gave this method a theoretical guarantee, proving that for many losses,  as long as the initial
estimator and the oracle estimator behave well, the two step LLA can find nice estimator with large probability. In this paper, we will use LLA method to transform these nonconvex combined regularization optimization forms into weighted convex optimization forms.

\subsection{Proximal operator}
Proximal operator introduced by \cite{P2014} is widely used in convex optimization and numerical optimization, particularly when the objective function involves nondifferentiable and/or nonsmooth terms. Its mathematical expression is as follows,
\begin{align}
\operatorname{prox}_{\gamma, f} (\bm x) = \arg \min_{\bm z \in \mathbb{R}^q} \left( f(\bm z) + \frac{\gamma}{2} \| \bm z - \bm x \|_2^2 \right),
\end{align}
where $f : \mathbb{R}^q \rightarrow \mathbb{R} \cup \{+\infty\}$ is a closed proper convex function, $\bm z$ and $\gamma$  respectively denote a given vector and a constant. Generally speaking, $f$ is assumed to be separable, that is $f(\bm z) = \sum_{i} f_i(z_i) $.
Since $\| \cdot \|_2^2$ is also a separable function, $\operatorname{prox}_{\gamma, f} (\bm x)$ can be solved coordinate-wise, that is
\begin{align}
\operatorname{prox}_{\gamma, f_i} (x_i) = \arg \min_{z_i} \left(f_i(z_i) +\frac{\gamma}{2} (z_i - x_i)^2  \right).
\end{align}
\subsubsection{Proximal operators for  regularizations} 
Many widely used operators in applications are actually special cases of proximal operators, among which the most famous one is the soft-thresholding operator in \cite{D1995}. The soft-thresholding operator is defined as
$\text{prox}_{\gamma,  \lambda\| \bm \beta \|_{1}} (\bm x) = \arg \min_{\bm \beta \in \mathbb{R}^p} \left( \lambda\| \bm \beta \|_{1} + \frac{\gamma}{2} \| \bm \beta - \bm x \|_2^2 \right).$
The closed-form solution for $\text{prox}_{\gamma,  \lambda\| \bm \beta \|_{1}} (\bm x)$ is given by
\begin{align}\label{lasso}
\text{prox}_{\gamma,  \lambda\| \bm \beta \|_{1}} (\bm x) = \text{sign}(\bm x) \odot \max(|\bm x| - \frac{\lambda}{\gamma}, 0).
\end{align}
In addition, we need to consider the proximal operator commonly used for $\ell_{2,1}$-norm regularization, which is also known as the group soft-thresholding operator. It is defined as $\text{prox}_{\gamma,\lambda \| \bm \beta \|_{2,1}}(\bm{x}) = \arg \min_{\bm \beta \in \mathbb{R}^p} \left( \lambda\| \bm \beta \|_{2,1}\right.$  $\left. + \frac{\gamma}{2} \| \bm \beta - \bm x \|_2^2 \right)$.
The closed-form solution for $\text{prox}_{\gamma,\lambda \| \bm \beta \|_{2,1}}(\bm{x})$ is given by
\begin{align}\label{glasso}
\text{prox}_{\gamma,  \lambda\| \bm \beta \|_{2,1}} (\bm x) = \frac{\bm x} {\|\bm x \|_2} \cdot \max(\|\bm x \|_2- \frac{\lambda}{\gamma} ,0)
\end{align}
However, when it comes to the proximal operators of total variation, the definition is as follows
\begin{align}\label{flasso2}
 \text{prox}_{\gamma,\lambda \| \bm F \bm \beta \|_{1}}(\bm{x}) = \arg \min_{\bm \beta \in \mathbb{R}^p} \left(  \lambda \sum\limits_{j = 2}^p \left| {{\boldsymbol{\beta} _j} - {\boldsymbol{\beta} _{j - 1}}} \right| + \frac{\gamma}{2} \| \bm \beta - \bm x \|_2^2 \right).
\end{align} 
Due to the fact that total variation  $\sum_{j = 2}^p \left| {{\boldsymbol{\beta} _j} - {\boldsymbol{\beta} _{j - 1}}} \right|$ is an inseparable regularization term, it is not possible to derive a closed-form solution for the proximal operator in (\ref{flasso2}). Although it can be solved using iterative numerical methods, incorporating this method into ADMM iterations in (\ref{twoupadmm}) would result in a double loop, leading to significant computational expenses and time consumption.

Note that for combined regularizations, such as elastic-net and sparse group lasso, the closed-form solutions for the proximal operators $\text{prox}_{\gamma, \lambda_1\| \bm \beta \|_{1} + \lambda_2 \| \bm \beta \|_{2,1}}(\bm{x})$ and $\text{prox}_{\gamma, \lambda_1\| \bm \beta \|_{1} + \lambda_2 \| \bm \beta \|_{2,1}}(\bm{x})$ can be found in \cite{GFK} and \cite{C2012}, respectively. However,  note also that the closed-form solutions for the proximal operators $\text{prox}_{\gamma, \lambda_1\| \bm \beta \|_{1} + \lambda_2 \| \bm F \bm \beta \|_{1}}(\bm{x})$  cannot be derived  due to the existence of total variation regularization.

\subsubsection{Proximal operators for  loss functions} 
In this paper, we consider several convex loss functions, such as least squares  (\cite{TSRZ}, \cite{Z2005} and \cite{F2010}), quantile (\cite{K2011},  \cite{GFK}  and \cite{W2023}), square root (\cite{B2013} and  \cite{JLD}), and Huber (\cite{CH2020}), that are applied to various high-dimensional elastic-net, sparse fused  lasso and sparse group lasso regression models.
We use the symbol $\mathcal L$ to represent these loss functions, and their proximal operators are defined as follows
\begin{align}
\operatorname{prox}_{\mu, \mathcal L} (\bm x) = \arg \min_{\bm r} \left( \mathcal L(\bm r) + \frac{\mu}{2} \| \bm r - \bm x \|_2^2 \right).
\end{align}
If $\mathcal L$ is a separable function, $\operatorname{prox}_{\mu, \mathcal L} (\bm x)$ can be also solved coordinate-wise, that is
\begin{align}
\operatorname{prox}_{\mu, \mathcal L} (\bm x_i) = \arg \min_{\bm r_i} \left( \frac{1}{n} L(\bm r_i) + \frac{\mu}{2} \| \bm r_i - \bm x_i \|_2^2 \right).
\end{align}
We  summarize the mathematical expressions of these losses and their corresponding closed-form solutions for the proximal operators in Table \ref{Tab2}.  These closed-form solutions have already been discussed in some papers, and a simple and easily extensible proof was presented in the appendix of \cite{L2023}.
\begin{table}[h]\small
\caption{\footnotesize{The mathematical expressions of various loss functions and their closed-form solutions for the proximal operators.}}
\renewcommand{\arraystretch}{1.8}
\begin{tabular}{m{3cm}<{\centering}m{5cm}<{\centering}m{8cm}<{\centering}}
\Xhline{1pt}
Loss function  & Mathematical expression $\mathcal{L}(\bm r)$ &  The closed from of proximal operator   $\operatorname{prox}_{\mu, \mathcal L} (\bm x_i) $    \\
\Xhline{0.4pt}
Least squares  &  $\frac{1}{2n}\sum\limits_{i=1}^n r_i^2$     
               &  $\frac{n\mu x_i}{1+n\mu}$     \\
Quantile       &  $\frac{1}{n}\sum\limits_{i=1}^n r_i(1-\tau I_{(r_i <0)})$       
               &  $\max\{x_i-\frac{\tau}{n\mu},\min(0,x_i+\frac{1-\tau}{n\mu})\}$  \\
Square root    &  $\sqrt{\frac{\sum\limits_{i=1}^n r_i^2}{2n}}$                                                                  
               &  $\frac{x_i}{\|\bm x\|_2}\cdot \max\{\|\bm x\|_2-\frac{1}{\sqrt{2n}\mu},0\}$    \\
Huber          &  $\frac{1}{n}\sum\limits_{i=1}^n [\frac{r_i^2}{2\delta}I_{(r_i\ge \delta)}+(|r_i|-\frac{\delta}{2})I_{(r_i<\delta)}]$    
               &  $\frac{n\mu \delta x_i}{1+n\mu \delta}+\frac{1}{1+n\mu \delta} \cdot \text{sign}(x_i) \cdot \max\{0,|x_i|-\frac{1}{n\mu}-\delta\}$  \\  
\Xhline{1pt}
\end{tabular}
\label{Tab2}
\end{table}

\section{Parallel ADMM algorithm}\label{sec3}
Assuming that there are a total of $M$ local machines available, the response vector $\bm y \in \mathbb{R}^{n}$ and the sample observation matrix $\bm X \in \mathbb{R}^{n \times p}$ can be divided into $M$ blocks as follows,
\begin{align}\label{datasplit}
\bm y = (\bm y_1^\top, \bm y_2^\top, \dots, \bm y_M^\top)^\top, \ \bm X =(\bm X_1^\top, \bm X_2^\top, \dots, \bm X_M^\top)^\top,
\end{align}\label{glasso}
where $\bm y_m \in \mathbb{R}^{n_m}$,  $\bm{X}_m \in \mathbb{R}^{n_m \times p}$ and $\sum_{m=1}^{M} n_{m} =n$.
In this section, we briefly review the existing consensus parallel ADMM algorithms for solving high-dimensional regression  models. Based on existing algorithms, we propose a unified  parallel ADMM algorithm for high-dimensional regression models with combined regularization.
\subsection{Existing consensus-based parallel ADMM algorithm}
\subsubsection{Parallel ADMM  for lasso}
   In  \cite{SNEBJ}, the consensus ADMM algorithm was first introduced for solving high-dimensional lasso regression in distributed storage settings. By introducing  the consensus constraints $\{ \bm \beta = \bm \beta_m \}_{m=1}^{M}$,
the constrained optimization problem for lasso is as follows
\begin{align}\label{classo}
\min_{\bm \beta, \bm \beta_m}  \quad  & \sum_{m=1}^{M} \frac{1}{2n} \|\bm y_m - \bm X_m \bm \beta_m \|_2^2  + \lambda \|\bm \beta\|_1, \notag \\
\text{s.t.} \ & \bm \beta = \bm \beta_m, \  m =1,2,\dots,M.
\end{align}
 The augmented Lagrangian of (\ref{classo}) is 
\begin{align}\label{ac}
L_\mu (\bm \beta,  \bm \beta_m, \bm d_m)  =   \sum_{m=1}^{M} \frac{1}{2n} \|\bm y_m - \bm X_m \bm \beta_m \|_2^2 +  \lambda \|\bm \beta\|_1  -  \sum\limits_{m=1}^{M} \langle \bm  e_m,  \bm \beta_m - \bm \beta \rangle +  \frac{\mu}{2} \sum\limits_{m=1}^{M}  \| \bm \beta_m - \bm \beta \|_2^2,
\end{align}
where $\bm e_m$ is dual variables corresponding to the linear constraint, and $\mu>0$ is a given penalty parameter.
Similar to the iterative steps of the ADMM algorithm, the parallel ADMM can be written in the following form,
\begin{equation}\label{classoi}
\left\{ \begin{array}{l}
\bm \beta^{k+1} \ \leftarrow  \mathop {\arg \min }\limits_{\bm \beta} \left\{  \frac{\mu}{2} \sum\limits_{m=1}^{M}  \| \bm \beta_m^k - \bm \beta - \bm e^k_m/\mu\|_2^2 + \lambda \|\bm \beta \|_1 \right \};\\
\bm \beta_m^{k+1} \ \leftarrow  \mathop {\arg \min }\limits_{\bm \beta_m} \left\{ \frac{1}{2n} \|\bm y_m - \bm X_m \bm \beta_m \|_2^2  + \sum\limits_{m=1}^{M}  \| \bm \beta_m - \bm \beta^{k+1} - \bm e_m^k/\mu\|_2^2   \right \}, \ m =1,2,\dots, M; \\
\bm{e}^{k+1}_m \ \leftarrow  \bm{e}^{k}_m -\mu(\bm \beta_m^{k+1} - \bm \beta^{k+1}),  \ m =1,2,\dots, M.
\end{array} \right.
\end{equation}
The first step involves the central machine receiving the $\bm \beta_m^{k}$ passed by each local machine, integrating the information to obtain $\bm \beta^{k+1}$, and then passing it on to each local machine. In the second step, the update of $\bm \beta_m^{k+1}$ depends only on the updates of $\bm \beta^{k+1}$ and $\bm e_m^k$. This paves the way for parallel computation of $\bm \beta_m, m = 1, 2, 3, \dots, M$. The third step involves updating the dual variables and loading them onto the respective local machines as well.

Note that the least squares loss in (\ref{classo}) is replaced by quantile loss, and the parallel algorithm provided by  \cite{SNEBJ} is not applicable because quantile loss is piecewise linear and non differentiable. Inspired by the work of  \cite{SNEBJ}, \cite{YLW} proposed a consensus-based parallel ADMM algorithm for convex and nonconvex regularized quantile regression.

\subsubsection{Parallel ADMM for regularized quantile regression}
By introducing  the consensus constraints $\{ \bm \beta = \bm \beta_m \}_{m=1}^{M}$ and $\{ \bm r_m = \bm y_m - \bm X_m \bm \beta_m\}_{m=1}^{M}$, the constrained optimization problem is as follows
\begin{align}\label{PCNPQR}
\min_{\bm \beta, \bm r_m, \bm \beta_m} & \quad   \sum_{m=1}^{M}  \rho(\bm r_m)  + {P}_\lambda(|\bm \beta|), \notag \\
\text{s.t.} \ \bm X_m \bm \beta_m  + \bm r_m &  = \bm y_m, \ \bm \beta = \bm \beta_m, \  m =1,2,\dots,M,
\end{align}
where $\bm r = (\bm r_1^\top, \bm r_2^\top, \dots, \bm r_M^\top)^\top$ and $\rho(\bm r_m) =\frac{1}{n} \sum_i \rho (\bm r_{mi}) $. Here, $\rho( \ )$ is the quantile loss that can be found in Table \ref{Tab1}, and ${P}_\lambda(|\bm \beta|)$ is the lasso, scad or mcp. The augmented Lagrangian of (\ref{PCNPQR}) is 
\begin{align}\label{ac}
L_\mu (\bm \beta, \bm r_m, \bm \beta_m, \bm d_m, \bm e_m) & =  \sum\limits_{m=1}^{M}  \rho{{{\left( \bm r_m  \right)}}}  +  {P}_\lambda(\bm \beta)  -  \sum\limits_{m=1}^{M} \langle \bm  d_m, \bm X_m \bm \beta_m + \bm r_m - \bm y_m \rangle   - \sum\limits_{m=1}^{M} \langle \bm  e_m, \bm \beta_m - \bm \beta \rangle \\ \notag
& + \frac{\mu}{2} \sum\limits_{m=1}^{M} \|  \bm X_m \bm \beta_m + \bm r_m - \bm y_m \|_2^2 
 +  \frac{\mu}{2} \sum\limits_{m=1}^{M}  \| \bm \beta_m - \bm \beta \|_2^2.,
\end{align}
where $\bm d_m$ and $\bm e_m$ are dual variables corresponding to the linear constraints, and $\mu>0$ is a given penalty parameter.
With a given $\{ \bm r_m^{0}, \bm \beta_m^{0}, \bm d_m^{0}, \bm e_m^{0} \}_{m=1}^{M}$,  the iterative scheme of parallel ADMM for (\ref{ac}) is
\begin{equation}\label{madmm}
\left\{ \begin{array}{l}
\bm \beta^{k+1} \ \leftarrow \mathop {\arg \min }\limits_{\bm \beta} \left\{ L_\mu (\bm \beta, \bm r_m^k, \bm \beta_m^k, \bm d_m^k, \bm e_m^k) \right \};\\
\bm r_m^{k+1} \ \leftarrow \mathop {\arg \min }\limits_{\bm r_m} \left\{ L_\mu (\bm \beta^{k+1}, \bm r_m, \bm \beta_m^k, \bm d_m^k, \bm e_m^k)\right \},  \ m =1,2,\dots, M; \\
\bm \beta_m^{k+1} \ \leftarrow \mathop {\arg \min }\limits_{\bm \beta_m} \left\{ L_\mu (\bm \beta^{k+1}, \bm r_m^{k+1}, \bm \beta_m, \bm d_m^k, \bm e_m^k) \right \}, \ m =1,2,\dots, M; \\
\bm d^{k+1}_m \ \leftarrow  \bm{d}^{k}_m - \mu(\bm X_m \bm \beta_m^{k+1} + \bm r_m^{k+1} - \bm y_m),  \ m =1,2,\dots, M; \\
\bm{e}^{k+1}_m \ \leftarrow  \bm{e}^{k}_m -\mu(\bm \beta_m^{k+1} - \bm \beta^{k+1}),  \ m =1,2,\dots, M.
\end{array} \right.
\end{equation}
Both $\bm \beta^{k+1}$ and $\bm r^{k+1}$ updates can be written as proximal operators with closed-form solutions. The update of $\bm \beta_m^{k+1}$ involves solving a system of linear equations. The updates of $\bm d_m^{k+1}$ and $\bm e_m^{k+1}$ involve matrix multiplication with a vector operation. For more information, please refer to \cite{YLW}. More recently, \cite{FLY} used the slack variable representation to rewrite quantile loss, and develpo a parallel ADMM algorithm for penalized quantile regression. Despite the algorithm achieving good results in simulation, the design philosophy of the algorithm is the same as \cite{FLY}. Therefore, we will not provide further description.
\subsection{Parallel ADMM for solving the combined  regularization regression}
The \textit{unified  optimization form} for high-dimensional regression models with combined regularization  is defined as
\begin{align}\label{uof}
\mathop {\arg \min }\limits_{\boldsymbol{\beta}} \left\{ \mathcal L{{{\left(\boldsymbol{y}-\boldsymbol X \boldsymbol\beta  \right)}}}  + P_{\lambda_1}(|\bm \beta|)  +  P_{\lambda_2}(|\bm G \bm \beta|) \right \}, 
\end{align}
where $\bm G$ is  a matrix that changes with the method. For elastic-net regression and  group regression problems, $\bm G$ is the identity matrix, and for  fused lasso regression problems, $\bm G$ is $\bm F$.

Setting $\bm r_m = \boldsymbol{y}_m-\boldsymbol X_m \boldsymbol \beta_m $, $ \bm \beta_m = \bm \beta$ and $\bm G\bm \beta = \bm b$, the combined regularization regression models in  (\ref{uof}) can be rewritten as
\begin{align}\label{mcuof}
&\mathop {\arg \min }\limits_{\boldsymbol{\beta},  \bm b, \{\bm r\}_{m=1}^{M}, \{\bm \beta_m \}_{m=1}^{M}} \left\{ \sum\limits_{m=1}^{M}\mathcal L{{{\left( \bm r_m  \right)}}}  + P_{\lambda_1}(|\bm \beta|)  +  P_{\lambda_2}(|\bm b|) \right \},  \notag\\ 
& \textbf{s.t.} \ \bm X_m \bm \beta_m + \bm r_m = \bm y_m, \ \bm \beta_m = \bm \beta, \  \bm G \bm \beta = \bm b, \ m = 1,2,\dots,M. 
\end{align}

The augmented Lagrangian of (\ref{mcuof}) is given by
\begin{align}\label{alglasso}
 \sum\limits_{m=1}^{M}& \mathcal L{{{\left( \bm r_m  \right)}}}  + P_{\lambda_1}(|\bm \beta|)  +  P_{\lambda_2}(|\bm b|) -  \sum\limits_{m=1}^{M} \langle \bm  d_m, \bm X_m \bm \beta_m + \bm r_m - \bm y_m \rangle   \notag \\
& - \sum\limits_{m=1}^{M} \langle \bm  e_m, \bm \beta_m - \bm \beta \rangle  -  \langle \bm  f, \bm G \bm \beta - \bm b \rangle + \frac{\mu}{2} \sum\limits_{m=1}^{M} \|  \bm X_m \bm \beta_m + \bm r_m - \bm y_m \|_2^2 \\
& +  \frac{\mu}{2} \sum\limits_{m=1}^{M}  \| \bm \beta_m - \bm \beta \|_2^2 + \frac{\mu}{2} \|\bm G \bm \beta - \bm b \|_2^2, \notag
\end{align}
where $\bm d_m$, $\bm e_m$  and $\bm f$ are dual variables corresponding to the linear constraints, and $\mu>0$ is a given penalty parameter.

After rearranging the terms in (\ref{alglasso}) and omitting  constant terms, the update of the  variables in the iterative process can be written as
\begin{small}
\begin{equation}\label{primalupdate}
\left\{ \begin{aligned}
\boldsymbol \beta ^{k + 1} & \leftarrow \arg \min_{\boldsymbol\beta} \left\{  P_{\lambda_1}(|\bm \beta|) +   \frac{\mu}{2} \sum\limits_{m=1}^{M}  \| \bm \beta_m^k - \bm \beta  - \bm e_m^k/\mu\|_2^2 +   \frac{\mu}{2} \| \bm G \bm \beta - \bm b^k - \bm f^k/\mu \|_2^2  \right\};\\ %
{\boldsymbol{b}^{k + 1}} &  \leftarrow \arg \min_{\boldsymbol{b}} \left\{ P_{\lambda_2}(|\bm b|) +   \frac{\mu}{2} \| \bm G \bm \beta^{k+1} - \bm b - \bm f^k/\mu \|_2^2  \right\};  \\ %
{\boldsymbol{r}_m^{k + 1}} &  \leftarrow \arg \min_{\boldsymbol{r}_m}  \left\{ \mathcal L(\bm r_m) +  \frac{\mu}{2}\|  \bm X_m \bm \beta_m^k + \bm r_m - \bm y_m - \bm d_m^k/\mu \|_2^2  \right\},  m =1,2,\dots,M; \\ %
{\boldsymbol{\beta}_m^{k + 1}} &  \leftarrow \arg \min_{\boldsymbol{\beta}_m} \left\{ \frac{\mu}{2} \|  \bm X_m \bm \beta_m + \bm r_m^{k+1} - \bm y_m - \bm d_m^k/\mu \|_2^2 +  \frac{\mu}{2} \| \bm \beta_m - \bm \beta^{k+1}  - \bm e_m^k/\mu\|_2^2 \right\},  m =1,2,\dots,M; \\ %
\boldsymbol d_m ^{k + 1} & \leftarrow   \bm d_m ^k - \mu ( \bm X_m \bm \beta_m^{k+1} + \bm r_m^{k+1} - \bm y_m),   m =1,2,\dots,M;\\ %
\boldsymbol e_m ^{k + 1} & \leftarrow   \bm e_m ^k - \mu ( \bm \beta_m^{k+1} - \bm \beta^{k+1} ),  m =1,2,\dots,M;\\%
\boldsymbol f ^{k + 1} & \leftarrow   \bm f ^k - \mu (\bm G \bm \beta^{k+1} - \bm b^{k+1} ),  m =1,2,\dots,M.
\end{aligned} \right.
\end{equation}
\end{small}
From  (\ref{primalupdate}), we know that the central machine does not need to load data, it only needs to update $\bm \beta$, $\bm b$ and $\bm f$. The $m$-th local machine needs to load $\bm X_m$ and $\bm y_m$, and update its corresponding $\bm r_m$, $\bm \beta_m$, $\bm d_m$, and $\bm e_m$. We visualize the operations of the central and local machines in parallel algorithms in Figure \ref{Fig1}, and summarize the algorithm in Algorithm \ref{alg1}.

 In certain economic and financial applications, additional linear constraints are required on the coefficient $\bm \beta$, such as the sum constraint in investment arbitrage and the non-negative constraint in stock index tracking. In such cases, a minor modification is needed in the update of $\bm \beta^{k+1}$, that is,
\begin{align}
\boldsymbol \beta ^{k + 1} & \leftarrow \arg \min_{\boldsymbol\beta \in \mathcal{C}} \left\{  P_{\lambda_1}(|\bm \beta|) +   \frac{\mu}{2} \sum\limits_{m=1}^{M}  \| \bm \beta_m^k - \bm \beta  - \bm e_m^k/\mu\|_2^2 +   \frac{\mu}{2} \| \bm G \bm \beta - \bm b^k - \bm f^k/\mu \|_2^2  \right\},
\end{align}
 which does not change the structure of the algorithm. Here,  $\mathcal{C}$ is a space with linear constraints, such as $\bm 1^\top \bm \beta = 1$ and $\{\beta\}_{j=1}^p \ge 0$. In order to make our algorithm better suited for these specific applications, we include these linear constraints in the subsequent discussions.
\begin{figure}[H]
\centering
\includegraphics[width=16cm,height=8cm]{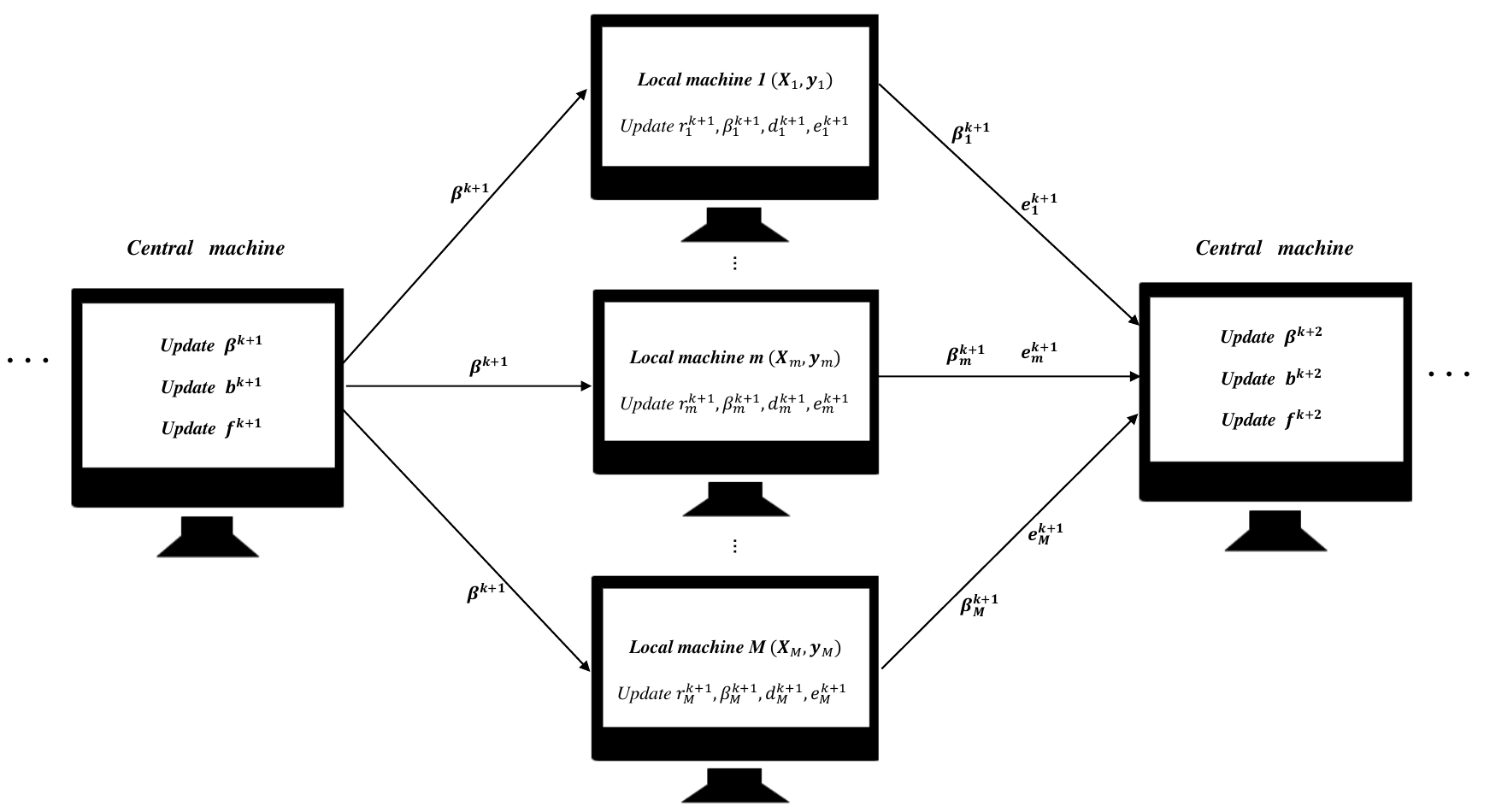}
\caption{\footnotesize{Schematic diagram of the implementation of parallel ADMM algorithm.}}\label{Fig1}
\end{figure}
\begin{algorithm}\small
\caption{\small{Parallel ADMM for solving the combined  regularization regression in  (\ref{glasso})}}
\label{alg1}
\begin{algorithmic}
\STATE {\textbf{Input:} $\bullet$ Central machine: $\mu, M, \lambda_1, \lambda_2,  \bm b^0, \bm f^0$.\\
\qquad \  \ \ \ \ $\bullet$ The $m$-th local machine:  $\{\boldsymbol{X}_m,\boldsymbol{y}_m\}_{m=1}^{M}$; $\mu$, $\delta$ (Huber loss), $\tau$ (quantile loss); $\bm \beta_m^0$, $\bm d_m^0$, $\bm e_m^0$.}
\STATE {\textbf{Output:} the total number of iterations $K$,  $\boldsymbol{\beta}^K$. }
\STATE {\textbf{while} not converged \textbf{do}}
\STATE {\ \textbf{Central machine}: 1. Receive $\bm \beta_m^k$ and $\bm e_m^k$ transmitted by $M$ local machines,\\ \qquad \qquad \qquad  \qquad \quad \ 2. Update $\bm \beta^{k+1}$ and $\bm b^{k+1}$, \\
\qquad \qquad \qquad  \qquad \quad \ 3. Update $\boldsymbol f ^{k + 1}  \leftarrow   \bm f ^k - \mu (\bm G\bm \beta^{k+1} - \bm b^{k+1} )$, \\
\qquad \qquad \qquad  \qquad \quad \ 4. Send $\bm \beta^{k+1}$ to the local machines.
}
\STATE {\ \textbf{Local machines}: \ \  for $m =1 ,2, \dots, M$ (in parallel) \\
\qquad \qquad \qquad  \qquad \quad \ 1. Receive $\bm \beta^{k+1}$  transmitted by the central machine, \\
\qquad \qquad \qquad  \qquad \quad \ 2. Update $\bm r_m^{k+1}$  and $\bm \beta_m^{k+1}$, \\
\qquad \qquad \qquad  \qquad \quad \ 3. Update $\boldsymbol d_m ^{k + 1}  \leftarrow   \bm d_m ^k - \mu ( \bm X_m \bm \beta_m^{k+1} + \bm r_m^{k+1} - \bm y_m)$ and $\boldsymbol e_m ^{k + 1}  \leftarrow   \bm e_m ^k - \mu ( \bm \beta_m^{k+1} - \bm \beta^{k+1} )$, \\
\qquad \qquad \qquad  \qquad \quad \ 4. Send $\bm \beta^{k+1}_m$ and $\bm e^{k+1}_m$  to the central machine.
}
\STATE {\textbf{end while}}
\STATE {\textbf{return} solution}.
\end{algorithmic}
\end{algorithm}
\section{The specific implementation of parallel ADMM algorithm}\label{sec4}
\qquad From (\ref{primalupdate}) and  Algorithm \ref{alg1}, it can be observed that the operations on each local machine do not need to vary with changes in the regularization term. In other words, we only need to modify the operations on the central machine to solve high-dimensional regression models with different combined regularizers. Here, 
we first discuss the updates of  local machines (in parallel), and then discuss the updates of the central machine separately according to three types of combined regularization terms. Note that we only need to focus on updating the primal variables, while the updates of the dual variables, $\bm d_m$, $\bm e_m$ and $\bm f$ involve linear algebraic operations and are easy to implement. 

For the update of the subproblem $\bm r^{k+1}_m$  in (\ref{primalupdate}), it is evidently an  $\mathcal L$ proximal operator ($\text{prox}_{\frac{1}{n} \mathcal L }( \bm y_m + \bm d_m^k/\mu - \bm X_m \bm \beta_m^k)$), and its closed-form solution can be obtained from Table \ref{Tab2}.
\begin{align}\label{rm}
\bm r_m^{k+1} \leftarrow \text{prox}_{\frac{1}{n} \mathcal L }( \bm y_m + \bm d_m^k/\mu - \bm X_m \bm \beta_m^k), \ m=1,2,\dots,M.
\end{align}

For the update of the subproblem $\bm \beta^{k+1}_m$  in (\ref{primalupdate}), the minimization problem is quadratic and differentiable, allowing us to solve the subproblem by solving the following linear equations:
\begin{align}\label{betam}
\bm \beta_m \leftarrow (\bm X_m^T \bm X_m + \bm I_p)^{-1} \left[ \bm X_m^T(  \bm y_m + {\bm d_m^k}/{\mu} - \bm r_m^{k+1})  + (\bm \beta^{k+1} + {\bm e_m^k}/{\mu}) \right].
\end{align}
When $p$ is large than $n_m$, \cite{YLW}  suggested using the Woodbury matrix identity $(\bm X_m^T \bm X_m + \bm I_p)^{-1} =  \bm I_p - \bm X_m^\top ( \bm I_{n_m} + \bm X_m \bm X_m^\top)^{-1} \bm X_m$. This method is actually very practical when the size of $n_m$ is small because throughout the iteration process of the entire algorithm, the inverse only needs to be computed once. However, when both $n_m$ and $p$ are large, computing the inverse can be time-consuming, and in such cases, the conjugate gradient method  performs better in solving the equation in (\ref{betam}). The conjugate gradient method is an efficient iterative algorithm used to solve symmetric positive definite linear systems of equations, and it has advantages in handling large-scale sparse problems.

Next, we  will provide a detailed description of the update steps for the central machine for different combined regularization terms.

\subsection{Elastic-net}\label{sec41}
In this section,  we describe the updates of each subproblem in central machine for implementing  regression with  elastic-net regularizations. Then, in (\ref{mcuof}), $\bm G = \bm I_p$, $P_{\lambda_1}(|\bm \beta|) =  \lambda_1 \|\bm \beta \|_1$
and $P_{\lambda_2}(|\bm b|) = \lambda_2 \|\bm b \|_2^2$.

For the subproblem of updating $\bm \beta^{k+1}$ in (\ref{primalupdate}), by rearranging the optimization equation and omitting some constant terms that are not relevant to the optimization target variable $\bm \beta$, the following equation can be obtained,
\begin{align}\label{stbeta}
\bm \beta^{k+1} \leftarrow \arg \min_{\boldsymbol\beta} \left\{ {\lambda}_1 \left \| {{\boldsymbol{\beta}}} \right \|_1 + \frac{\mu(M+1)}{2} \left\| \bm \beta - \frac{M(\bar{\bm \beta}^k - \bar{\bm e}^k/ \mu) + (\bm b^k + \bm f^k/\mu)}{M+1} \right\|_2^2 \right\},
\end{align}
where $\bar{\bm \beta}^k= M^{-1}  \sum_{m=1}^{M} \bm \beta_m^k$ and $\bar{\bm e}^k= M^{-1}  \sum_{m=1}^{M} \bm e_m^k$. As a result, the closed-form solution of the soft-thresholding operator in  (\ref{stbeta})  is given as follows:
\small{
\begin{align}\label{enbeta}
\bm \beta^{k+1} \leftarrow \text{sign}\left(\frac{M(\bar{\bm \beta}^k - \bar{\bm e}^k/ \mu) + (\bm b^k + \bm f^k/\mu)}{M+1}\right) \odot \left( \left|\frac{M(\bar{\bm \beta}^k - \bar{\bm e}^k/ \mu) + (\bm b^k + \bm f^k/\mu)}{M+1} \right| - \frac{{\lambda}_1}{\mu(M+1)}\right)_+.
\end{align}}
Here,  $\odot$ denotes Hadamard product,   $\text{sign}(\cdot)$ represents the sign function, and $(\cdot)_+$ denotes the operation of taking the positive part element-wise. Note when the variable $\bm \beta$ needs to satisfy linear constraints ($\bm \beta \in \mathcal{C}$),  we just need to project $\bm \beta^{k+1}$ in (\ref{enbeta}) onto the linear space $\mathcal{C}$. Mathematically, the projection of vector $\bm \beta^{k+1}$ onto linear constraint space $\mathcal{C}$ can be represented as,
\begin{align}\label{proj}
\bm \beta_{\mathcal{C}}^{k+1}  = \mathrm{Proj}_\mathcal{C}(\bm{\beta}^{k+1}) = \mathrm{arg} \min_{\bm{\beta} \in \mathcal{C}} \| \bm{\beta} - \bm{\beta}^{k+1} \|_2.
\end{align}
Projection onto convex sets is a well-studied concept. For the applications mentioned in this paper, the projection can be analytically solved, see Section 15.2 of \cite{L2013} for several examples.
Therefore, when addressing regression problems with linear constraints, we can substitute $\bm \beta_{\mathcal{C}}^{k+1}$ for $\bm \beta^{k+1}$ in Algorithm \ref{alg1} to iterate over $\bm \beta$.

For the subproblem of updating $\bm b^{k+1}$ in (\ref{primalupdate}), the minimization problem is quadratic and differentiable, allowing us to solve the subproblem by
\begin{align}
\bm b^{k+1} = \frac{\mu(\bm \beta^{k+1} - \bm f^k/\mu)}{2\lambda_2 + \mu}.
\end{align}

\subsection{Sparse group lasso}\label{sec42}
In this section,  we describe the updates of each subproblem in central machine for implementing  regression with  sparse group lasso. Then, in (\ref{mcuof}), $\bm G = \bm I_p$, $P_{\lambda_1}(|\bm \beta|) =  \lambda_1 \|\bm \beta \|_1$
and $P_{\lambda_2}(|\bm b|) = \lambda_2 \|\bm b \|_{2,1}$.

For the subproblem of updating $\bm \beta^{k+1}$ in (\ref{primalupdate}),  it follows the same procedure as described in Section \ref{sec41}, where it is updated as
\small{
\begin{align}\label{grbeta}
\bm \beta^{k+1} \leftarrow \text{sign}\left(\frac{M(\bar{\bm \beta}^k - \bar{\bm e}^k/ \mu) + (\bm b^k + \bm f^k/\mu)}{M+1}\right) \odot \left( \left|\frac{M(\bar{\bm \beta}^k - \bar{\bm e}^k/ \mu) + (\bm b^k + \bm f^k/\mu)}{M+1} \right| - \frac{{\lambda}_1}{\mu(M+1)}\right)_+,
\end{align}}
where $\bar{\bm \beta}^k= M^{-1}  \sum_{m=1}^{M} \bm \beta_m^k$ and $\bar{\bm e}^k= M^{-1}  \sum_{m=1}^{M} \bm e_m^k$. For constrained regression problems, the same strategy as (\ref{proj}) can be used.

For the update of the subproblem $\bm b^{k+1}$, it is evidently an $\ell_{2,1}$-norm proximal operator ($\text{prox}_{\lambda_2 \| \cdot \|_{2,1}}(\bm{\beta}^{k+1}- {\bm{f}^k}/{\mu} )$), and its closed-form solution is as follows:
\begin{align}\label{gb}
\bm b_g^{k+1} \leftarrow  \frac{ \bm{\beta}^{k+1}_g- {\bm{f}^k}_g/{\mu} }{\| \bm{\beta}^{k+1}_g- {\bm{f}^k}_g/{\mu}  \|_2} \cdot (\|\bm{\beta}^{k+1}_g- {\bm{f}^k}_g/{\mu}  \|_2 - \lambda_2/\mu)_+, \ g=1,2,\dots,G.
\end{align}

\subsection{Sparse fused lasso}
In this section,  we describe the updates of each subproblem in central machine for implementing  regression with  sparse group lasso. Then, in (\ref{mcuof}), $\bm G = \bm F$, $P_{\lambda_1}(|\bm \beta|) =  \lambda_1 \|\bm \beta \|_1$
and $P_{\lambda_2}(|\bm b|) = \lambda_2 \|\bm b \|_{1}$.
Due to the presence of the $\bm F$ matrix, the update of $\bm \beta^{k+1}$ in the central machine differs from that in Section \ref{sec41} and Section \ref{sec42}.

 For the subproblem of updating $\bm \beta^{k+1}$, by rearranging the optimization equation and omitting some constant terms that are not relevant to the optimization target variable $\bm \beta$, the following equation can be obtained,
\begin{align}\label{sgbeta}
\bm \beta^{k+1} \leftarrow \arg \min_{\boldsymbol\beta} \left\{ {\lambda}_1 \left \| {{\boldsymbol{\beta}}} \right \|_1 + \frac{\mu}{2} \bm \beta^\top (M \bm I_p + \bm F^\top \bm F) \bm \beta - \mu \bm \beta^\top\left[ M(\bar{\bm \beta}^k - \bar{\bm e}^k/\mu)  + \bm F^\top (\bm b^k + \bm f^k/\mu  ) \right]  \right\},
\end{align}
where $\bar{\bm \beta}^k= M^{-1}  \sum_{m=1}^{M} \bm \beta_m^k$ and $\bar{\bm e}^k= M^{-1}  \sum_{m=1}^{M} \bm e_m^k$. Clearly, due to the non-identity matrix before the quadratic term of $\beta$ and $\left \| {{\boldsymbol{\beta}}} \right \|_1$, the equation (\ref{sgbeta}) does not have a closed-form solution. Although we can solve  the equation (\ref{sgbeta})  using numerical methods such as the coordinate descent method, it would  significantly increase the computational burden.  Here, we suggest using a linearization method to approximate this optimization problem and obtain a closed-form solution for $\bm \beta^{k+1}$. Specifically, we propose to add a proximal term to the objective function in Equation (\ref{sgbeta}) and update $\bm \beta^{k+1}$ using the following formula,
\small{
\begin{align}\label{lbeta}
\bm \beta^{k+1} \leftarrow \arg \min_{\boldsymbol\beta} \left\{ {\lambda}_1 \left \| {{\boldsymbol{\beta}}} \right \|_1 + \frac{\mu}{2} \bm \beta^\top (M \bm I_p + \bm F^\top \bm F) \bm \beta - \mu \bm \beta^\top\left[ M(\bar{\bm \beta}^k - {\bar{\bm e}^k}/{\mu})  + \bm F^\top (\bm b^k + {\bm f^k}/{\mu}  ) \right]  + \frac{1}{2} \| \bm \beta - \bm \beta^k   \|_{S}^2 \right\},
\end{align}}
where $\bm S = \eta \bm I_p - (M \bm I_p + \bm F^\top \bm F)$ is a positive semidefinite matrix. To ensure that $\bm S$ is a positive definite matrix, $\eta$ must be greater than or equal to the largest eigenvalue of $ (M \bm I_p + \bm F^\top \bm F)$. It is worth noting that the largest eigenvalue of $ (M \bm I_p + \bm F^\top \bm F)$ is equal to $M+4$. Then, we can set $\eta = M+4$.  After rearranging the terms in (\ref{lbeta}) and omitting  constant terms, the update of the  $\bm \beta^{k+1}$ can be written as
\begin{align}\label{lbeta2}
\bm \beta^{k+1} \leftarrow \arg \min_{\boldsymbol\beta} \left\{ {\lambda}_1 \left \| {{\boldsymbol{\beta}}} \right \|_1 + \frac{\eta}{2} \left\|\bm \beta -  \bm \beta^k + \frac{\mu}{\eta} \left[ (M \bm I_p + \bm F^\top \bm F)\bm \beta^k - M(\bar{\bm \beta}^k - {\bar{\bm e}^k}/{\mu})  - \bm F^\top (\bm b^k + {\bm f^k}/{\mu}  )    \right]  \right\|_2^2      \right\}.
\end{align}
As a result, the closed-form solution of the soft-thresholding operator in  (\ref{lbeta2})  is given as follows:
\small{
\begin{align}\label{fubeta}
\bm \beta^{k+1} \leftarrow \text{sign}\left(  \bm \beta^k - {\mu}/{\eta} \left[ (M \bm I_p + \bm F^\top \bm F)\bm \beta^k - M(\bar{\bm \beta}^k - {\bar{\bm e}^k}/{\mu})  - \bm F^\top (\bm b^k + {\bm f^k}/{\mu}  )    \right]       \right) \odot  \\\notag
\left( \left| \bm \beta^k - {\mu}/{\eta} \left[ (M \bm I_p + \bm F^\top \bm F)\bm \beta^k - M(\bar{\bm \beta}^k - {\bar{\bm e}^k}/{\mu})  - \bm F^\top (\bm b^k + {\bm f^k}/{\mu}  )    \right]    \right| - {{\lambda}_1}/{\eta}\right)_+.
\end{align}}
For constrained regression problems, we can employ the same strategy as (\ref{proj}).

\qquad For the update of the subproblem $\bm b^{k+1}$, it is also a soft-thresholding operator ($\text{prox}_{\lambda_1 \| \cdot \|_{1}}(\bm F \bm{\beta}^{k+1}- {\bm{f}^k}/{\mu} )$), and its closed-form solution is as follows:
\begin{align}\label{fb}
\bm b^{k+1} \leftarrow  \text{sign}\left( \bm F \bm{\beta}^{k+1}- {\bm{f}^k}/{\mu}    \right) \odot  \left(  \left| \bm F \bm{\beta}^{k+1}- {\bm{f}^k}/{\mu}   \right| -    \lambda_2/\mu  \right)_+.
\end{align}

\subsection{Nonconvex extension}
Compared to the combination of convex regularizers, the main difference of the combination of nonconvex regularizers lies in $P_{\lambda_1}(|\bm \beta|)$. In this paper, we mainly consider two popular non-convex regularizers, scad and mcp.  According to the suggestion in \cite{ZL}, we can use  a   unified  method  named local  linear  approximation  to  handle  the  nonconvex penalty, that is
\begin{align}\label{one}
P_{\lambda_1}(|\bm \beta|) \approx  P_{\lambda_1}(|\bm \beta^l|) + \nabla P_{\lambda_1}(|\bm \beta^l|)^T (|\bm \beta| - |\bm \beta^l|), \ \text{for} \ \bm \beta  \approx \bm \beta^l,
\end{align}
where  $\bm \beta^l$ is the solution from the last iteration, and $\nabla P_{\lambda_1}(|\bm \beta^l|) = (\nabla P_\lambda(|\bm \beta_1^l|), \nabla P_\lambda(|\bm \beta_2^l|), \dots, \nabla P_\lambda(|\bm \beta_p^l|)) ^\top$.

$\bullet$ For scad, we have \begin{align}\label{scad}
\nabla P_\lambda(|\bm \beta_j|) = \begin{cases}
\lambda_1, & \text{{if }} |\bm \beta_j| \leq \lambda_1, \\
\frac{a\lambda_1 -  |\bm \beta_j|}{a-1}, & \text{{if }} \lambda_1 < |\bm \beta_j|  < a  \lambda_1, \\
0 , & \text{{if }} |\bm \beta_j| \ge a  \lambda_1. \\
\end{cases}
\end{align}

$\bullet$ For mcp, we have \begin{align}\label{mcp}
\nabla P_\lambda(|\bm \beta_j|)  =
\begin{cases}
\lambda_1  - \frac{|\bm \beta_j|}{a}, & \text{{if }}  |\bm \beta_j|  \le a \lambda_1\\
0, & \text{{if }}  |\bm \beta_j|  > a \lambda_1 \\
\end{cases}
\end{align}
For nonconvex combined  regularization, it can be written as
\begin{align}\label{41}
\mathop {\arg \min }\limits_{\boldsymbol{\beta}} \left\{ \mathcal L{{{\left(\boldsymbol{y}-\boldsymbol X \boldsymbol\beta  \right)}}}  + P_{\lambda_1}(|\bm \beta|)  +  P_{\lambda_2}(\bm G \bm \beta) \right \}. 
\end{align}
where  $P_{\lambda_1}(|\bm \beta|) = \text{scad}(\bm \beta)  \ \text{or}  \  \text{mcp}(\bm \beta) $. By substituting equation (\ref{one}) into equation (\ref{41}), we can obtain the following optimized form in a weighted manner,
\begin{align}\label{we}
\mathop {\arg \min }\limits_{\boldsymbol{\beta}^{l+1}} \left\{ \mathcal L{{{\left(\boldsymbol{y}-\boldsymbol X \boldsymbol\beta  \right)}}}  + \sum_{j=1}^{p}\nabla P_{\lambda_1}(|\beta_j^{l}|)|\beta_j|  +  P_{\lambda_2}(\bm G \bm \beta) \right \}.
\end{align}
Note that we only need to make a small change to solve this weighted combined optimization form using Algorithm \ref{alg1}. This change only requires replacing $\lambda_1$ in (\ref{enbeta}), (\ref{grbeta}), and (\ref{fubeta}) with $\nabla P_{\lambda_1}(|\bm \beta^{l}|)$. 

To solve nonconvex regression using the LLA algorithm, it is necessary to find a good initial value. As suggested by Gu2018, we can use the solution of $P_{\lambda_1}(|\bm \beta|) = \lambda_1 \|\bm \beta \|_1 $ in  (\ref{41}) as the initial value. Then, we get the solution of (\ref{41}) by  solving a sequence of weighted $\ell_1$-penalized
quantile regression.  The LLA algorithm solves the nonconvex  regression via the following iterations: 
\begin{algorithm}\small
\caption{\small{The local linear approximation (LLA) algorithm for combined nonconvex  regression}}
\label{alg2}
\begin{algorithmic}
\STATE {1. Initialize $\bm \beta$ with  $\bm \beta^1$, where $\bm \beta^1$ is obtained by Algorithm \ref{alg1}.}
\STATE {2. For $l=1,2,\dots,L$, continue iterating the LLA iteration until convergence is achieved. }
\STATE {\quad \ 2.1. Compute the weights  $\nabla P_{\lambda_1}(|\bm \beta^{l}|)=(\nabla P_\lambda(|\bm \beta_1^{l}|), \nabla P_\lambda(|\bm \beta_2^{l}|), \dots, \nabla P_\lambda(|\bm \beta_p^{l}|)) ^\top$ by (\ref{scad}) or  (\ref{mcp})}
\STATE {\quad \ 2.2. Solve the weighted problem in (\ref{we}) by modified Algorithm \ref{alg1} with $\lambda_1$ in (\ref{enbeta}), (\ref{grbeta}), and (\ref{fubeta}) replacing with $\nabla P_{\lambda_1}(|\bm \beta^{l-1}|)$. Let this solution be denoted as $\bm \beta^{l+1}$.}
\end{algorithmic}
\end{algorithm}
It can be seen that the nonconvex combined  regularization regression  is solved through multiple iterations of convex combined  regularization regression.  Moreover,  \cite{FLY2014} demonstrated that, theoretically, only two or three iterations are sufficient to obtain a solution with high statistical accuracy.

\section{Convergence Analysis and Algorithm Complexity}\label{sec5}
\qquad  The traditional ADMM algorithm is commonly used to solve problems that have two-block separable objective functions and are connected by equality constraints. Two-block separable function refers to having two primal variables that need to be optimized alternately. For a comprehensive overview of the ADMM algorithm, one may refer to the cited reference \cite{SNEBJ} and the related literature mentioned therein.
It is evident that when there are $M$ local machines, both Algorithm \ref{alg1}  involves a total of $2M+2$ primal variables, namely $\left\{\bm \beta, \bm b, \{\bm r_m, \bm \beta_m  \}_{m=1}^{M}  \right\}$. Thus, the parallel ADMM algorithms can be categorized as multi-block ADMM algorithms. However, a recent study by \cite{CBYY} has demonstrated that directly extending the ADMM algorithm for convex optimization with three or more separable blocks may not guarantee convergence, and they even provided an example of divergence. Fortunately, they also established a sufficient condition to ensure convergence for the direct extension of multi-block ADMM algorithm. Their findings suggest that convergence of multi-block ADMM algorithm is guaranteed as long as certain constrained coefficient matrices are orthogonal. This allows the iterations of all primal subproblems to be sequentially divided into two independent parts, ensuring convergence. In the following, we will demonstrate that the two  algorithms can be transformed into traditional  two-block ADMM algorithms.

To simplify notation, we here discuss the convergence of the parallel algorithms in the case of two local machines. The case of multiple local machines ($M>2$) is similar to the case of two local machines.  In this way, the collected data will be divided into two parts,
\begin{align}
\bm X = \begin{bmatrix}
\bm X_{1} \\
\bm X_{2} 
\end{bmatrix}
\
\text{and}
\
\bm y = \begin{bmatrix}
\bm y_{1} \\
\bm y_{2} 
\end{bmatrix}.
\end{align}

\subsection{The Convergence of Algorithm \ref{alg1}}
Considering the presence of  two local machines ($M=2$), in order to accommodate the parallel structure, we need to introduce the terms $\{ \bm r_m = \bm y_m - \bm X_m \bm \beta_m \}_{m=1}^{2}$ and $\{ \bm \beta_m = \bm \beta \}_{m=1}^2$.
Revisiting the constrained optimization form in (\ref{mcuof}) and taking into account the linear constants $\bm \beta \in \mathcal{C}$, we obtain the following optimization problem:
\begin{align}\label{proofg1}
&\mathop {\arg \min }\limits_{\boldsymbol{\beta}, \bm b, \bm r_1, \bm r_2, \bm \beta_1, \bm \beta_2 } \left\{ \sum\limits_{m=1}^{2}\mathcal L{{{\left( \bm r_m  \right)}}}  +  P_{\lambda_1}(|\bm \beta|)  +  P_{\lambda_2}(|\bm b|) + I_{\mathcal{C}}(\bm \beta) \right \},  \notag\\ 
& \textbf{s.t.} \ \bm X_m \bm \beta_m + \bm r_m = \bm y_m, \ \bm \beta_m = \bm \beta, \  \bm G \bm \beta = \bm b, \ m = 1,2,
\end{align}
where 
\[
I_{\mathcal{C}}(\bm{\beta}) = \begin{cases}
0, & \text{if } \bm{\beta} \in \mathcal{C}, \\
\infty, & \text{if } \bm{\beta} \notin \mathcal{C}.
\end{cases}
\]

Let $\theta_1(\boldsymbol\beta)=  P_{\lambda_1}(|\bm \beta|) + I_{\mathcal{C}}(\bm{\beta})$,  $\theta_2(\boldsymbol r_1)= \mathcal L{{{\left( \bm r_1  \right)}}} $,   $\theta_3(\boldsymbol r_2)= \mathcal L{{{\left( \bm r_2  \right)}}} $, $\theta_4(\boldsymbol \beta_1)= 0 $, $\theta_5(\boldsymbol \beta_2)=0$, $\theta_6(\boldsymbol b)= P_{\lambda_2}(|\bm b|)$, where $\mathcal L$ only needs to be a convex loss function, including the least squares loss, quantile loss, square root loss, and Huber loss considered in this paper. Note that both  $P_{\lambda_1}(|\bm \beta|)$, $P_{\lambda_2}(|\bm b|)$ and $I_{\mathcal{C}}(\bm \beta)$ are convex, even if $P_{\lambda_1}(|\bm \beta|)$ is a weighted version for nonconvex regularization. Clearly, the optimization equation (\ref{proofg1}) involves six optimization variables (primal variables in the ADMM algorithm). Therefore, in the case of two local machines, the proposed parallel ADMM algorithm can be decomposed into six-block  ADMM algorithm.

Let
\begin{align}\label{IG}
\bm I_{\bm G}  =
\begin{cases}
\bm I_p, & \text{{if }}  \bm G = \bm I_p,\\
\bm I_{p-1}, & \text{{if }}  \bm G = \bm F, \\
\end{cases}
\end{align}
and set  
$$
{\boldsymbol{A}_1} = \begin{bmatrix}
\boldsymbol G \\
-\boldsymbol{I}_p \\
-\boldsymbol{I}_p \\
\bm 0 \\
\bm 0
\end{bmatrix},
\quad
{\boldsymbol{A}_2} = \begin{bmatrix}
\bm 0 \\
\bm 0 \\
\bm 0 \\
\bm I_n \\
\bm 0
\end{bmatrix},
\quad
{\boldsymbol{A}_3} = \begin{bmatrix}
\bm 0 \\
\bm 0 \\
\bm 0 \\
\bm 0 \\
\bm I_n
\end{bmatrix},
$$
$$
\quad
{\boldsymbol{B}_1} = \begin{bmatrix}
\bm 0 \\
\boldsymbol{I}_p \\
\bm 0 \\
\bm X_1 \\
\bm 0
\end{bmatrix},
\quad
{\boldsymbol{B}_2} = \begin{bmatrix}
\bm 0 \\
\bm 0 \\
\boldsymbol{I}_p \\
\bm 0 \\
\bm X_2
\end{bmatrix},
\quad
{\boldsymbol{B}_3} = \begin{bmatrix}
-\boldsymbol{I}_{\bm G} \\
\bm 0 \\
\boldsymbol{0} \\
\bm 0 \\
\bm 0
\end{bmatrix}
\quad
\text{and} \quad  \boldsymbol{c} = \begin{bmatrix}
\boldsymbol{0} \\
\boldsymbol{0} \\
\boldsymbol{0} \\
\boldsymbol{y}_1 \\
\boldsymbol{y}_2
\end{bmatrix},
$$
and rearrange the equations in (\ref{proofg1}), thus the  constrained optimization form in (\ref{proofg1}) can be rewritten as
\begin{align}\label{ggcons}
& \mathop  {\arg \min }\limits_{\boldsymbol{\beta}, \bm r_1, \bm r_2, \bm \beta_1, \bm \beta_2, \bm b} \big\{ \theta_1(\bm \beta)  + \theta_2(\bm r_1) + \theta_3(\bm r_2) + \theta_4(\bm \beta_1) + \theta_5(\bm \beta_2) + \theta_6(\bm b)  \big \},  \notag\\ 
& \textbf{s.t.} \  \bm A_1 \bm \beta + \bm A_2 \bm r_1 +  \bm A_3 \bm r_2 + \bm B_1 \bm \beta_1 + \bm B_2 \bm \beta_2 + \bm B_3 \bm b = \bm c.
\end{align}
Note that  in (\ref{primalupdate}), $\bm b^{k+1}$  is updated as the second variable among all the primal variables, while in equation (\ref{ggcons}) it is listed last. In fact, the update of $\bm b$ depends solely on $\bm \beta$ and is independent of $\bm r_1$,  $\bm r_2$, $\bm \beta_1$ and $\bm \beta_2$. Therefore, in all the updates of the primal variables, $\bm b^{k+1}$  only needs to be placed after $\bm \beta^{k+1}$; placing it in other positions will not affect the solution of  Algorithm \ref{alg1}. This is also reflected in the relationship between the matrices, clearly, $\bm B_3$ is orthogonal to $\bm A_2, \bm A_3, \bm B_1$ and $\bm B_2$.

Taking into account the linearized term of $\bm \beta$, the augmented Lagrangian form of  (\ref{ggcons})  is
\small{
\begin{align}\label{4.2}
\theta_1(\bm \beta) & + \theta_2(\bm r_1) + \theta_3(\bm r_2) + \theta_4(\bm \beta_1) + \theta_5(\bm \beta_2) + \theta_6(\bm b) - \bm z^\top(\bm A_1 \bm \beta + \bm A_2 \bm r_1 +  \bm A_3 \bm r_2 + \bm B_1 \bm \beta_1 + \bm B_2 \bm \beta_2 + \bm B_3 \bm b - \bm c) \notag \\
 &+ \frac{\mu}{2}\| \bm A_1 \bm \beta + \bm A_2 \bm r_1 +  \bm A_3 \bm r_2 + \bm B_1 \bm \beta_1 + \bm B_2 \bm \beta_2 + \bm B_3 \bm b - \bm c\|_2^2 + \frac{1}{2}\|\bm \beta - \bm \beta^k\|_{ \bm S_G}^{2},
\end{align}}
where $\bm z = (\bm f^\top, \bm d_1^\top, \bm d_2^\top, \bm e_1^\top, \bm e_2^\top)^\top$, and
\begin{align}\label{IG}
\bm S_{\bm G}  =
\begin{cases}
\bm 0, & \text{{if }}  \bm G = \bm I_p,\\
\bm S, & \text{{if }}  \bm G = \bm F, \\
\end{cases}
\end{align}
where $\bm S = \eta \bm I_p - (M \bm I_p + \bm F^\top \bm F)$.
According to the first-order optimality conditions of the minimization problems in (\ref{4.2}), we have 
\begin{footnotesize}
\begin{equation}\label{6pri}
\left\{
\begin{array}{l}
\theta_1(\boldsymbol\beta)-\theta_1(\boldsymbol{\beta}^{k+1})+(\boldsymbol\beta-\boldsymbol{\beta}^{k+1})^\top \left\{-\boldsymbol{A_1}^\top [ \boldsymbol{z}^k-\mu (\bm A_1 \bm \beta^{k+1}  + \bm A_2 \bm r_1^k +  \bm A_3 \bm r_2^k + \bm B_1 \bm \beta_1^k + \bm B_2 \bm \beta_2^k   \right. \\
\left.  + \bm B_3 \bm b^k - \bm c) ] + \bm S_{\bm G}(\bm \beta - \bm \beta^k) \right\}\ge 0, \\%
\theta_2(\boldsymbol  r_1)-\theta_2(\boldsymbol{r}_1^{k+1})+(\boldsymbol r_1-\boldsymbol{r}_1^{k+1})^\top \left\{-\boldsymbol{A_2}^\top [ \boldsymbol{z}^k-\mu(\bm A_1 \bm \beta^{k+1} + \bm A_2 \bm r_1^{k+1} +  \bm A_3 \bm r_2^k + \bm B_1 \bm \beta_1^k + \bm B_2 \bm \beta_2^k  \right. \\
\left. + \bm B_3 \bm b^k - \bm c) ] \right\}\ge 0,\\%
\theta_3(\boldsymbol r_2 )-\theta_3(\boldsymbol{r}_2^{k+1})+(\boldsymbol r_2-\boldsymbol{r}_2^{k+1})^\top \left\{-\boldsymbol{A_3}^\top [ \boldsymbol{z}^k-\mu(\bm A_1 \bm \beta^{k+1} + \bm A_2 \bm r_1^{k+1} +  \bm A_3 \bm r_2^{k+1} + \bm B_1 \bm \beta_1^k + \bm B_2 \bm \beta_2^k  \right. \\
\left.  + \bm B_3 \bm b^k - \bm c) ] \right\}\ge 0,\\%
\theta_4(\boldsymbol \beta_1 )-\theta_4(\boldsymbol{\beta}_1^{k+1})+(\boldsymbol \beta_1-\boldsymbol{\beta}_1^{k+1})^\top \left\{-\boldsymbol{B_1}^\top [ \boldsymbol{z}^k-\mu(\bm A_1 \bm \beta^{k+1} + \bm A_2 \bm r_1^{k+1} +  \bm A_3 \bm r_2^{k+1} + \bm B_1 \bm \beta_1^{k+1} + \bm B_2 \bm \beta_2^k   \right. \\
\left. + \bm B_3 \bm b^k - \bm c) ] \right\}\ge 0, \\%
\theta_5(\boldsymbol \beta_2 )-\theta_5(\boldsymbol{\beta}_2^{k+1})+(\boldsymbol \beta_2-\boldsymbol{\beta}_2^{k+1})^\top \left\{-\boldsymbol{B_2}^\top [ \boldsymbol{z}^k-\mu(\bm A_1 \bm \beta^{k+1} + \bm A_2 \bm r_1^{k+1} +  \bm A_3 \bm r_2^{k+1} + \bm B_1 \bm \beta_1^{k+1} + \bm B_2 \bm \beta_2^{k+1}  \right. \\
\left.  + \bm B_3 \bm b^k - \bm c) ] \right\}\ge 0, \\%
\theta_6(\boldsymbol b )-\theta_6(\boldsymbol{b}^{k+1})+(\boldsymbol b-\boldsymbol{b}_2^{k+1})^\top \left\{-\boldsymbol{B_3}^\top [ \boldsymbol{z}^k-\mu(\bm A_1 \bm \beta^{k+1} + \bm A_2 \bm r_1^{k+1} +  \bm A_3 \bm r_2^{k+1} + \bm B_1 \bm \beta_1^{k+1} + \bm B_2 \bm \beta_2^{k+1}   \right. \\
\left. + \bm B_3 \bm b^{k+1} - \bm c) ] \right\}\ge 0.
\end{array}\right.
\end{equation}
\end{footnotesize}
It is easy to verify that
\begin{align*}\label{ort-con}
\boldsymbol{A_1}^\top \boldsymbol{A_2}=\boldsymbol 0, \ \boldsymbol{A_1}^\top \boldsymbol{A_3}=\boldsymbol 0 \ \text{and} \ \boldsymbol{A_2}^\top \boldsymbol{A_3}=\boldsymbol 0,  \\
\boldsymbol{B_1}^\top \boldsymbol{B_2}=\boldsymbol 0, \ \boldsymbol{B_1}^\top \boldsymbol{B_3}=\boldsymbol 0 \ \text{and} \ \boldsymbol{B_2}^\top \boldsymbol{B_3}=\boldsymbol 0.
\end{align*}
Together with (\ref{6pri}),  we have 
\begin{footnotesize}
\begin{equation}\label{6pri2}
\left\{
\begin{array}{l}
\theta_1(\boldsymbol\beta)-\theta_1(\boldsymbol{\beta}^{k+1})+(\boldsymbol\beta-\boldsymbol{\beta}^{k+1})^\top \left\{-\boldsymbol{A_1}^\top \left[ \boldsymbol{z}^k-\mu (\bm A_1 \bm \beta^{k+1}  +  \bm B_1 \bm \beta_1^k + \bm B_2 \bm \beta_2^k + \bm B_3 \bm b^k - \bm c) \right] + \bm S_{\bm G}(\bm \beta - \bm \beta^k) \right\}\ge 0, \\%
\theta_2(\boldsymbol  r_1)-\theta_2(\boldsymbol{r}_1^{k+1})+(\boldsymbol r_1-\boldsymbol{r}_1^{k+1})^\top \left\{-\boldsymbol{A_2}^\top \left[ \boldsymbol{z}^k-\mu(\bm A_2 \bm r_1^{k+1} + \bm B_1 \bm \beta_1^k + \bm B_2 \bm \beta_2^k + \bm B_3 \bm b^k - \bm c) \right] \right\}\ge 0,\\%
\theta_3(\boldsymbol r_2 )-\theta_3(\boldsymbol{r}_2^{k+1})+(\boldsymbol r_2-\boldsymbol{r}_2^{k+1})^\top \left\{-\boldsymbol{A_3}^\top \left[ \boldsymbol{z}^k-\mu(\bm A_3 \bm r_2^{k+1} + \bm B_1 \bm \beta_1^k + \bm B_2 \bm \beta_2^k + \bm B_3 \bm b^k - \bm c) \right] \right\}\ge 0,\\%
\theta_4(\boldsymbol \beta_1 )-\theta_4(\boldsymbol{\beta}_1^{k+1})+(\boldsymbol \beta_1-\boldsymbol{\beta}_1^{k+1})^\top \left\{-\boldsymbol{B_1}^\top \left[ \boldsymbol{z}^k-\mu(\bm A_1 \bm \beta^{k+1} + \bm A_2 \bm r_1^{k+1} +  \bm A_3 \bm r_2^{k+1} + \bm B_1 \bm \beta_1^{k+1} - \bm c) \right] \right\}\ge 0, \\%
\theta_5(\boldsymbol \beta_2 )-\theta_5(\boldsymbol{\beta}_2^{k+1})+(\boldsymbol \beta_2-\boldsymbol{\beta}_2^{k+1})^\top \left\{-\boldsymbol{B_2}^\top \left[ \boldsymbol{z}^k-\mu(\bm A_1 \bm \beta^{k+1} + \bm A_2 \bm r_1^{k+1} +  \bm A_3 \bm r_2^{k+1}  + \bm B_2 \bm \beta_2^{k+1}  - \bm c) \right] \right\}\ge 0, \\%
\theta_6(\boldsymbol b )-\theta_6(\boldsymbol{b}^{k+1})+(\boldsymbol b-\boldsymbol{b}_2^{k+1})^\top \left\{-\boldsymbol{B_3}^\top \left[ \boldsymbol{z}^k-\mu(\bm A_1 \bm \beta^{k+1} + \bm A_2 \bm r_1^{k+1} +  \bm A_3 \bm r_2^{k+1}  + \bm B_3 \bm b^{k+1} - \bm c) \right] \right\}\ge 0,
\end{array}\right.
\end{equation}
\end{footnotesize}
which is also the first-order optimality condition of the scheme
\begin{small}
\begin{equation}\label{4priupdate}
\left\{
\begin{array}{l}
(\boldsymbol{\beta}^{k+1}, \bm r_1^{k+1}, \bm r_2^{k+1})=\mathop{\arg \min }\limits_{\boldsymbol \beta, \bm r_1, \bm r_2} \left\{ \theta_1(\boldsymbol{\beta}) + \theta_2(\bm r_1) + \theta_3(\bm r_2) - (\boldsymbol{z}^{k})^\top (\boldsymbol{A_1}\boldsymbol{\beta} + \bm A_2 \bm r_1 + \bm A_3 \bm r_2) \right. \\
\left. + \frac{\mu}{2}\| \bm A_1 \bm \beta + \bm A_2 \bm r_1 +  \bm A_3 \bm r_2 + \bm B_1 \bm \beta_1^k + \bm B_2 \bm \beta_2^k + \bm B_3 \bm b^k - \bm c\|_2^2 + \frac{1}{2}\|\bm \beta - \bm \beta^k\|_{ \bm S_{\bm G}}^{2} \right\}, \\%
(\boldsymbol{\beta}_1^{k+1}, \boldsymbol{\beta}_2^{k+1}, \boldsymbol{b}^{k+1})= \mathop{\arg \min }\limits_{\boldsymbol{\beta}_1,\boldsymbol{\beta}_2,\boldsymbol{b}} \left\{\theta_4(\boldsymbol \beta_1)+\theta_5(\boldsymbol \beta_2)+\theta_6(\boldsymbol b)- (\boldsymbol{z}^{k})^\top(\boldsymbol{A_4}\boldsymbol{\beta}_1+\boldsymbol{A_5}\boldsymbol{\beta}_2+\boldsymbol{A_6}\boldsymbol{b}) \right. \\
\left.+ \frac{\mu}{2}\| \bm A_1 \bm \beta^{k+1} + \bm A_2 \bm r_1^{k+1} +  \bm A_3 \bm r_2^{k+1} + \bm B_1 \bm \beta_1 + \bm B_2 \bm \beta_2 + \bm B_3 \bm b - \bm c\|_2^2 \right\}.
\end{array}\right.
\end{equation}
\end{small}
Then, we can consider $(\bm\beta^\top, \bm r_1^\top, \bm r_2^\top)^\top$ as a variable $\bm v_1$, $\theta_1(\bm\beta) + \theta_2(\bm r_1) + \theta_3(\bm r_2)$ as a function ${\bm\theta}_1(\bm v_1)$, $[\bm A_1, \bm A_2, \bm A_3]$ as a matrix $\bm A$; and $(\bm \beta_1^\top, \bm \beta_2^\top, \bm b^\top)^\top$ as the other variable $\bm v_2$, $\theta_4(\bm \beta_1) + \theta_5(\bm \beta_2) + \theta_6(\bm b)$ as the other function ${\bm\theta}_2(\bm v_2)$, $[\bm B_1, \bm B_2, \bm B_3]$ as the other matrix $\bm B$. Thus,  the update of the primal variables in (\ref{4priupdate}) can be rewritten as
\begin{small}
\begin{equation}\label{2-block}
\left\{
\begin{array}{l}
\bm v_1^{k+1} = \mathop{\arg \min }\limits_{\bm v_1} \left\{{\bm\theta}_1(\bm v_1) - (\bm z^k)^\top \bm A \bm v_1 + \frac{\mu}{2} \|\bm A \bm v_1 + \bm B \bm v_2^k - \bm c  \|_2^2 + \frac{1}{2}\|\bm \beta - \bm \beta^k\|_{ \bm S_{\bm G}}^{2}  \right\}, \\
\bm v_2^{k+1} = \mathop{\arg \min }\limits_{\bm v_2} \left\{{\bm\theta}_2(\bm v_2) - (\bm z^k)^\top \bm B \bm v_2 + \frac{\mu}{2} \|\bm A \bm v_1^{k+1} + \bm B \bm v_2  - \bm c \|_2^2  \right\}.
\end{array}\right.
\end{equation}
\end{small}
Obviously, if we incorporate the update of the dual variable $\bm z^{k+1} = \bm z^k - \mu ( \bm A \bm v_1^{k+1} + \bm B \bm v_2^{k+1}-  \bm c) $ into (\ref{2-block}), it becomes a two-block ADMM iteration process. Although  the iteration in (\ref{2-block}) can be seen as a two-block ADMM algorithm, it only linearizes a portion of $\bm v_1$, namely $\bm \beta$. 
Note that when $M>2$, we can still transform Algorithm \ref{alg1} into a partially linearized ADMM algorithm by defining $\bm v_1 = (\bm{\beta}^\top, \bm{r}_1^\top, \bm{r}_2^\top, \dots, \bm{r}_M^\top)^\top$ and $\bm v_2 = (\bm{\beta}_1^\top, \bm{\beta}_2^\top, \dots, \bm{\beta}_M^\top, \bm{b}^\top)^\top$, and making some corresponding changes to $\bm\theta_1(\bm v_1)$, $\bm\theta_2(\bm v_2)$, $\bm A$ and $\bm B$. We include the mathematical derivation regarding this result in Appendix \ref{B}.
 Therefore, the existing convergence results in \cite{BY,BY2} and \cite{Y2020}  cannot be directly applied to Algorithm \ref{alg1}.

Nevertheless, we can also follow the similar framework in \cite{BY,BY2} and \cite{Y2020} for convergence analysis. The convergence property and  convergence rate of Algorithm \ref{alg1} are shown in the following theorem.  We  provide a detailed proof in the Appendix  \ref{B}.  
\begin{thm}\label{TH2}
Let $\tilde{\bm w}^k =  \left(\boldsymbol{\beta}^k, \{\boldsymbol{r}_m^k\}_{m=1}^{M},  \{\boldsymbol{\beta}_m^k\}_{m=1}^{M}, \boldsymbol{b}^k, \{\boldsymbol{d}_m^k\}_{m=1}^{M}, \{\bm{e}_m^k \}_{m=1}^{M},\boldsymbol{f}^k\right)$ is generated by Algorithm \ref{alg1} with an initial feasible solution $\tilde{\bm w}^0$. The sequence $\tilde{\bm v}^k =  \left(\bm \beta^k, \{ \boldsymbol{\beta}_m^k\}_{m=1}^{M},  \bm b^k,
\{\boldsymbol{d}_m^k\}_{m=1}^{M}, \{\bm{e}_m^k \}_{m=1}^{M},\boldsymbol{f}^k \right)$ converges to 
$\tilde{\boldsymbol{v}}^{*}$, where  $\tilde{\boldsymbol{v}}^{*}$ $=  \left(\bm \beta^*, \{ \boldsymbol{\beta}_m^*\}_{m=1}^{M}, \bm b^*, \\
\{\boldsymbol{d}_m^*\}_{m=1}^{M}, \{\bm{e}_m^* \}_{m=1}^{M},\boldsymbol{f}^* \right)$ is an optimal solution point of (\ref{proofg1}). The $O(1/k)$ convergence rate in a non-ergodic sense can also be obtained, i.e.,
\begin{equation}\label{th1e}
\| \tilde{\boldsymbol{v}}^{k}-\tilde{\boldsymbol{v}}^{k+1} \|_{\boldsymbol{H}}^{2} \le \small{\frac{1}{k+1}}\| \tilde{\boldsymbol{v}}^{0}-\tilde{\boldsymbol{v}}^{*} \|_{\boldsymbol{H}}^{2},
\end{equation}
where  $\boldsymbol{H} = \begin{bmatrix}
\bm S_{\bm G}   & \bm 0 & \bm 0 \\
\bm 0 &\mu \bm B^\top \bm B & \bm 0 \\
\bm 0 &\bm 0 & \frac{1}{\mu} \bm I_{n_{\bm G} +(M+1)p}
\end{bmatrix} $ is a symmetric and positive semidefinite matrix, and $n_{\bm G}= n$ if $\bm G = \bm I_p$, and otherwise $n_{\bm G}= n-1$. 
\end{thm}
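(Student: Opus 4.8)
The plan is to reduce Algorithm \ref{alg1} to a partially linearized two-block ADMM and then invoke a variational-inequality (VI) argument in the spirit of \cite{BY,BY2,Y2020}. First I would make precise the reduction already sketched before the theorem: bundle the primal variables into $\bm v_1=(\bm\beta^\top,\{\bm r_m^\top\}_{m=1}^M)^\top$ with $\bm\theta_1(\bm v_1)=\theta_1(\bm\beta)+\sum_{m}\mathcal L(\bm r_m)$ and $\bm A=[\bm A_1,\{\bm A_{r_m}\}]$, and $\bm v_2=(\{\bm\beta_m^\top\}_{m=1}^M,\bm b^\top)^\top$ with $\bm\theta_2(\bm v_2)=\theta_6(\bm b)$ and $\bm B=[\{\bm B_{\beta_m}\},\bm B_3]$, exactly as in the displayed reduction (\ref{2-block}) and Appendix \ref{B}. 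The key structural facts to record are the block-orthogonality relations $\bm A_i^\top\bm A_j=\bm 0$ and $\bm B_i^\top\bm B_j=\bm 0$ for $i\neq j$, which make the joint minimization over $\bm v_1$ (resp. $\bm v_2$) separate coordinate-wise into the individual subproblems of (\ref{primalupdate}); this is what legitimizes calling the parallel scheme a genuine two-block ADMM with only the $\bm\beta$-block linearized via the proximal term $\tfrac12\|\bm\beta-\bm\beta^k\|_{\bm S_{\bm G}}^2$.

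Next I would write the first-order optimality conditions of the two subproblems in (\ref{2-block}) together with the dual update $\bm z^{k+1}=\bm z^k-\mu(\bm A\bm v_1^{k+1}+\bm B\bm v_2^{k+1}-\bm c)$ as a single perturbed variational inequality: for all $\bm w=(\bm v_1,\bm v_2,\bm z)$ in the feasible set,
\[
\bm\theta(\bm v)-\bm\theta(\bm v^{k+1})+(\bm w-\bm w^{k+1})^\top F(\bm w^{k+1})\;\ge\;(\bm w-\bm w^{k+1})^\top \bm Q(\bm w^k-\bm w^{k+1}),
\]
where $\bm\theta=\bm\theta_1+\bm\theta_2$, $F$ is the (monotone, affine) VI map induced by the coupling $-\bm z^\top(\bm A\bm v_1+\bm B\bm v_2-\bm c)$, and $\bm Q$ is a suitable block matrix built from $\bm S_{\bm G}$, $\mu\bm B^\top\bm B$ and $\tfrac1\mu\bm I$. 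Then I would symmetrize: show $\bm Q+\bm Q^\top-\bm G^{\!*}\succeq \bm H$ (or directly identify the ``$\bm H$-norm'' contraction matrix), where $\bm H$ is precisely the block-diagonal matrix $\mathrm{diag}(\bm S_{\bm G},\,\mu\bm B^\top\bm B,\,\tfrac1\mu\bm I_{n_{\bm G}+(M+1)p})$ displayed in the statement, and verify it is symmetric positive semidefinite — here $\bm S_{\bm G}\succeq\bm 0$ holds because $\eta=M+4$ dominates the largest eigenvalue $M+4$ of $M\bm I_p+\bm F^\top\bm F$, and $\mu\bm B^\top\bm B\succeq\bm 0$ trivially. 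From the VI plus monotonicity of $F$ one gets the standard two estimates: (i) $\|\tilde{\bm v}^{k+1}-\tilde{\bm v}^*\|_{\bm H}^2\le\|\tilde{\bm v}^k-\tilde{\bm v}^*\|_{\bm H}^2-\|\tilde{\bm v}^k-\tilde{\bm v}^{k+1}\|_{\bm H}^2$, giving Fejér monotonicity, boundedness, and $\|\tilde{\bm v}^k-\tilde{\bm v}^{k+1}\|_{\bm H}\to0$; and (ii) monotonicity of the sequence $\|\tilde{\bm v}^k-\tilde{\bm v}^{k+1}\|_{\bm H}^2$ in $k$. Combining (i) and (ii) (summing (i), so that $(k{+}1)\|\tilde{\bm v}^k-\tilde{\bm v}^{k+1}\|_{\bm H}^2\le\sum_{j=0}^{k}\|\tilde{\bm v}^j-\tilde{\bm v}^{j+1}\|_{\bm H}^2\le\|\tilde{\bm v}^0-\tilde{\bm v}^*\|_{\bm H}^2$) yields the non-ergodic $O(1/k)$ rate (\ref{th1e}). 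For the convergence of the full sequence $\tilde{\bm v}^k\to\tilde{\bm v}^*$: boundedness gives a convergent subsequence, $\|\tilde{\bm v}^k-\tilde{\bm v}^{k+1}\|_{\bm H}\to0$ plus the VI shows any cluster point solves the optimality system of (\ref{proofg1}), and then Fejér monotonicity in $\|\cdot\|_{\bm H}$ (upgraded to a genuine norm on the relevant subspace, or handled by the usual argument that two cluster points must coincide) pins down the whole sequence. Recovering the primal variables $\bm r_m$ and the original claim for $\tilde{\bm w}^k$ then follows since $\bm r_m^{k+1}$ is an explicit continuous (proximal) image of the converging quantities.

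The main obstacle I anticipate is handling the only-partially-linearized structure: the proximal term $\tfrac12\|\bm\beta-\bm\beta^k\|_{\bm S_{\bm G}}^2$ is attached to a strict sub-block of $\bm v_1$, so $\bm S_{\bm G}$ is singular (indeed zero when $\bm G=\bm I_p$) and $\bm H$ is only positive \emph{semi}definite, not definite. Consequently $\|\cdot\|_{\bm H}$ is a seminorm and the contraction inequality controls only part of $\tilde{\bm v}^k$; one must separately argue that the $\bm H$-null components (those $\bm r_m,\bm\beta_m$ directions not seen by $\bm H$) are nonetheless forced to converge, using the explicit update maps and the orthogonality relations, and that $\bm B^\top\bm B$ together with $\bm A^\top\bm A$ spans enough directions — this is exactly where the block-orthogonality of the $\bm A_i$'s and $\bm B_i$'s is essential, and where care is needed to show cluster points are unique. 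Verifying the crossterm identities needed to pass from (\ref{6pri}) to (\ref{6pri2}) and then to the clean $\bm Q$/$\bm H$ form is routine linear algebra but must be done carefully because of the linearization term's asymmetric placement.
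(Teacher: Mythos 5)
Your proposal is correct and follows essentially the same route as the paper's Appendix~\ref{B}: the same two-block reformulation with $\bm v_1=(\bm\beta^\top,\{\bm r_m^\top\})^\top$ and $\bm v_2=(\{\bm\beta_m^\top\},\bm b^\top)^\top$, the same use of block-orthogonality of the $\bm A_i$'s and $\bm B_i$'s, the same variational-inequality manipulation leading to the contraction inequality $\|\tilde{\bm v}^{k+1}-\tilde{\bm v}^*\|_{\bm H}^2\le\|\tilde{\bm v}^k-\tilde{\bm v}^*\|_{\bm H}^2-\|\tilde{\bm v}^k-\tilde{\bm v}^{k+1}\|_{\bm H}^2$, and the same appeal to Fej\'er monotonicity plus the monotone-decrease of $\|\tilde{\bm v}^k-\tilde{\bm v}^{k+1}\|_{\bm H}^2$ (Theorem 5.1 of \cite{BY2}) for the non-ergodic $O(1/k)$ rate. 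Your explicit flagging of the semidefiniteness of $\bm H$ and of the need to recover $\bm r_m^k$ from the converging quantities matches how the paper disposes of these points in its remarks.
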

\begin{rem}
Note that  when $\bm G = \bm  I_p$, $\bm S_{\bm G} = \bm 0$. The convergent sequence is  $\left( \{ \boldsymbol{\beta}_m\}_{m=1}^{M}, \bm b, 
\{\boldsymbol{d}_m\}_{m=1}^{M}, \{\bm{e}_m \}_{m=1}^{M},\boldsymbol{f} \right).$ 
Recalling the $\boldsymbol{\beta}$-subproblem and $\bm r_m$-subproblem in Algorithm \ref{alg1}, it becomes evident that as the sequence$\left( \{ \boldsymbol{\beta}_m^k\}_{m=1}^{M}, \bm b^k, \right.$
$\left. \{\boldsymbol{d}_m^k\}_{m=1}^{M}, \{\bm{e}^k_m \}_{m=1}^{M},\boldsymbol{f}^k \right)$ converges, both $\boldsymbol{\beta}^k$ and $\bm r_m^k$ converge to their respective optimal values $\boldsymbol{\beta}^*$ and $\boldsymbol{r}_m^*$.  Note also that the convergence of our algorithm does not require specifying the form of the loss function, which opens up possibilities for further extensions of our algorithm.  
\end{rem}
\begin{rem}
Note that  when $\bm G = \bm F$, $\bm S_{\bm G} = \bm S$. The convergent sequence is  $\tilde{\bm v} = \left( \bm \beta, \{ \boldsymbol{\beta}_m\}_{m=1}^{M}, \bm b, 
\{\boldsymbol{d}_m\}_{m=1}^{M}, \{\bm{e}_m \}_{m=1}^{M},\boldsymbol{f} \right).$
Recalling the $\bm r_m$-subproblem in Algorithm \ref{alg1}, it becomes evident that as the sequence $\tilde{\bm v}^k$ converges, $\bm r_m^k$ converges to the respective optimal value $\boldsymbol{r}_m^*$.   Moreover, we have $\| \tilde{\boldsymbol{v}}^{0}-\tilde{\boldsymbol{v}}^{*} \|_{\boldsymbol{H}_2}^{2}$ of the order $O(1)$, and as a consequence, $\| \tilde{\boldsymbol{v}}^{k}-\tilde{\boldsymbol{v}}^{k+1} \|_{\boldsymbol{H}_2}^{2}=O(1/k)$.  This means that Algorithm \ref{alg1} has a linear convergence rate.
\end{rem}

\subsection{Analysis of Algorithm Complexity}
 We will first discuss the computational cost that the central machine needs to carry. Reviewing Algorithm \ref{alg1}, we know that the central machine only needs to update $\bm \beta^{k+1}$, $\bm b^{k+1}$, and $\bm f^{k+1}$  in each ADMM iteration. 
From the update of $\bm \beta^{k+1}$ in Section \ref{sec41}, we can observe that the three convex combined regularizations are transformed into the soft-thresholding operator in (\ref{lasso}). The computational cost of computing the soft-thresholding in (\ref{enbeta}), (\ref{grbeta}) and (\ref{fubeta}) is $O(p)$ flops. Note that  (\ref{enbeta}) and (\ref{grbeta}) are the same, and the overall cost of forming $\left(\frac{M(\bar{\bm \beta}^k - \bar{\bm e}^k/ \mu) + (\bm b^k + \bm f^k/\mu)}{M+1}\right)$ is $O(Mp)$ flops, where $\bar{\bm \beta}^k= M^{-1}  \sum_{m=1}^{M} \bm \beta_m^k$ and $\bar{\bm e}^k= M^{-1}  \sum_{m=1}^{M} \bm e_m^k$. Then,  the computational cost of  updating $\bm \beta^{k+1}$ for elastic-net and sparse group regression is $O(Mp + p)= O(Mp)$. The soft-thresholding operator in  (\ref{fubeta})  need to compute $\left(  \bm \beta^k - {\mu}/{\eta} \left[ (M \bm I_p + \bm F^\top \bm F)\bm \beta^k - M(\bar{\bm \beta}^k - {\bar{\bm e}^k}/{\mu})  - \bm F^\top (\bm b^k + {\bm f^k}/{\mu}  )    \right]       \right)$.  In fact, the matrix $\bm F^\top \bm F$ does not need to be calculated, and we can easily write out each element of its matrix due to its own structure. Then,  the main computational cost in generating vector  $\left(  \bm \beta^k - {\mu}/{\eta} \left[ (M \bm I_p + \bm F^\top \bm F)\bm \beta^k - M(\bar{\bm \beta}^k - {\bar{\bm e}^k}/{\mu})  - \bm F^\top (\bm b^k + {\bm f^k}/{\mu}  )    \right]       \right)$ lies in  the matrix multiplied by the vector inside it, which has a complexity of $O(p^2)$. Then,  the computational cost of  updating $\bm \beta^{k+1}$ in  (\ref{fubeta}) for sparse fused regression is $O(Mp + p^2 + p )= O(p^2)$ $(p>>M)$ flops.

For the central machine in each iteration of the ADMM algorithm, the computation required for $\bm b^{k+1}$  in Section \ref{sec41} consists of a linear algebra operation for the elastic net ($O(p)$ flops), a group soft-thresholding operator for the sparse group lasso ($O(p)$ flops), and a soft-thresholding operator for the sparse fused lasso ($O(p^2-p)= O(p^2)$ flops).  The cost of updating the dual variable $\bm f^{k+1}$ on the central machine is $O(np)$ (elastic net and sparse group lasso) or $O(p^2)$ (sparse fused lasso) flops. Therefore, the total computational cost for the central machine in the elastic net and sparse group regression is $O(Mp)+ O(p) +   O(p) = O(Mp) $ flops, while the total computational cost for the central machine in the sparse fused lasso is $O(p^2)+ O(Mp) + O(p^2)  = O(p^2)$ flops.

Next, let us discuss the computational cost of each iteration when implementing Algorithm \ref{alg1} on each local machine. Based on the discussion in Section \ref{sec3}, we can observe that the operations for each local machine remain the same for implementing the three convex combined regularization regressions. 
To update $\bm r^{k+1}$ in (\ref{rm}), we use the $\mathcal L$ proximal operator. The cost of obtaining the closed-form solution of the $\mathcal L$ proximal operator is $O(n_m)$ flops. Then, the cost of updating $\bm r^{k+1}$ depends on the matrix-vector multiplication ($O(n_mp)$) within  it. Therefore, the computational cost of updating $\bm r^{k+1}$ is $O(n_m+n_mp)= O(n_mp)$ flops.

Before each iteration of the ADMM algorithm, each local machine needs to compute the inverse of the matrix $\bm X_m^\top \bm X_m + \bm I_p$. The computational cost is $O(n_mp^2)$ flops if $n_m > p$. However, when $p>n_m$, we can use the Woodbury matrix identity to compute the inverse, which reduces the cost to $O(n_m^2p)$. Once the inverse of the matrix is obtained, the cost of obtaining $\bm \beta_m^{k+1}$ in (\ref{betam}) is $\max\left(O(n_mp),O(p^2)\right)$ flops, which depends on the matrix-vector multiplication involved in the calculation. The cost of updating the dual variable $\bm d_m^{k+1}$ and $\bm e_m^{k+1}$ on each local machines is $O(n_mp)$ and $O(p)$  flops. Thus,  the total computational cost for each local machine is 
\begin{equation}
\begin{cases}
  O(n_mp^2) +  O(n_mp) \times K  & \text{If } n_m > p,\\
O(n_m^2p) + O(p^2) \times K  & \text{otherwise},
\end{cases}
\end{equation}
where  $K$  is the total number of iterations in the ADMM algorithm.

In the practical implementation of the algorithm, it is common to have $n_m >> M$ and $p >> M$. As a consequence, the primary factor impacting the overall cost of the algorithm is not the expense of the central machine but rather the cost associated with operating the local machine, which handles the largest quantity of samples.  Let $n_{\max} = \max\{n_m \}_{m=1}^{M}$, then the total complexity of Algorithm \ref{alg1} (in parallel)  for solving the three types of convex combined regularized regressions is given by
 \begin{equation}\label{comp}
\begin{cases}
  O(n_{\max}p^2) +  O(n_{\max}p) \times K  & \text{If } n_{\max} > p,\\
O(n_{\max}^2p) + O(p^2) \times K  & \text{otherwise}.
\end{cases}
\end{equation}
The complexity of this algorithm indicates that the computational cost of handling the three different combined regularization regressions is similar. As it is well-known, the computation of sparse fused lasso is significantly more complex compared to the other two combined regularizations. This is in fact one of the main advantages of our parallel algorithm. 

Furthermore, this algorithm has a comparable complexity to the ADMM algorithm \cite{W2023} when $M=1$, and comparable complexity to the ADMM algorithms proposed in \cite{YLW} and \cite{FLY} when $M\ge2$.

\section{Numerical results}\label{sec6}
In this section, we use synthetic  data and real data cases to demonstrate the accuracy, stability, and scalability of the proposed ADMM algorithms.
\subsection{Synthetic data}\label{sec61}
In this subsection,  we synthetically generate regression datasets that are suitable for three combinations of regularization, and solve them using the ADMM  algorithms that we propose in both parallel computing framework and non-parallel computing framework. In additions, we will compare the performance of our algorithm with other state-of-the-art methods in terms of estimation accuracy and computational time.
%%%%%%
\subsubsection{Elastic-net}\label{sec611}
In the first study, we utilize a well-known simulation model proposed by \cite{FHT} and \cite{GFK} to generate data for conducting a comparison of elastic-net regression.  We simulate data with $n$ observations from the linear model
\begin{align}\label{lmodel}
\bm y = \sum_{j=1}^{p} \bm x_j \beta_j^* + \sigma \bm  \epsilon,
\end{align}
where $(\bm x_1, \bm x_2, \dots, \bm x_p)^\top \sim N(\bm 0_p,\bm \Sigma)$,$\bm \Sigma = (\rho + (1-\rho)\bm I_{i=j})_{p \times p} $; $\beta_j^* = (-1)^j \exp^{-(2j-1)/20} $;  $\bm \epsilon \sim N(0,1) $; and the parameter $\sigma$ is selected in such a way that the signal-to-noise ratio of the data is equal to 1. For our simulation, we focus on two scenarios: high-dimensional data on a single machine (where $n=720, p=2560$), and massive data on multiple machines (where $n=720000,p=1280$) with the choice of the correlation $\rho = 0.5$. The schematic diagram of the true coefficient is shown in Figure \ref{Fig4}.
\begin{figure}[H]
\centering
\includegraphics[width=7cm,height=4cm]{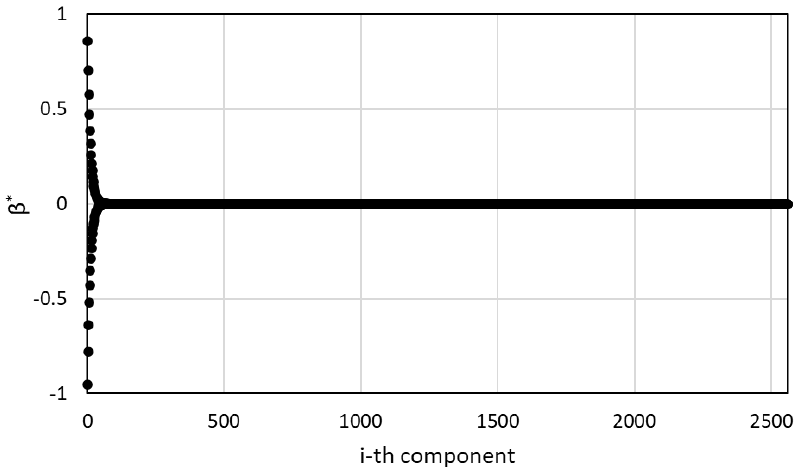}
\caption{\footnotesize{The true coefficient for elastic-net regressions.}}\label{Fig4}
\end{figure}
First, we apply our CPADMM algorithm to solve four elastic-net regressions: LS-Enet (least squares loss plus elastic-net), Q-Enet (quantile loss plus elastic-net), SR-Enet (square root loss plus elastic-net), and H-Enet (huber loss plus elastic-net). 
Figure \ref{Fig5} displays the visualization of the coefficient estimations $\{\beta_j^*\}_{j=1}^{p}$ obtained from these elastic-net regression models. 
This result is an arbitrary selection made by us, indicating that the four elastic-net regression models all perform well under normal error, and  CPADMM is an effective and accurate algorithm for solving them.
\begin{figure}[H]
\centering
\includegraphics[width=12cm,height=7cm]{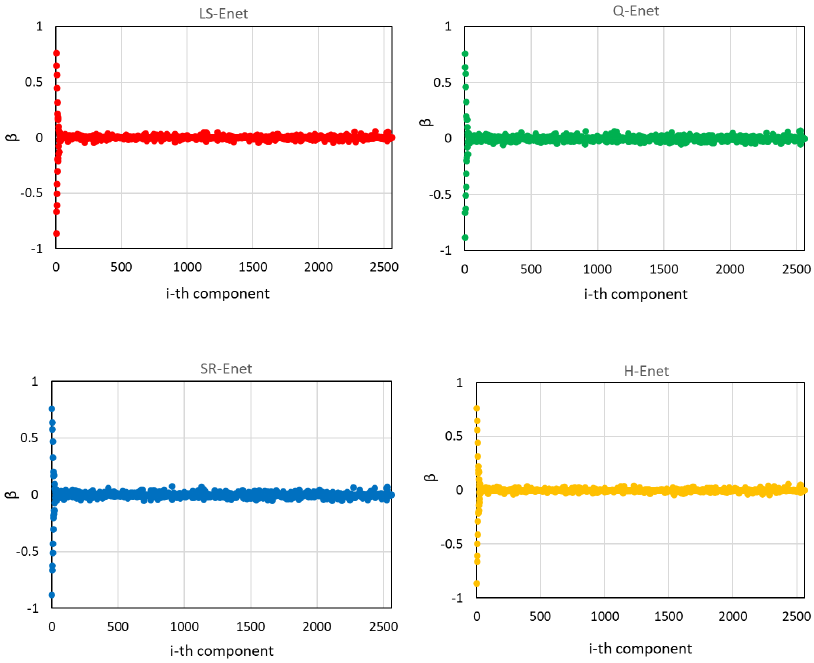}
\caption{\footnotesize{The estimation coefficients by CPADMM for the four types of elastic-net regression.}}\label{Fig5}
\end{figure}

Next, we  compare the CPADMM algorithm in this paper with some state-of-the-art algorithms to fit  the regression data in (\ref{lmodel}).  The above four elastic-net regression models all have efficient solvers, such as glmnet in \cite{FHT} for LS-Enet, FHDQR in \cite{GFK} for Q-Enet, cyclic coordinate-wise
descent (CCD in \cite{R2017}) for SR-Enet, and hqreg in \cite{Y2017} for H-Enet. We compare our unified ADMM algorithm with the above four algorithms for solving elastic-net regression models. Each simulation is repeated 100 times, and the average results for nonparallel computations (single machine) are presented in Table \ref{Tab3}. The results of Table \ref{Tab3} indicate that although our algorithm has certain disadvantages compared to existing algorithms in terms of time, it has certain advantages in estimation accuracy and prediction performance.

Sometimes it is necessary to set $M > 1$ when the data is too large to be stored  processed by a single computer, or can only be stored in a distributed manner.
Due to the lack of research on parallel algorithms for massive data regression with  combined regularizations at present, this paper only demonstrates the performance of our algorithm in a parallel computing environment.
We present the computation time and coefficient estimation accuracy for $M = 1, 10$, and $100$ in Figure \ref{Fig3}. The results in Figure \ref{Fig3} indicate that parallel algorithms can effectively save computation time on large-scale data, but excessive local machines can also slow down the convergence speed of the algorithm.

\begin{table}\small
\centering
\caption{\footnotesize{Comparison of CPADMM and four algorithms for calculating  four elastic-net regressions}}
\resizebox{\textwidth}{!}{
\begin{threeparttable}
\begin{tabular}{lcccccccc}
\Xhline{1pt}
 & \multicolumn{2}{c}{LS-Enet} & \multicolumn{2}{c}{Q-Enet} & \multicolumn{2}{c}{SR-Enet} & \multicolumn{2}{c}{H-Enet} \\
\Xhline{0.4pt}
Algoritnm & glmnet & CPADMM & FHDQR & CPADMM & CCD & CPADMM & hqreg & CPADMM \\
AAE($\times 10^{-2}$) & 0.284(0.011) & \bf 0.251(0.010) & 0.415(0.019) & \bf 0.379(0.016) & 0.467(0.023) & \bf 0.389(0.015) & 0.385(0.017) & \bf 0.282(0.012) \\ 
ASE($\times 10^{-2}$) & 0.875(0.066) & \bf0.753(0.059) & 0.999(0.072) & \bf 0.951(0.065) & 1.236(0.064) & \bf 0.962(0.058) & 1.010(0.061) & \bf 0.843(0.052) \\ 
AAP & 0.155(0.012) & \bf 0.131(0.008) & 0.022(0.003) & \bf 0.009(0.001) & 0.024(0.005) & \bf 0.005(0.001) & \bf 0.047(0.009) & 0.133(0.015) \\ 
ASP & 0.183(0.013) &\bf 0.173(0.011) & 0.104(0.007) & \bf 0.025(0.003) & 0.013(0.002) & \bf 0.007(0.001) & 0.259(0.023) & \bf 0.172(0.018) \\ 
Time &\bf 0.052(0.005) & 8.471(0.076) & \bf 2.765(0.023) & 8.897(0.079) & 10.32(0.121) & \bf 8.834(0.072) & \bf 3.342(0.039) & 8.532(0.080) \\ 
\Xhline{1pt}
\end{tabular}
\begin{tablenotes}
        \footnotesize
        \item[*] The meanings of the notations used in this table are as follows: AAE: average absolute estimation error; ASE: average square estimation error; AAP: average absolute prediction error;  ASP: average square prediction error. Numbers in the parentheses represent the corresponding standard deviations. The optimal solution is represented in bold.
\end{tablenotes}
\end{threeparttable}}
\label{Tab3}
\end{table}
\begin{figure}[H]
\centering
\includegraphics[width=16cm,height=8cm]{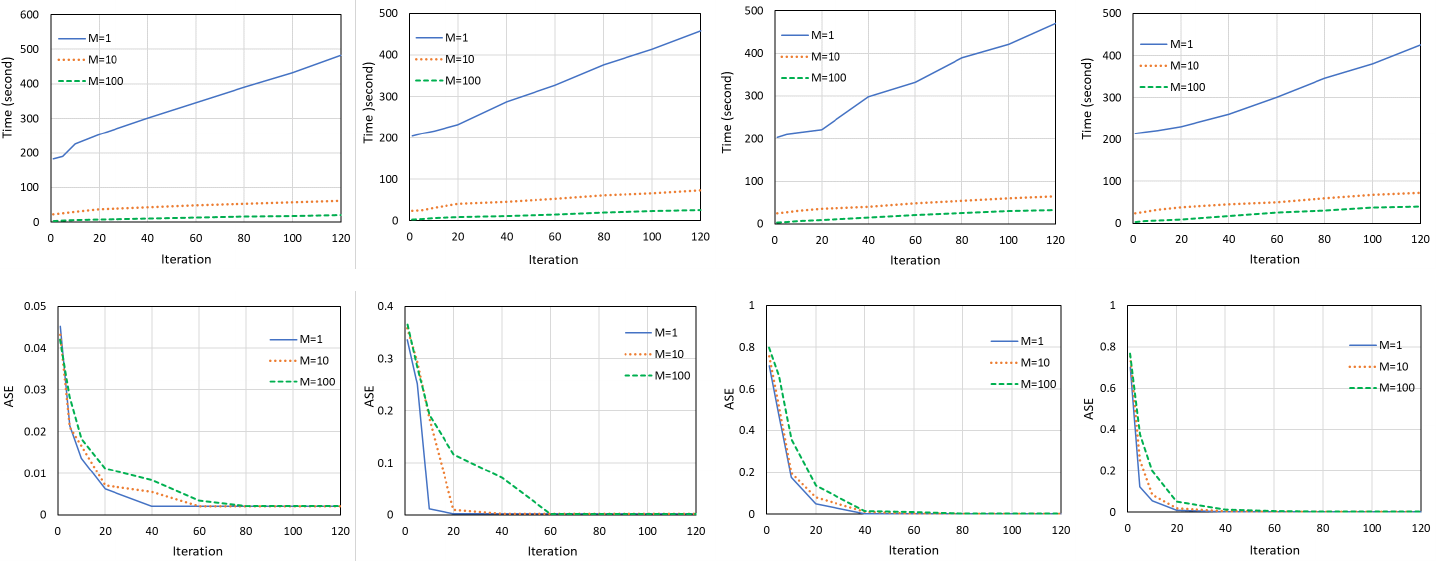}
\caption{\footnotesize{Comparisons of  elastic-net regressions for different $M$ value.}}\label{Fig3}
\end{figure}

%%%%%
\subsubsection{Sparse fused lasso}\label{sec612}
In the second study, we do some simulations on fused regressions.   All model settings remain unchanged from those in Section \ref{sec611}, except for the coefficient settings. In this case, the coefficients are assumed to exhibit an ordering relationship and are closely connected to their neighboring coefficients. To this end, we divide the coefficients into 80 groups, and randomly select 10 groups to set as the same non-zero coefficients. For convenience, we define the set of these non-zero groups as $\{\mathcal{A}_i\}_{i=1}^{10}$. 
The coefficients are  randomly  generate  as
\begin{equation}\label{sbeta}
\boldsymbol{\beta} _{\mathcal{A}_i}^* = \left\{ \begin{array}{l}
\text{U}\left[ { - 3,3} \right] \times \boldsymbol{1}_{\mathcal{A}_i},\ \text{if} \ i=1,2,\dots,10,\\
\textbf{0},\ \ \ \ \ \ \ \ \ \ \ \ \ \ \ \ \ \ \  \text{otherwise}.
\end{array} \right.
\end{equation}
Here, $\text{U} \left[ -3,3 \right]$ represents the uniform distribution on the interval $\left[ { - 3,3} \right]$.  The coefficients have been used in \cite{LMY},  \cite{XLLK} and \cite{W2023}, and the schematic diagram of a random result of the coefficients is shown in Figure \ref{Fig10}. 
\begin{figure}[H]
\centering
\includegraphics[width=7cm,height=4cm]{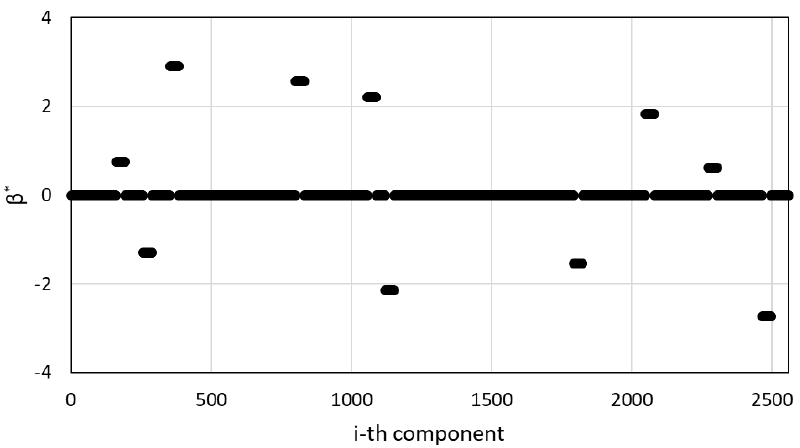}
\caption{\footnotesize{The true coefficient for sparse fused lasso regressions.}}\label{Fig10}
\end{figure}
We also first used our CPADMM algorithm to calculate four sparse fused lasso regression models , that is, LS-Sfla (least squares loss plus  sparse fused lasso), Q-Sfla (quantile loss plus sparse fused lasso), SR-Sfla (square root loss plus  sparse fused lasso), H-Sfla (huber loss plus  sparse fused lasso), and their estimation coefficients as shown in  Figure \ref{Fig9}. The results in Figure \ref{Fig9} demonstrate the good performance of the four sparse fused lasso regression models with normal error. Furthermore, CPADMM is an effective and accurate algorithm for solving these models.
\begin{figure}[H]
\centering
\includegraphics[width=12cm,height=7cm]{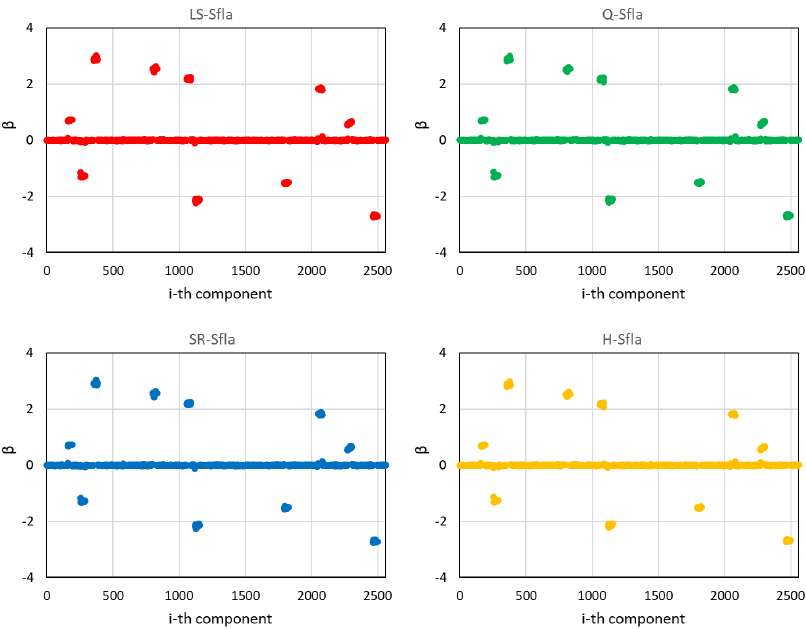}
\caption{\footnotesize{The estimation coefficients by CPADMM for the four types of sparse fused lasso regression.}}\label{Fig9}
\end{figure}

Next, we use four variations of sparse fused lasso regression to fit the data. Several algorithms have been proposed to solve these four sparse fused lasso models. For instance, the linearized ADMM (LADMM) algorithm is used for LS-Sfla in \cite{LMY}, the Multi-block ADMM (MADMM) algorithm is used for Q-Sfla in \cite{W2023}, the square root ADMM (SRADMM) algorithm is used for SR-Sfla in \cite{JLD}, and the path solution algorithm is used for H-Sfla in \cite{LZL2}.
To compare the performance of our unified CPADMM algorithm with the above four algorithms in solving sparse fused lasso regression models, we conduct 100 repetitions of each simulation. The average results for non-parallel computations (single machine) are presented in Table \ref{Tab4}. The results in Table \ref{Tab4} indicate that our algorithm demonstrates competitive performance in terms of computation time, estimation accuracy, and prediction performance compared to existing algorithms.

We can implement parallel simulations using our CPADMM algorithm to conclude this subsection.
We show the computation time and coefficient estimation accuracy for $M = 1, 10$, and $100$ in Figure \ref{Fig2}. The results demonstrate that parallel algorithms can significantly reduce computation time for large-scale sparse fused lasso regression. However, it should be noted that an excessive number of local machines can potentially impact the convergence speed of the algorithm.

\begin{figure}[H]
\centering
\includegraphics[width=16cm,height=8cm]{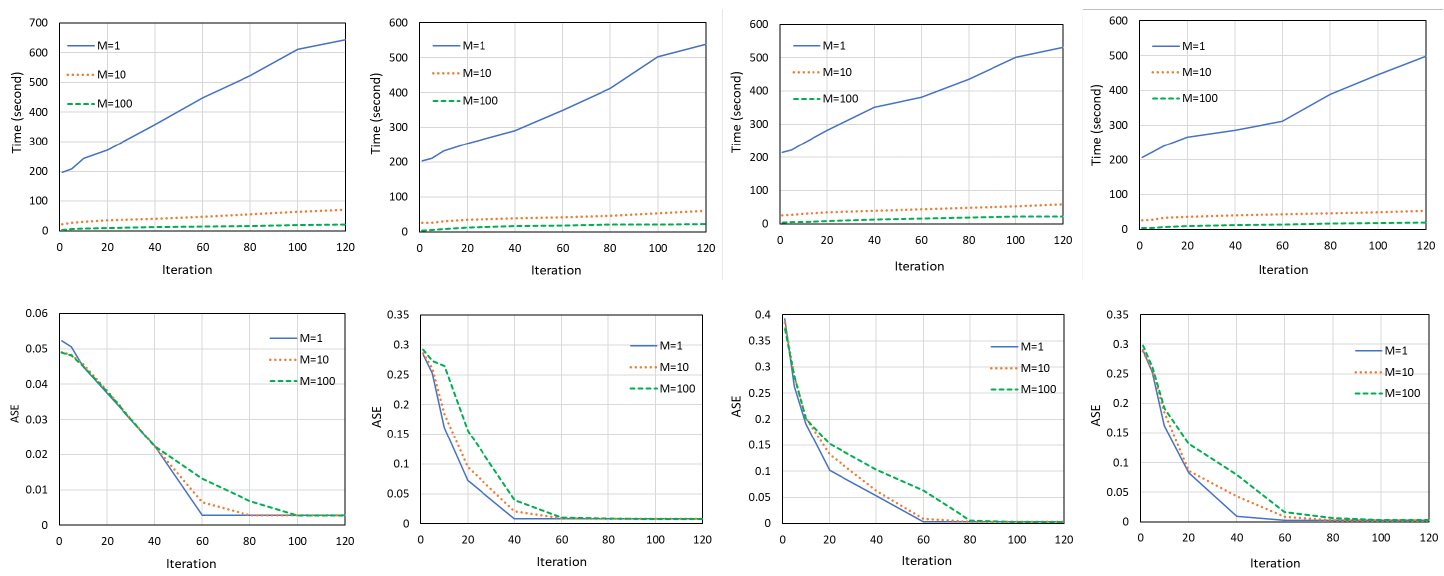}
\caption{\footnotesize{Comparisons of  sparse fused lasso regressions for different $M$ value.}}\label{Fig2}
\end{figure}

\begin{table}\small
\centering
\caption{\footnotesize{Comparison of CPADMM and four algorithms for calculating  four sparse fused lasso regressions.}}
\resizebox{\textwidth}{!}{
\begin{threeparttable}
\begin{tabular}{lcccccccc}
\Xhline{1pt}
 & \multicolumn{2}{c}{LS-Sfla} & \multicolumn{2}{c}{Q-Sfla} & \multicolumn{2}{c}{SR-Sfla} & \multicolumn{2}{c}{H-Sfla} \\
\Xhline{0.4pt}
Algorithm & LADMM & CPADMM & MADMM & CPADMM & SRADMM & CPADMM & Path Solution & CPADMM\\
AAE($\times 10^{-2}$) & 0.017(0.008) & \bf 0.008(0.001) & 0.010(0.004) & \bf 0.009(0.004) & 0.022(0.029) & \bf 0.008(0.001) & 0.015(0.014) & \bf 0.008(0.001) \\ 
ASE($\times 10^{-2}$) & 0.026(0.013) & \bf 0.015(0.009) & \bf 0.016(0.006) & 0.017(0.005) & 0.033(0.015) & \bf 0.016(0.007) & 0.024(0.011) & \bf 0.015(0.006) \\ 
AAP & 0.621(0.028) & \bf 0.451(0.019) & \bf 0.466(0.017) & 0.468(0.018) & 0.592(0.025) & \bf 0.339(0.013) & 0.483(0.018) & \bf 0.461(0.012) \\ 
ASP & 0.583(0.025) &\bf 0.571(0.018) & \bf 0.657(0.031) & 0.659(0.029) & 0.738(0.039) & \bf 0.427(0.019) & 0.602(0.032) &\bf 0.589(0.025)\\ 
Time & 13.59(0.442) & \bf 12.48(0.437) & 12.62(0.413) & \bf 12.55(0.409) & 15.40(0.620) & \bf 12.22(0.425) & 230.6(23.61) & \bf 12.52(0.418) \\ 
\Xhline{1pt}
\end{tabular}
\begin{tablenotes}
        \footnotesize
        \item[*] The meanings of the notations used in this table are as follows: AAE: average absolute estimation error; ASE: average square estimation error; AAP: average absolute prediction error;  ASP: average square prediction error. Numbers in the parentheses represent the corresponding standard deviations. The optimal solution is represented in bold.
\end{tablenotes}
\end{threeparttable}}
\label{Tab4}
\end{table}

%%%%%
\subsubsection{Sparse group lasso}\label{sec613}
In the third study, we use a simulation model similar to that of \cite{LMY}   to generate data for conducting a comparison of sparse group lasso regression.  All model settings remain the same as those in Section \ref{sec611}, except for the coefficient settings. Taking into account coefficient grouping, we also divide the coefficients into 80 groups. From these groups, we randomly select 10 groups to set as non-zero coefficient groups. For convenience, we define the set of these non-zero coefficients as $S$. Then, the true coefficient vector $\bm \beta^*$ is randomly generated by
\begin{equation}\label{gbeta}
{\beta}^*_j = \left\{ \begin{array}{l}
\xi_j(1 + |a_j|),\ \ \ \ \text{if} \  j  \in S,\\
\text{0},\ \ \ \ \ \ \ \ \ \ \  \ \ \ \ \ \ \text{otherwise},
\end{array} \right.
\end{equation}
where $\xi_j$ is randomly chosen from the set $\{+1,-1\}$ and $a_j$ follows a $N (0, 1)$ distribution. The schematic diagram of a random true coefficient is as follows
\begin{figure}[H]
\centering
\includegraphics[width=7cm,height=4cm]{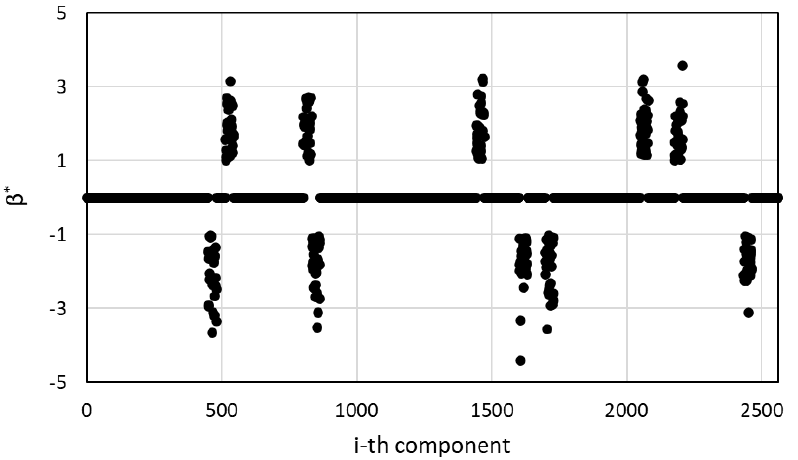}
\caption{\footnotesize{The true coefficient for sparse group lasso regressions.}}\label{Fig8}
\end{figure}
\begin{figure}[H]
\centering
\includegraphics[width=12cm,height=7cm]{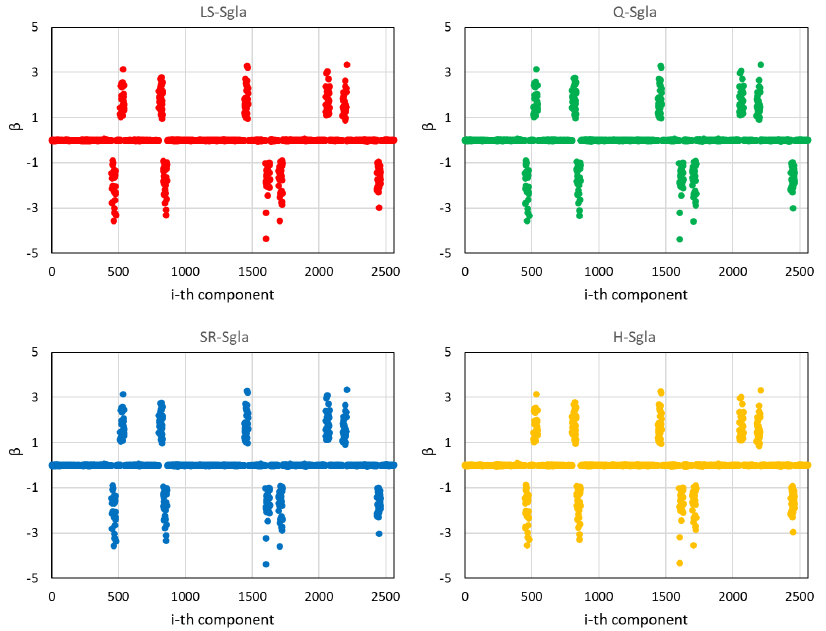}
\caption{\footnotesize{The estimation coefficients by CPADMM for the four types of sparse group lasso regression.}}\label{Fig7}
\end{figure}
We initially employed our CPADMM algorithm to compute four sparse group lasso regression models. These models are LS-Sgla (least squares loss plus sparse group lasso), Q-Sgla (quantile loss plus sparse group lasso), SR-Sgla (square root loss plus sparse group lasso), and H-Sgla (huber loss plus sparse group lasso). The estimated coefficients for these models are displayed in Figure \ref{Fig7}. The results in  Figure \ref{Fig7} show that the four sparse group lasso regression models all perform well under normal error, and  CPADMM is an effective and accurate algorithm for solving them.

Next, we use four types of sparse group lasso regression to fit these data. \cite{LMY} and \cite{Wa2019} respectively proposed linearized ADMM algorithms to solve LS-Sgla and Q-Sgla. Currently, there is no effective algorithm available to solve SR-Sgla ($\lambda_1 \ne 0$ and $\lambda_2 \ne 0$) and H-Sgla ($\lambda_1 \ne 0$ and $\lambda_2 \ne 0$). However, the ADMM algorithm in \cite{SNEBJ} and a scaled thresholding-based iterative selection
procedure (S-TISP) in \cite{B2013} have been proposed to solve SR-Sgla and H-Sgla respectively in the special case when $\lambda_1=0$.
 We compare our unified ADMM algorithm with the above four algorithms for solving sparse group lasso or group lasso regression models. Each simulation is repeated 100 times, and the average results for nonparallel computations (single machine) are presented in Table \ref{Tab5}.  The simulation results demonstrate that our CPADMM algorithm presents significant advantages compared to existing algorithms in terms of computation time, estimation accuracy, and prediction performance.
\begin{table}\small
\centering
\caption{\footnotesize{Comparison of CPADMM and four algorithms for calculating  four sparse group lasso regressions.}}
\resizebox{\textwidth}{!}{
\begin{threeparttable}
\begin{tabular}{lcccccccc}
\Xhline{1pt}
 & \multicolumn{2}{c}{LS-Sgla} & \multicolumn{2}{c}{Q-Sgla} & \multicolumn{2}{c}{SR-Sgla} & \multicolumn{2}{c}{H-Sgla} \\
\Xhline{0.4pt}
Algorithm & LADMM & CPADMM & LADMM & CPADMM & ADMM & CPADMM & S-TISP & CPADMM\\
AAE($10^{-2}$) & 0.023(0.003) & \bf 0.011(0.001) & 0.027(0.004) & \bf 0.012(0.001) & 0.029(0.004) & \bf 0.012(0.001) & 0.036(0.005) & \bf 0.011(0.001) \\ 
ASE($10^{-2}$) & 0.035(0.004) & \bf 0.027(0.002) & 0.042(0.006) & \bf 0.026(0.002) & 0.039(0.005) & \bf 0.025(0.002) & 0.044(0.006) & \bf 0.027(0.002) \\ 
MADE & 0.323(0.013) &\bf  0.219(0.009) & 0.121(0.006) & \bf 0.093(0.003) & 0.088(0.003) & \bf 0.069(0.002) & 0.337(0.015) & \bf 0.219(0.012) \\ 
MSE & 0.381(0.021) & \bf 0.268(0.019) & 0.156(0.013) & \bf 0.129(0.008) & 0.091(0.002) & \bf 0.087(0.001) & 0.358(0.029) & \bf 0.268(0.018) \\ 
Time & 7.21(0.351) & \bf 5.38(0.227) & 7.85(0.412) & \bf 5.56(0.239) & 9.26(0.482) & \bf 5.41(0.253) & 7.77(0.382) & \bf 5.47(0.246) \\ 
\Xhline{1pt}
\end{tabular}
\begin{tablenotes}
        \footnotesize
        \item[*] The meanings of the notations used in this table are as follows: AAE: average absolute estimation error; ASE: average square estimation error; AAP: average absolute prediction error;  ASP: average square prediction error. Numbers in the parentheses represent the corresponding standard deviations. The optimal solution is represented in bold.
\end{tablenotes}
\end{threeparttable}}
\label{Tab5}
\end{table}

In addition, we conclude this subsection by performing parallel simulations on sparse group lasso regressions through our CPADMM algorithm. The computation time and coefficient estimation accuracy are depicted in Figure \ref{Fig6} for $M = 1, 10$, and $100$. The results  suggest that parallel algorithms can significantly reduce computation time for large-scale sparse group lasso regression data. However, it is worth noting that having too many local machines may actually slow down the convergence speed of the algorithm.

\begin{figure}[H]
\centering
\includegraphics[width=16cm,height=8cm]{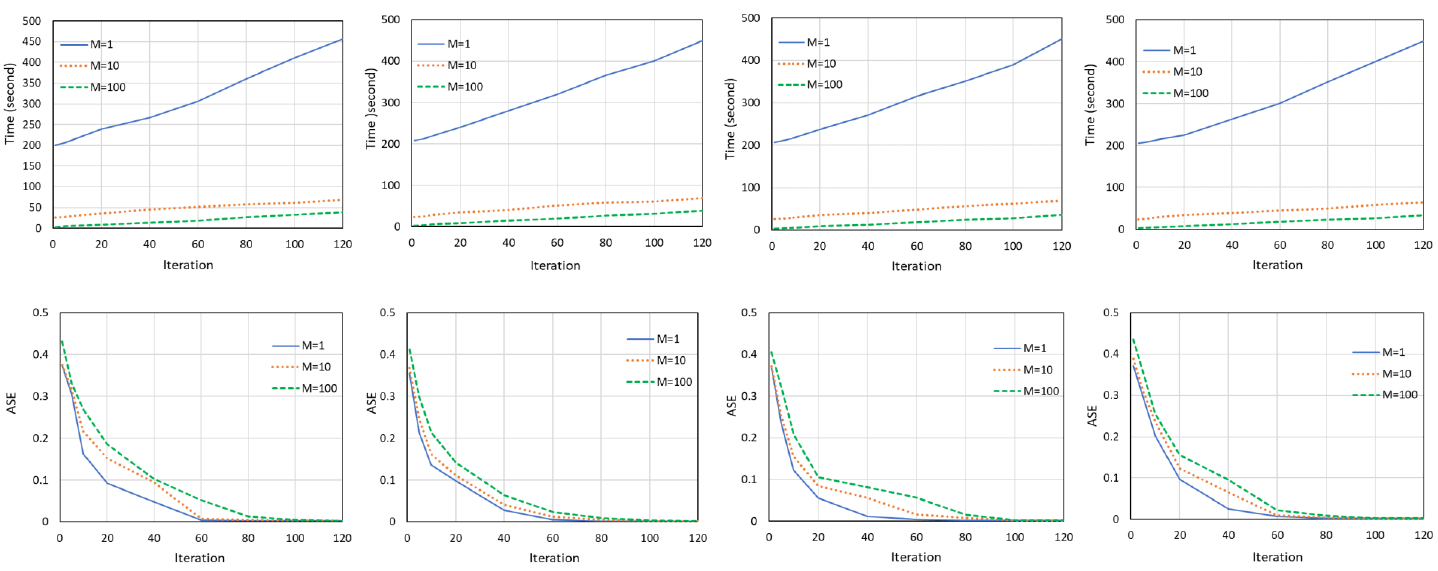}
\caption{\footnotesize{Comparisons of the four types of sparse group lasso regressions for different $M$ value.}}\label{Fig6}
\end{figure}

\subsection{Real datasets}\label{sec62}
In this subsection, we focus on index tracking using combined regularization regression, with the target indices being two highly representative stock indices in China, the SSE 50 (Shanghai Stock Exchange 50 stock index) and CSI 300 (China Securities Index 300).
A stock index is a numerical value calculated by weighting the price or market value of its constituent stocks, which is used to reflect the ups and downs of the stock market and the overall market trend.
SSE 50 is an index of 50 major stocks on the Shanghai Stock Exchange, representing China's A-share market performance and used as a benchmark for investors. CSI 300 tracks the performance of 300 A-share stocks across different sectors on both Shanghai and Shenzhen Stock Exchanges, providing broad coverage of the Chinese stock market.

 For further data analysis, we have tabulated the distribution of constituent stocks in various industry sectors for two stock indices in Table \ref{se}. We classify the constituent stocks of the stock index into several sectors, such as financials (banking, insurance, and financial services); real estate (property development, real estate investment trusts, and property management); consumer discretionary (retail, automotive, and leisure and entertainment); industrials (manufacturing, construction and engineering); utilities (electric utilities and gas utilities); health care (pharmaceuticals and biotechnology); information technology (software development and telecommunications); and others (advertising, agriculture and related fields). The grouping of these sectors can serve as a prior grouping for sparse group lasso regression.
\begin{table}[H]\small
\caption{\footnotesize{Distribution of constituent stocks in various sectors of stock indices.}}
\centering
\label{se}
\begin{tabular}{lll}
\midrule
\text{Sector} & Number of Stocks (SSE 50) &Number of Stocks (CSI 300) \\
\midrule
Financials & 9 & 47 \\
Real Estate & 4 & 20 \\
Consumer Discretionary & 10 & 12 \\
Industrials & 17 & 110\\
Utilities & 3 & 39 \\
Health care & 4 & 25 \\
Information Technology & 3 & 26 \\
Others & 0 & 21 \\
\bottomrule
\end{tabular}
\end{table}

Index tracking refers to investors using financial products such as index funds and exchange-traded funds (ETFs) to replicate or track the performance of specific stock indices. Through index tracking, investors can participate in the ups and downs of the index without directly purchasing individual stocks by holding financial products that track specific indices. These financial products are typically invested in accordance with the weightings of the constituent stocks in the index to ensure their performance closely aligns with the upward or downward trends of the index. The advantage of index tracking is that investors can achieve diversified risk by investing in multiple stocks in the index through a single investment, while also benefiting from the overall market performance. However, selecting which constituent stocks to include in index tracking has become a hot topic in recent years due to the increasing transaction costs associated with holding a larger number of stocks in the financial market.
Since the weights of the constituent stocks in an index are non-negative, regularized regression with non-negativity constraints is a widely used approach, see \cite{W2014}, \cite{YW2016} and \cite{W2022}.

According to their research, we can define $x_{t,j}$ and $y_t$ to represent the returns of the $j$-th constituent stock and the index, respectively, where $j = 1,2,\dots,50 \ (300)$. We can then use a linear regression model to describe the relationship between $x_{t,j}$ and $y_t$ as follows:
\begin{equation}\label{22}
{y_t} = \sum\limits_t {{\beta _j}} {x_{t,j}} + {\varepsilon _t}, \qquad t = 1,2, \cdots T,
\end{equation}
where $\beta_j \ge 0$ represents the weight of the $j$-th chosen stock and $\varepsilon _t$ is the error term. In practical applications, the optimal estimate of $\beta_j$ represents the proportion of each stock. For example, if $\hat{\beta}_1=1, \hat{\beta}_2=2$, then when tracking the stock index, for every unit of the 1st labeled stock held, it would be necessary to hold 2 units of the 2nd labeled stock.
To measure the bias for tracking, the Annual Tracking Error (ATE) is used and defined by 
\begin{equation}\label{23}
TrackingErro{r_{Year}} = \sqrt {252}  \times \sqrt {\frac{{{{\sum {(er{r_t} - mean(er{r_t})} }^2}}}{{T - 1}}},
\end{equation}
where $err_{t}=\hat y_{t}-y_{t}$ and $\hat y_{t}$ is the fitted or predicted value of $y_t$ ,for $t=1,2,\dots,T$.
 Following \cite{W2014}, we can choose some penalty parameters from 0 to a sufficiently large positive number which shrinks all coefficients to 0, and the interval between the two parameters is equal. For the given dataset, we can fit the model using $\lambda_2 = \sqrt{\log p/n}$, and $\lambda_1 = 100\sqrt{\log p/n}$, which is large enough to ensure all coefficients are shrunk to 0. In this case, we choose 1000 penalty parameters ranging from 0 to $100 \sqrt{\log p/n}$. The interval between consecutive parameters is set as $0.1\sqrt{\log p/n}$.

Next, we will compare non-negative lasso regression (nlasso) in \cite{W2014}, non-negative adaptive lasso regression (nalasso) in \cite{YW2016}, non-negative ladlasso regression (nladlasso) in \cite{W2022}, and  non-negative sparse group lasso regression (nsglasso, $G = 8$) considered in this paper for tracking the minute-by-minute stock indexes of the SSE 50 and the CSI 300 between August 7, 2023 and November 9, 2023.
Note that all non negative methods mentioned above can be uniformly solved using our proposed CPADMM algorithm.
 Each stock index has a total of 15000 data, which means $n=15000$.
Please note that the last update of the constituent stocks for SSE 50 and CSI 300 was on June 9, 2023. During the data collection period, there were no trading suspensions for any of the constituent stocks of these two indices. Therefore, no data manipulation is required. In addition, we selected the first 12000 samples as the training set and the last 3000 samples as the training set. We record the ATE ($\times 10^4$) comparison results of various non-negative  regression methods in Table \ref{trackingdata}. In the tracking of the SSE 50 Index, we selected 5, 10, and 20 constituent stocks, while in the CSI 300 Index, we chose 10, 20, and 40 constituent stocks. The results from Table \ref{trackingdata} indicate that the sparse group lasso, when utilizing grouping information, performs significantly better in stock index tracking compared to a single regularizer. Moreover, we have placed the schematic diagrams of the 5 constituent stocks selected by the nsglasso method for tracking SSE 50 and the 10 constituent stocks selected for tracking CSI 300 separately in Figure \ref{Fig12} and Figure \ref{Fig11}.

\begin{table}\small
\centering
\caption{\footnotesize{SSE 50 and CSI 300 index tracking data.}}
\label{trackingdata}
\small
\begin{tabular*}{14.5cm}{cccccccc}
    \hline
    &\multicolumn{3}{c}{SSE 50} &\ \ &\multicolumn{3}{c}{CSI 300}\\
    \cline{2-4}
    \cline{6-8}
    Number &Method   &$ATE_{train}$ &$ATE_{test}$    &Number &Method   &$ATE_{train}$ &$ATE_{test}$\\
    \hline
    5	&nladlasso	&5.411	&2.067	&10	&nladlasso	&6.534	&3.258\\
    \ \	&nalasso	&5.893	&2.191  &\ \	&nalasso	&6.436	&3.658\\
    \ \	&nlasso	&6.093	&2.103  &\ \	&nlasso	&6.852	&3.803\\
    \ \	&nsglasso	&\textbf{5.343}	&\textbf{2.035}  &\ \	&nsglasso	&\textbf{6.298}	&\textbf{3.110}\\
    10	&nladlasso	&5.359	&1.987	&20	&nladlasso	&6.489	&3.165\\
    \ \	&nalasso	&5.762	&2.158  &\ \	&nalasso	&6.412	&3.558\\
    \ \	&nlasso	&5.921	&2.058  &\ \	&nlasso &6.782	&3.724\\
    \ \	&nsglasso	&\textbf{5.226}	&\textbf{1.958}  &\ \	&nsglasso	&\textbf{6.111}	&\textbf{3.087}\\
    20	&nladlasso	&5.309	&1.943	&40	&nladlasso	&6.359	&3.156\\
    \ \	&nalasso	&5.708	&1.996  &\ \	&nalasso	&6.387	&3.470\\
    \ \	&nlasso	&5.856	&2.007  &\ \	&nlasso	&6.626	&3.703\\
    \ \	&nsglasso	&\textbf{5.197}	&\textbf{1.923}  &\ \	&nsglasso	&\textbf{6.091}	&\textbf{2.990}\\
  \hline
\end{tabular*}
\end{table}

\begin{figure}[H]
\centering
\includegraphics[width=16cm,height=8cm]{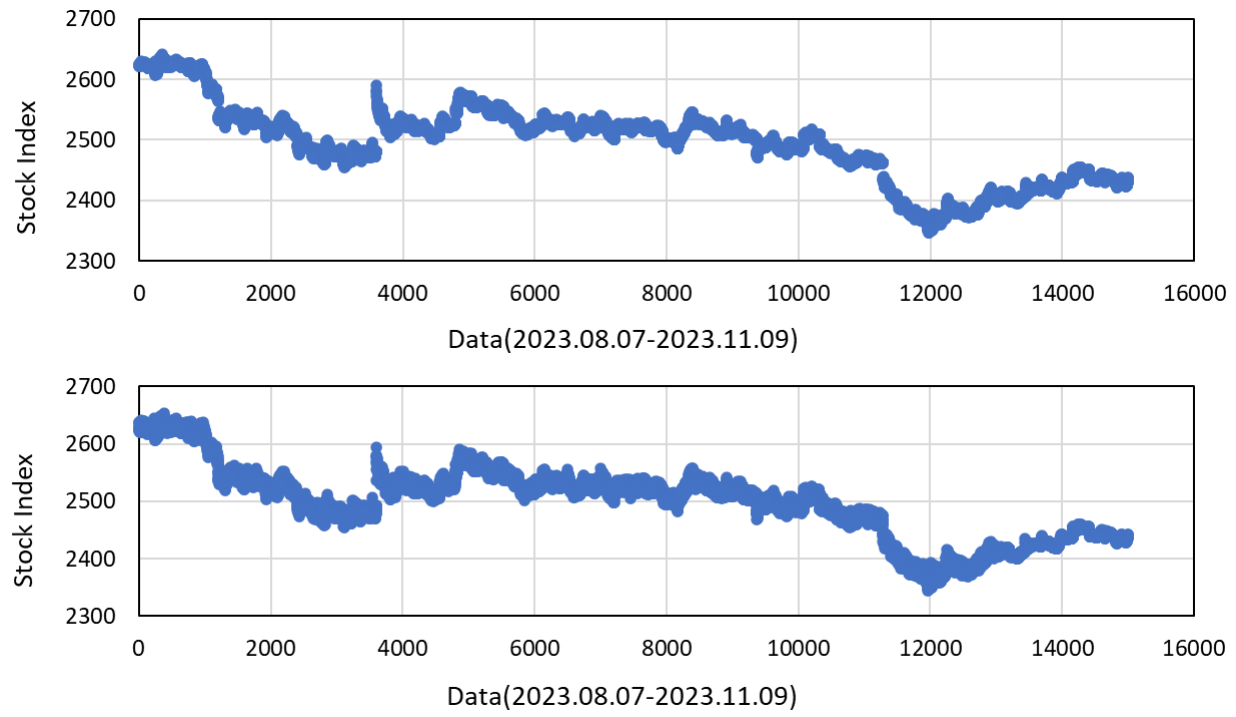}
\caption{\footnotesize{SSE 50 tracking diagram, the top one shows the actual stock index and the bottom one shows the predicted value by nsglasso.}}\label{Fig12}
\end{figure}

\begin{figure}[H]
\centering
\includegraphics[width=16cm,height=8cm]{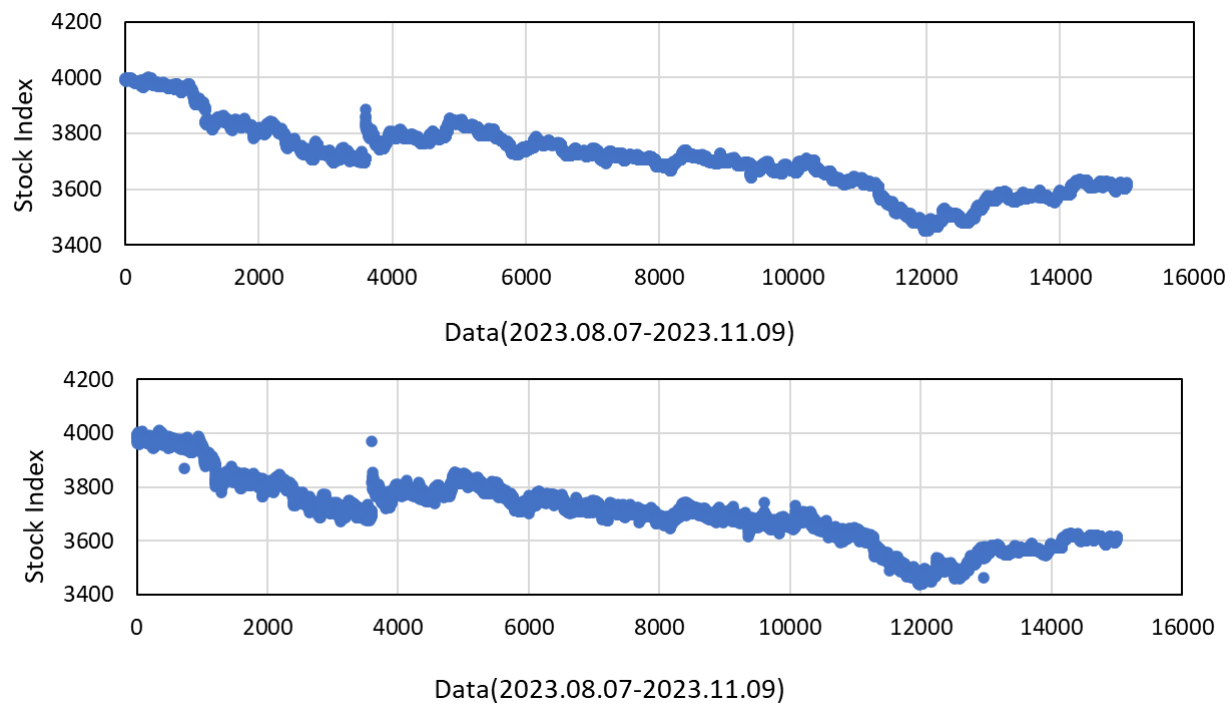}
\caption{\footnotesize{CSI 300 tracking diagram, the top one shows the actual stock index and the bottom one shows the predicted value by nsglasso.}}\label{Fig11}
\end{figure}

\section{Conclusion and further research}\label{sec7}
In this paper, we consider an efficient parallel ADMM algorithm for uniformly solving sparse combined regularized regression models. This work fills the gap in parallel algorithms for combined regularization regression. The primary challenges in designing this unified ADMM algorithm lie in dealing with some combined  regularizations that lack closed-form solutions for the proximal operator and handling distributed storage of data. To solve these two difficulties, we introduce some auxiliary variables that not only enable combined regularization to handle the subproblems of each iteration separately and have closed-form solutions, but also construct consensus structures to adapt to distributed stored data. Fortunately, despite the introduction of additional auxiliary variables, we can prove that these constrained optimization problems can be viewed as optimization problems with two primal variables subject to linear constraints. Since our linearization of the first variable is not complete, but only partial, existing research on the two-block ADMM algorithms cannot directly ensure the convergence of our parallel algorithm. Nevertheless, we can still use the framework used in \cite{BY,BY2} to prove the convergence of our parallel ADMM algorithm and provide its linear convergence rate. The algorithm complexity is comparable to existing ADMM algorithms for both single machine and multi-machine scenarios.

Our algorithm has a relatively fast computation speed due to the closed-form solutions for each subproblem at every iteration. However, we must also acknowledge that our algorithm may be slow when there are too many local machines, particularly when solving for large values of $\{n_m\}_{m=1}^{M}$ and $p$. This is because our algorithm requires the $m$-th local machine to calculate the inverse of a matrix with dimension $\max (n_i, p)$. In addition, as the number of local machines $M$ increases, our algorithm tends to slow down its convergence speed in simulation data.  This is also actually reflected in Theorem \ref{TH2} of this paper, where the convergence rate of the algorithm is inversely proportional to the dimension of the iteration sequence. Developing a more applicable parallel algorithm to address these issues is one of our future research goals.

 Moreover, our algorithm is flexible in terms of the loss function. During the specific implementation of the algorithm, it only requires modifying the proximal operator in  $\bm r$-subproblem to adapt to different losses. This provides the possibility for further extension of our algorithm. For example, similar to \cite{L2015}, we can replace the loss function with the Dantzig selector and $\ell_q$ ($q>1$) to tackle more regression tasks. Additionally, we can utilize certain classification losses mentioned in \cite{L2023} to handle distributed storage data classification tasks. The convergence of these extended  parallel algorithm may be proven by the same method in this paper.

\section*{Acknowledgements}
We express our sincere gratitude to Professor Bingsheng He for engaging in invaluable discussions with us. His insights and expertise have greatly assisted us in effectively utilizing parallel ADMM algorithms to solve regression problems. The research of Zhimin Zhang was supported by the National Natural Science Foundation of China [Grant Numbers 12271066, 12171405, 11871121], and the research of Xiaofei Wu was supported by the Scientific and Technological Research Program of Chongqing Municipal Education Commission [Grant Numbers KJQN202302003].

\begin{footnotesize}

\end{footnotesize}

\newpage
\section*{Appendix}
\appendix

\section{Proofs of Convergence Theorems}\label{B}
In Section \ref{sec41}, we have proven that solving the regression models with combined regularization in parallel using the ADMM algorithm can be seen as a traditional two-block ADMM algorithm ($\bm G = \bm I_p$) or a partially linearized two-block ADMM algorithm ($\bm G = \bm F$). Moreover, the optimization objective function of this paper can be written as a two-block separable function with the linear constraint,
\begin{align}\label{proof1}
& \mathop  {\arg \min }\limits_{\boldsymbol{v}_1, \bm v_2} \big\{{\bm \theta}_1(\bm v_1)  + {\bm \theta}_2(\bm v_2)  \big \},  \notag\\ 
& \textbf{s.t.} \  \bm A \bm v_1 + \bm B \bm v_2= \bm c,
\end{align}
where $\bm v_1 = (\bm \beta^\top, \bm r_1^\top, \bm r_2^\top, \dots, \bm r_M^\top)^\top$,  $\bm v_2 = (\bm \beta_1^\top, \bm \beta_2^\top, \dots, \bm \beta_M^\top, \bm b^\top)^\top$, ${\bm \theta}_1(\bm v_1) = \lambda_1\|\bm\beta \|_1 + I_{\mathcal C}(\bm \beta) + \sum_{m=1}^{M}\mathcal{L}(\bm r_m)$ and ${\bm \theta}_2(\bm v_2) =\lambda_2 \|\bm b \|_{2,1}$ (since the functions corresponding to $\bm \beta_m$ are all zero). Here, $\bm A = [
\bm A_1, \bm A_2, \dots, \bm A_{M+1}]$, where 
$$
{\boldsymbol{A}_1} = \begin{bmatrix}
\boldsymbol G \\
-\boldsymbol{I}_p \\
-\boldsymbol{I}_p \\
\vdots\\
-\boldsymbol{I}_p \\
\bm 0 \\
\bm 0 \\
\vdots \\
\bm 0
\end{bmatrix},
\quad
{\boldsymbol{A}_2} = \begin{bmatrix}
\bm 0 \\
\bm 0 \\
\bm 0 \\
\vdots \\
\bm 0 \\
\bm I_{n_1} \\
\bm 0\\
\vdots \\
\bm 0
\end{bmatrix},
\quad
{\boldsymbol{A}_3} = \begin{bmatrix}
\bm 0 \\
\bm 0\\
\bm 0 \\
\vdots \\
\bm 0 \\
\bm 0 \\
\bm I_{n_2}\\
\vdots \\
\bm 0
\end{bmatrix},
\quad
\dots,
\quad
{\boldsymbol{A}_{M+1}} = \begin{bmatrix}
\bm 0 \\
\bm 0 \\
\bm 0 \\
\vdots \\
\bm 0 \\
\bm 0 \\
\bm 0 \\
\vdots \\
\bm I_{n_M}
\end{bmatrix};
$$
and $\bm B = [\bm B_1, \bm B_2, \dots, \bm B_{M+1}]$, where 
$$
{\boldsymbol{B}_1} = \begin{bmatrix}
\boldsymbol{0} \\
\boldsymbol{I}_p \\
\bm 0 \\
\vdots \\
\bm 0 \\
\bm X_1 \\
\bm 0 \\
\vdots \\
\bm 0
\end{bmatrix},
\quad
{\boldsymbol{B}_2} = \begin{bmatrix}
\bm 0 \\
\bm 0 \\
\bm I_p \\
\vdots \\
\bm 0 \\
\bm 0 \\
\bm X_{2} \\
\vdots \\
\bm 0
\end{bmatrix},
\quad
\dots,
\quad
{\boldsymbol{B}_M} = \begin{bmatrix}
\bm 0 \\
\bm 0 \\
\bm 0 \\
\vdots \\
\bm I_p \\
\bm 0 \\
\bm 0 \\
\vdots \\
\bm X_M
\end{bmatrix},
\quad
{\boldsymbol{B}_{M+1}} = \begin{bmatrix}
- \bm I_G \\
\bm 0 \\
\bm 0 \\
\vdots \\
\bm 0 \\
\bm 0 \\
\bm 0 \\
\vdots \\
\bm 0
\end{bmatrix};$$
and $\bm c = (\bm 0^\top,\bm 0^\top,\bm 0^\top, \dots, \bm 0^\top, \bm y_1^\top, \bm y_2^\top, \dots, \bm y_M^\top)^\top$. Note that all $\bm A_m$ are mutually orthogonal, and $\bm B_m$ are also mutually orthogonal. This is actually the reason for transforming Algorithm \ref{alg1} (multi-block  parallel algorithm) into a two-block ADMM algorithm.

The existing convergence results for the original two-block ADMM, as presented in \cite{BY,BY2} and \cite{Y2020}, also hold for Algorithm \ref{alg1} when $\bm G = \bm I_p$. 
However,  when $\bm G = \bm F$, the existing convergence results for the linearized two-block ADMM in \cite{BY,BY2}  cannot be directly applied to our partially linearized version. Nonetheless, with slight modifications in their proof process, we can also obtain the following conclusions from their existing convergence analysis. 

 Recall that \begin{align}\label{IG2}
\bm S_{\bm G}  =
\begin{cases}
\bm 0, & \text{{if }}  \bm G = \bm I_p,\\
\bm S, & \text{{if }}  \bm G = \bm F, \\
\end{cases}
\end{align}
where $\bm S = \eta \bm I_p - (M \bm I_p + \bm F^\top \bm F)$.
\begin{prop}\label{prop8}
Let $\tilde{\bm w}^k =  \left(\boldsymbol{\beta}^k, \{\boldsymbol{r}_m^k\}_{m=1}^{M},  \{\boldsymbol{\beta}_m^k\}_{m=1}^{M}, \boldsymbol{b}^k, \{\boldsymbol{d}_m^k\}_{m=1}^{M}, \{\bm{e}_m^k \}_{m=1}^{M},\boldsymbol{f}^k\right)$ is generated by Algorithm \ref{alg1} with an initial feasible solution $\tilde{\bm w}^0$.  The sequence $\tilde{\bm v}^k =  \left(\bm \beta^k, \{ \boldsymbol{\beta}_m^k\}_{m=1}^{M},  \bm b^k,
\{\boldsymbol{d}_m^k\}_{m=1}^{M}, \{\bm{e}_m^k \}_{m=1}^{M},\boldsymbol{f}^k \right)$  has
 the following contraction inequality
\begin{equation}\label{iee}
\| \tilde{\boldsymbol{v}}^{k+1}-\tilde{\boldsymbol{v}}^* \|_{\boldsymbol{H}}^{2} \le \| \tilde{\boldsymbol{v}}^{k}-\tilde{\boldsymbol{v}}^* \|_{\boldsymbol{H}}^{2} - \| \tilde{\boldsymbol{v}}^{k}-\tilde{\boldsymbol{v}}^{k+1} \|_{\boldsymbol{H}}^{2},
\end{equation}
where  $\tilde{\boldsymbol{v}}^{*}$ $=  \left(\bm \beta^*, \{ \boldsymbol{\beta}^*\}_{m=1}^{M}, \bm b^*, \\
\{\boldsymbol{d}_m^*\}_{m=1}^{M}, \{\bm{e}_m^* \}_{m=1}^{M},\boldsymbol{f}^* \right)$  and  $\boldsymbol{H} = \begin{bmatrix}
\bm S_{\bm G}   & \bm 0 & \bm 0 \\
\bm 0 &\mu \bm B^\top \bm B & \bm 0 \\
\bm 0 &\bm 0 & \frac{1}{\mu} \bm I_{n_{\bm G} - 1 +(M+1)p}
\end{bmatrix} $ is a symmetric and positive definite matrix, and $n_{\bm G}= n$ if $\bm G = \bm I_p$, and otherwise $n_{\bm G}= n-1$. .
\end{prop}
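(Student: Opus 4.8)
The plan is to follow the variational-inequality (VI) framework of He--Yuan \cite{BY,BY2}, adapted to the \emph{partially} linearized two-block reformulation \eqref{2-block}. First I would reformulate the optimality conditions of the two $\bm v$-subproblems in \eqref{2-block} together with the dual update $\bm z^{k+1} = \bm z^k - \mu(\bm A \bm v_1^{k+1} + \bm B\bm v_2^{k+1} - \bm c)$ as a single perturbed VI: for all feasible $\bm w = (\bm v_1, \bm v_2, \bm z)$,
\begin{equation}\label{prop8vi}
\bm\theta(\bm u) - \bm\theta(\bm u^{k+1}) + (\bm w - \bm w^{k+1})^\top F(\bm w^{k+1}) \ge (\bm w - \bm w^{k+1})^\top Q (\bm w^k - \bm w^{k+1}),
\end{equation}
where $\bm u = (\bm v_1, \bm v_2)$, $\bm\theta(\bm u) = \bm\theta_1(\bm v_1) + \bm\theta_2(\bm v_2)$, $F$ is the usual monotone (skew plus block-gradient) map coming from the Lagrangian of \eqref{proof1}, and $Q$ is the matrix encoding the proximal/linearization terms. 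The one subtlety relative to the standard reference is that the term $\tfrac12\|\bm\beta - \bm\beta^k\|_{\bm S_{\bm G}}^2$ in \eqref{2-block} contributes an extra $\bm S_{\bm G}$-block to $Q$ on the $\bm v_1$ slot, and the $\bm v_1$-subproblem is solved jointly (not linearized in $\bm r_m$), so I must check that the cross terms between the linearized part $\bm\beta$ and the exactly-solved part $\{\bm r_m\}$ still vanish — this follows from the orthogonality relations $\bm A_i^\top \bm A_j = \bm 0$ already verified in the excerpt, together with the block structure of $\bm S_{\bm G}$ (it acts only on the $\bm\beta$-coordinates).

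Next I would symmetrize: defining $\bm H$ as in the statement, one shows $Q = H M_0$ for an appropriate $M_0$ with the property that $Q + Q^\top - M_0^\top H M_0 \succeq \bm 0$ (in fact $\succ \bm 0$ under the stated hypotheses, which is where positive definiteness of $\bm H$ is used and why $\eta \ge M+4$ is required so that $\bm S = \eta\bm I_p - (M\bm I_p + \bm F^\top\bm F) \succ \bm 0$). Then the routine algebraic manipulation — substitute $\bm w = \bm w^*$ (an optimal point, so $\bm\theta(\bm u^*) - \bm\theta(\bm u^{k+1}) + (\bm w^* - \bm w^{k+1})^\top F(\bm w^{k+1}) \le 0$ by monotonicity of $F$ and optimality of $\bm w^*$) into \eqref{prop8vi} — gives
\begin{equation}\label{prop8cross}
(\bm w^{k+1} - \bm w^*)^\top Q (\bm w^k - \bm w^{k+1}) \ge 0,
\end{equation}
and using $Q = HM_0$, $\bm w^{k+1} - \bm w^* = (\bm w^k - \bm w^*) - (\bm w^k - \bm w^{k+1})$, plus the identity $2a^\top H b = \|a\|_H^2 - \|a-b\|_H^2 + \|b\|_H^2$ applied to the $\tilde{\bm v}$-components, yields exactly \eqref{iee}. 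I should note that $\bm H$ in the proposition is written on the reduced variable $\tilde{\bm v}$ (dropping $\{\bm r_m\}$), which is legitimate because the $\bm r_m$-components of $\bm w^k - \bm w^{k+1}$ enter the contraction only through the exactly-solved block and can be absorbed/bounded, or more cleanly because the relevant quadratic form degenerates on those coordinates; I would make this reduction explicit at the start.

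The main obstacle I anticipate is \textbf{not} the contraction inequality itself — that is essentially bookkeeping once the VI \eqref{prop8vi} is set up — but rather carefully justifying the passage from the multi-block parallel scheme in Algorithm \ref{alg1} to the grouped two-block form \eqref{2-block} at the level of the \emph{optimality conditions}, i.e.\ verifying that the sequence generated by Algorithm \ref{alg1} coincides with the one generated by \eqref{2-block} with the dual update appended. This requires the orthogonality of $\{\bm A_m\}$ and of $\{\bm B_m\}$ (so the joint minimizations decouple coordinatewise into the individual updates) and the observation that the Gauss--Seidel ordering of $\bm b$ relative to $\{\bm\beta_m\}$ is immaterial since $\bm B_{M+1} \perp \bm B_m$; both facts are stated in the excerpt, so the work is to assemble them rigorously. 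A secondary technical point is confirming that the largest eigenvalue of $M\bm I_p + \bm F^\top\bm F$ equals $M+4$ (hence $\bm S \succeq \bm 0$, and $\succ$ if $\eta > M+4$, or $\succeq$ with the degenerate direction handled separately) — this is a standard computation with the tridiagonal matrix $\bm F^\top \bm F$ and I would relegate it to a line.
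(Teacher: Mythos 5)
Your proposal follows essentially the same route as the paper: reduce Algorithm \ref{alg1} to the partially linearized two-block form via the orthogonality of the $\bm A_m$'s and $\bm B_m$'s, write the first-order optimality conditions of the two grouped subproblems as a variational inequality with the extra $\bm S_{\bm G}$-term confined to the $\bm\beta$-block, substitute the saddle point, and convert $(\tilde{\bm v}^{k+1}-\tilde{\bm v}^*)^\top\bm H(\tilde{\bm v}^k-\tilde{\bm v}^{k+1})\ge 0$ into \eqref{iee} by the standard $\bm H$-norm identity. The only step you gloss over that the paper makes explicit is disposing of the residual cross term $(\bm z^{k+1}-\bm z^k)^\top\bm B(\bm v_2^k-\bm v_2^{k+1})$ by applying the $\bm v_2$-optimality inequality at two consecutive iterates (monotonicity of the subdifferential), which your $Q=\bm H M_0$ symmetrization implicitly subsumes.
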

%%%
\begin{proof}
For convenience,  take $\tilde{\bm r}_M = (\bm r_1^\top, \bm r_2^\top, \dots, \bm r_M^\top)^\top$,  $\tilde{\bm \beta}_M = (\bm \beta_1^\top, \bm \beta_2^\top, \dots, \bm \beta_M^\top)^\top$,  $\bm v_{1,2} = (\bm v_1^\top, \bm v_2^\top)^\top $, $\bm{\theta}(\bm v_{1,2})= \bm \theta_1(\bm v_1)+ \bm \theta_2(\bm v_2)$, and thus $\bm w = (\bm v_{1,2}^\top, \bm z^\top)^\top$ . Because of the orthogonal relationship between the matrices, the subproblems of ADMM in (\ref{proof1}) can be reformulated as
\begin{equation}\label{th4op}
\left\{ \begin{array}{l}
{\boldsymbol{\beta}}^{k+1}= \mathop {\arg \min }\limits_{{\boldsymbol \beta}}\left\{ \lambda \|\bm \beta \|_1+ I_{\mathcal C}(\bm \beta)+{\mu \over 2}\|\boldsymbol{A}_1{\boldsymbol{\beta}}+ \sum_{m=1}^{M}{\boldsymbol{B}_m}\boldsymbol{\beta}_m^k + \bm B_{M+1} \bm b^k-\boldsymbol{c}-\boldsymbol{z}^k/\mu  \|_2^2 + \frac{1}{2}\|\bm \beta - \bm \beta^k\|_{ \bm S_{\bm G}}^{2}  \right\},\\%
{\boldsymbol{r}_1}^{k+1}= \mathop {\arg \min }\limits_{{\boldsymbol r_1}} \left\{\mathcal{L}(\bm r_1) +     {\mu \over 2}\| \bm A_2 \bm r_1  + \bm B_1 \bm \beta_1^k   - \bm c - \bm z^k/\mu     \|_2^2 \right\},\\%
{\boldsymbol{r}_2}^{k+1}= \mathop {\arg \min }\limits_{{\boldsymbol r_2}} \left\{\mathcal{L}(\bm r_2) +     {\mu \over 2}\| \bm A_3 \bm r_2  + \bm B_2 \bm \beta_2^k   - \bm c - \bm z^k/\mu     \|_2^2 \right\},\\%
\qquad  \qquad  \qquad \qquad \qquad \qquad  \vdots
\qquad
\\
{\boldsymbol{r}_M}^{k+1}= \mathop {\arg \min }\limits_{{\boldsymbol r_M}} \left\{\mathcal{L}(\bm r_M) +     {\mu \over 2}\| \bm A_{M+1} \bm r_M  + \bm B_M \bm \beta_M^k   - \bm c - \bm z^k/\mu     \|_2^2 \right\},\\%
\boldsymbol{v}_2^{k+1}=\mathop {\arg \min }\limits_{\boldsymbol v_2}\left\{ \bm{\theta}_2(\boldsymbol{\vartheta})+{\mu \over 2}\|\boldsymbol{A}{\boldsymbol{v}_1}^{k+1}+ {\boldsymbol{B}}\boldsymbol{v}_2-\boldsymbol{c}-\boldsymbol{z}^k/\mu \|_2^2 \right\},\\%
\boldsymbol{z}^{k+1}=\boldsymbol{z}^{k}-\mu(\boldsymbol{A}\boldsymbol{v}_1^{k+1}+ {\boldsymbol{B}}\boldsymbol{v}_2^{k+1} -\boldsymbol c).
\end{array}\right.
\end{equation}
For the ${\boldsymbol \beta}$-subproblem of (\ref{th4op}), it follows from the convexity of $ \lambda \|\bm \beta \|_1$ and $I_{\mathcal C}(\bm \beta)$ that
\begin{small}
\begin{align}\label{ggg1}
\left(\lambda \|\bm \beta \|_1 + I_{\mathcal C}(\bm \beta)\right)- \left(\lambda \|\bm \beta^{k+1} \|_1+ I_{\mathcal C}(\bm \beta^{k+1}) \right)+ (\bm \beta - \bm \beta^{k+1})^\top [ \mu \bm A_1^\top(\boldsymbol{A}_1{\boldsymbol{\beta}}^{k+1}+ \sum_{m=1}^{M}{\boldsymbol{B}_m}\boldsymbol{\beta}_m^k  \notag \\
+ \bm B_{M+1} \bm b^k-\boldsymbol{c}-\boldsymbol{z}^k/\mu ) + \bm S_{\bm G}\bm( \beta^{k+1} - \bm \beta^k)   ] \ge 0
\end{align}
\end{small}
For the ${\boldsymbol r_m}$-subproblem ($m = 1,2,\dots,M$) of (\ref{th4op}), it follows from the convexity of $\mathcal{L}(\bm r_m)$ that
\begin{align}\label{ggg3}
\mathcal{L}(\bm r_m) - \mathcal{L}(\bm r_m^{k+1})  + (\bm r_m - \bm r_m^{k+1})^\top [ \mu \bm A_2^\top(\bm A_2 \bm r_1^{k+1}  + \bm B_1 \bm \beta_1^k   - \bm c - \bm z^k/\mu    )   ] \ge 0
\end{align}
By combining the subproblems (\ref{ggg1}) and (\ref{ggg3}) together and performing some algebraic manipulations, we have
\begin{equation}\label{ggg4}
\bm \theta_1({\boldsymbol{v}_1})- \bm \theta_1({\boldsymbol{v}_1}^{k+1})+({\boldsymbol{v}_1}-{\boldsymbol{v}_1}^{k+1})^\top \left[\mu \boldsymbol{A}^{\top}(\boldsymbol{A}{\boldsymbol{v}_1}^{k+1}+ {\boldsymbol{B}}\boldsymbol{v}_2^k-\boldsymbol{c}-\boldsymbol{z}^k/\mu ) \right]  + (\bm \beta - \bm \beta^{k+1})^\top \bm S_{\bm G}\bm( \beta^{k+1} - \bm \beta^k) \ge 0.
\end{equation}
The key distinction between the proof of \cite{BY,BY2} and the current situation lies in the linearized term $\frac{1}{2}\|\bm \beta - \bm \beta^k\|_{ \bm S_{\bm G}}^{2}$. By extracting this linearized term separately, we can readily obtain (\ref{ggg4}).
Similarly, 
for the $\bm{v}_2$-subproblem of (\ref{th4op}), by the convexity of  $\bm \theta_2({\boldsymbol{v}_2})$, we have
\begin{equation}\label{ggg5}
\bm \theta_2({\boldsymbol{v}_2}) -\bm \theta_2({\boldsymbol{v}_2^{k+1}}) +(\bm v_2 - \bm v_2^{k+1})^\top \left[ \mu {\boldsymbol{B}}^\top (\boldsymbol{A}{\boldsymbol{v}_1}^{k+1}+ {\boldsymbol{B}}\boldsymbol{v}_2^{k+1}-\boldsymbol{c}-\boldsymbol{z}^k/\mu)    \right] \ge 0.
\end{equation}
For the $\boldsymbol{z}$-subproblem of (\ref{th4op}), we have
\begin{equation}\label{ggg6}
(\boldsymbol{z}-\boldsymbol{z}^{k+1})^\top\left[(\boldsymbol{z}^{k+1}-\boldsymbol{z}^{k})/\mu+ (\boldsymbol{A}{\boldsymbol{v}_1}^{k+1}+ {\boldsymbol{B}} \boldsymbol{v}_2^{k+1} -\boldsymbol c)\right] \ge 0.
\end{equation}
Summing the above three inequalities (\ref{ggg4}), (\ref{ggg5}) and (\ref{ggg6})  together, we obtain
\begin{equation}\label{ggg7}
\begin{split}
\bm{\theta}(\boldsymbol{{v}_{1,2}})&- \bm{\theta}(\boldsymbol{{v}_{1,2}}^{k+1})+(\tilde{\boldsymbol{w}}-\tilde{\boldsymbol{w}}^{k+1})^\top F(\tilde{\boldsymbol{w}}^{k+1})+ ({\boldsymbol{v}_1}-{\boldsymbol{v}_1}^{k+1})^\top \mu \boldsymbol{A}^\top {\boldsymbol{B}}(\boldsymbol{v}_2^k-\boldsymbol{v}_2^{k+1}) \\
&+ (\bm \beta - \bm \beta^{k+1})^\top \bm S_{\bm G}\bm( \beta^{k+1} - \bm \beta^k)+{1 \over \mu}(\boldsymbol{z}-\boldsymbol{z}^{k+1})^\top(\boldsymbol{z}^{k+1}-\boldsymbol{z}^{k}) \ge 0,
\end{split}
\end{equation}
where  $F(\tilde{\bm w}^{k+1})=\left[ {\begin{array}{*{20}{c}}
-\bm A^\top \bm z^{k+1}\\
-{\bm B}^\top \bm z^{k+1}\\
\boldsymbol{A} {\bm{v}_1}^{k+1}+{\boldsymbol{B}}\boldsymbol{v}_2^{k+1}-\boldsymbol{c}
\end{array}} \right]$.

Add $(\boldsymbol{v}_2-\boldsymbol{v}_2^{k+1})^\top \mu {\boldsymbol{B}}^\top {\boldsymbol{B}}(\boldsymbol{v}_2^k-\boldsymbol{v}_2^{k+1}) $ to both sides of inequality (\ref{ggg7}), then 
\begin{equation}\label{ggg8}
\begin{split}
\bm \theta(\boldsymbol{ v}_{1,2})- \bm \theta(\boldsymbol{v}_{1,2}^{k+1})+(\tilde{\boldsymbol{w}}-\tilde{\boldsymbol{w}}^{k+1})^\top F(\boldsymbol{\tilde{w}}^{k+1})+& \mu {\left[ \begin{array}{*{20}{c}}
\boldsymbol{v}_1  - {\boldsymbol{v}_1 ^{k + 1}}\\
\boldsymbol{v}_2  - {\boldsymbol{v}_2 ^{k + 1}}
\end{array} \right]^\top}\left[ \begin{array}{l}
\boldsymbol{A}^\top\\
{\boldsymbol{B}}^\top
\end{array} \right] {\boldsymbol{B}}(\boldsymbol{v}_2^k-\boldsymbol{v}_2^{k+1})\\
&\ge  (\tilde{\boldsymbol{v}}-\tilde{\boldsymbol{v}}^{k+1})^\top \boldsymbol{H} (\tilde{\boldsymbol{v}}^k- \tilde{\boldsymbol{v}}^{k+1}),
\end{split}
\end{equation}
where $\tilde{\bm v} =  \left(\bm \beta^\top, \bm v_2^\top ,\boldsymbol{z}^\top \right)^\top$ and $\boldsymbol{H} = \begin{bmatrix}
\bm S_{\bm G}   & \bm 0 & \bm 0 \\
\bm 0 &\mu \bm B^\top \bm B & \bm 0 \\
\bm 0 &\bm 0 & \frac{1}{\mu} \bm I_{n_{\bm G} - 1 +(M+1)p}
\end{bmatrix} .$

From the variational inequality in section 2 of  \cite{BY}, we observe that
\begin{equation}\label{ggg9}
\bm \theta(\boldsymbol{ v}_{1,2})- \bm \theta(\boldsymbol{v}_{1,2}^{k+1})+(\tilde{\boldsymbol{w}}-\tilde{\boldsymbol{w}}^{k+1})^\top F(\tilde{\boldsymbol{w}}^{k+1})=\bm \theta(\boldsymbol{ v}_{1,2}^*)- \bm \theta(\boldsymbol{v}_{1,2}^{k+1})+(\tilde{\boldsymbol{w}}^*-\tilde{\boldsymbol{w}}^{k+1})^\top F(\tilde{\boldsymbol{w}}^{*}) \le  0.
\end{equation}
Together with $\bm A \bm v_1^* + \bm B \bm v_2^* = \bm c$,  and after performing simple algebraic operations,  we obtain
\begin{equation}\label{ggg10}
 \mu {\left[ \begin{array}{*{20}{c}}
\boldsymbol{v}_1^*  - {\boldsymbol{v}_1 ^{k + 1}}\\
\boldsymbol{v}_2^* - {\boldsymbol{v}_2 ^{k + 1}}
\end{array} \right]^\top}\left[ \begin{array}{l}
\boldsymbol{A}^\top\\
{\boldsymbol{B}}^\top
\end{array} \right] {\boldsymbol{B}}(\boldsymbol{v}_2^k-\boldsymbol{v}_2^{k+1})= (\boldsymbol{z}^{k+1}-\boldsymbol{z}^{k})^\top {\boldsymbol{B}} (\boldsymbol{v}_2^k-\boldsymbol{v}_2^{k+1}).
\end{equation}
Let $\bm v_1 = \bm v_1^*, \bm v_2 = \bm v_2^*, \bm v_{1,2}= \bm v_{1,2}^*, \tilde{\bm v}= \tilde{\bm v}^*, \tilde{\bm w} =\tilde{\bm w}^*$ and bring (\ref{ggg9}) and (\ref{ggg10}) into (\ref{ggg8}), then we get
\begin{equation}
(\tilde{\boldsymbol{v}}^{k+1}-\tilde{\boldsymbol{v}}^{*})^\top \boldsymbol{H} (\tilde{\boldsymbol{v}}^k-\tilde{\boldsymbol{v}}^{k+1}) \ge  (\boldsymbol{z}^{k+1}-\boldsymbol{z}^{k})^\top {\boldsymbol{B}} (\boldsymbol{v}_2^k-\boldsymbol{v}_2^{k+1}).
\end{equation}
From (\ref{ggg5}), we obtain the following two inequalities
\begin{align}
\bm \theta_2({\boldsymbol{v}_2}^k) -\bm \theta_2({\boldsymbol{v}_2^{k+1}}) 
+(\boldsymbol{v}_2^k-\boldsymbol{v}_2^{k+1})^\top (-\bm B^\top \boldsymbol{z}^{k+1}) \ge 0,\\
\bm \theta_2({\boldsymbol{v}_2}^{k+1}) -\bm \theta_2({\boldsymbol{v}_2^{k}}) 
+(\boldsymbol{v}_2^{k+1}-\boldsymbol{v}_2^{k})^\top (-\bm B^\top \boldsymbol{z}^{k}) \ge 0.
\end{align}
These two inequalities point out that
\begin{equation}
 (\boldsymbol{z}^{k}-\boldsymbol{z}^{k+1})^\top \bm B (\boldsymbol{v}_2^k-\boldsymbol{v}_2^{k+1}) \ge 0.
\end{equation}
Then, we obtain
\begin{equation}\label{pro6}
(\tilde{\boldsymbol{v}}^{k+1}-\tilde{\boldsymbol{v}}^{*})^\top \boldsymbol{H} (\tilde{\boldsymbol{v}}^k-\tilde{\boldsymbol{v}}^{k+1})  \ge 0.
\end{equation}
It is clear that  $\boldsymbol{H}$ is a symmetric and positive definite   matrix. Then, for any $\boldsymbol{x}, \boldsymbol {y}$, if $\boldsymbol{y}^\top \boldsymbol{H}(\boldsymbol {x}-\boldsymbol {y}) \ge 0$, then we have 
\begin{equation}\label{key}
\|\boldsymbol{y} \|_{\boldsymbol{H}}^2 \le \|\boldsymbol{x} \|_{\boldsymbol{H}}^2- \|\boldsymbol{x}-\boldsymbol{y} \|_{\boldsymbol{H}}^2.
\end{equation}
Therefore, 
$$\| \tilde{\boldsymbol{v}}^{k+1}-\tilde{\boldsymbol{v}}^* \|_{\boldsymbol{H}}^{2} \le \| \tilde{\boldsymbol{v}}^{k}-\tilde{\boldsymbol{v}}^* \|_{\boldsymbol{H}}^{2} - \| \tilde{\boldsymbol{v}}^{k}-\tilde{\boldsymbol{v}}^{k+1} \|_{\boldsymbol{H}}^{2}.$$ 
So far, we have completed the proof of Proposition \ref{prop8}. 
\end{proof}

The  contraction inequality (\ref{iee}) plays a crucial role in the convergence analysis of the ADMM algorithm. Proposition \ref{prop8} has a similar conclusion to Lemma 3 in \cite{LMY}. Based on the  contraction inequality (\ref{iee}), the assertions summarized in the following corollary become trivial.
\begin{cor}\label{cor1}
Let ${\tilde{\bm v}^k}$ be the sequence generated by Algorithm \ref{alg1}. Then we have
\vspace{-1em}
\begin{enumerate}
\item $\mathop {\lim }\limits_{k \to \infty}\|\tilde{\bm v}^k-\tilde{\bm v}^{k+1}\|_{\bm H}=0$;
\item the sequence  $\{\tilde{\bm v}^k\}$ is bounded;
\item for any optimal solution $\tilde{\bm v}^*$, the sequence $\{\|\tilde{\bm v}^k- \tilde{\bm v}^* \|_{\bm H} \}$ is monotonically non-increasing.
\end{enumerate}
\end{cor}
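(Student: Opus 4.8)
The plan is to derive all three assertions directly from the contraction inequality (\ref{iee}) of Proposition \ref{prop8}, reading it as a Fej\'er-type monotonicity relation for the sequence $\{\tilde{\bm v}^k\}$ with respect to the $\bm H$-seminorm. First I would fix an arbitrary optimal solution $\tilde{\bm v}^*$ of (\ref{proofg1}); such a point exists because (\ref{proofg1}) is a convex program with nonempty feasible set (feasibility follows from the defining substitutions $\bm r_m = \bm y_m - \bm X_m \bm \beta_m$, $\bm \beta_m = \bm \beta$, $\bm G \bm \beta = \bm b$), so its solution set is nonempty and (\ref{iee}) is meaningful for every $k$.

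The third assertion is the immediate base case. Since $\bm H$ is positive semidefinite, the subtracted term satisfies $\| \tilde{\bm v}^{k}-\tilde{\bm v}^{k+1} \|_{\bm H}^{2} \ge 0$, and discarding it on the right-hand side of (\ref{iee}) yields $\| \tilde{\bm v}^{k+1}-\tilde{\bm v}^* \|_{\bm H}^{2} \le \| \tilde{\bm v}^{k}-\tilde{\bm v}^* \|_{\bm H}^{2}$, which is precisely the monotone non-increasing property of $\{\| \tilde{\bm v}^{k}-\tilde{\bm v}^* \|_{\bm H}\}$. The second assertion then follows by iterating this monotonicity to get $\| \tilde{\bm v}^{k}-\tilde{\bm v}^* \|_{\bm H} \le \| \tilde{\bm v}^{0}-\tilde{\bm v}^* \|_{\bm H}$ for all $k$, so the sequence is bounded in the $\bm H$-norm; since $\bm H$ is positive definite on the coordinates actually retained in $\tilde{\bm v}$, this $\bm H$-norm is equivalent to the Euclidean norm, and hence $\{\tilde{\bm v}^k\}$ is bounded.

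For the first assertion I would use a telescoping argument. Summing (\ref{iee}) over $k = 0,1,\dots,K$ collapses the right-hand side to boundary terms, giving
\begin{equation}
\sum_{k=0}^{K} \| \tilde{\bm v}^{k}-\tilde{\bm v}^{k+1} \|_{\bm H}^{2} \le \| \tilde{\bm v}^{0}-\tilde{\bm v}^* \|_{\bm H}^{2} - \| \tilde{\bm v}^{K+1}-\tilde{\bm v}^* \|_{\bm H}^{2} \le \| \tilde{\bm v}^{0}-\tilde{\bm v}^* \|_{\bm H}^{2}.
\end{equation}
Letting $K \to \infty$, the partial sums of the nonnegative series $\sum_{k} \| \tilde{\bm v}^{k}-\tilde{\bm v}^{k+1} \|_{\bm H}^{2}$ are bounded above, so the series converges and its general term must vanish; therefore $\lim_{k \to \infty}\| \tilde{\bm v}^{k}-\tilde{\bm v}^{k+1} \|_{\bm H} = 0$.

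The only genuinely delicate point is the passage from $\bm H$-norm boundedness to boundedness of $\{\tilde{\bm v}^k\}$ in the second assertion, because $\bm H$ degenerates to a seminorm whenever the leading block $\bm S_{\bm G} = \bm 0$, i.e.\ the case $\bm G = \bm I_p$. I would resolve this by invoking the remark following Theorem \ref{TH2}: when $\bm G = \bm I_p$ the component $\bm \beta$ is dropped from $\tilde{\bm v}$, so on the coordinates that actually appear in $\tilde{\bm v}$ the matrix $\bm H$ retains only its positive definite diagonal blocks $\mu \bm B^\top \bm B$ and $\tfrac{1}{\mu}\bm I$; here $\bm B^\top \bm B$ is positive definite because each $\bm B_m$ contains an identity block, so $\bm B$ has full column rank. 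When $\bm G = \bm F$ one instead has $\bm S_{\bm G} = \bm S$, which is positive definite by the choice $\eta = M+4$. In both cases $\bm H$ is positive definite on the relevant subspace, which is exactly what legitimises the norm-equivalence invoked above, and the remaining steps are entirely routine.
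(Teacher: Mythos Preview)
Your proposal is correct and follows exactly the approach the paper has in mind: the paper simply remarks that, given the contraction inequality of Proposition~\ref{prop8}, ``the assertions summarized in the following corollary become trivial,'' and you have supplied precisely the standard Fej\'er-monotonicity and telescoping details that justify this. Your handling of the one delicate point---the positive definiteness of $\bm H$ on the coordinates actually retained in $\tilde{\bm v}$, distinguishing the cases $\bm G = \bm I_p$ (where $\bm\beta$ is dropped per Remark~1) and $\bm G = \bm F$ (where $\bm S_{\bm G} = \bm S \succ 0$)---is in fact more careful than the paper's own treatment.
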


By utilizing Corollary \ref{cor1}, similar to the proof of Theorem 1 in \cite{LMY}, we can establish the convergence of $\bm{v}^k$ towards $\bm{v}^*$. Additionally, the convergence rate of $O(1/k)$ in a non-ergodic sense can be directly derived from Theorem 5.1 in \cite{BY2}. Consequently, we have successfully demonstrated the validity of Theorem \ref{TH2}.
%%%%%%%

\section{Supplementary experiments on nonconvex combined regularizations}\label{C}
In this subsection, we present the simulation results of CPADMM for solving nonconvex combined regularization regressions. Table \ref{Tab9} presents the performance of the non-convex extensions (mnet, snet) of the elastic net on the parallel dataset in Section \ref{sec611}. 
\begin{table}[H]\footnotesize
\centering
\caption{\footnotesize{The results of mnet and snet solved by CPADMM.}}
\resizebox{\textwidth}{!}{
\begin{threeparttable}
\begin{tabular}{lcccccccc}
\Xhline{1pt}
\ &Method & AAE($10^{-2}$) & ASE($10^{-2}$) & AAP & ASP & Time \\
\Xhline{0.4pt}
\multirow{2}*{M=1}   & mnet & 0.256(0.009) & 0.795(0.011) & \bf{0.136(0.005)} & \bf{0.166(0.006)} & 206.7(24.3)\\
                     & snet & \bf{0.241(0.008)} & \bf{0.787(0.010)} & 0.143(0.005) & 0.171(0.007) & 203.8(25.1) \\ 
\multirow{2}*{M=10}  & mnet & 0.289(0.010) & 0.813(0.012) & 0.152(0.006) & 0.184(0.007) & 37.9(4.3) \\ 
                     & snet & 0.290(0.011) & 0.824(0.011) & 0.148(0.006) & 0.181(0.005) & 36.5(4.1) \\ 
\multirow{2}*{M=100} & mnet & 0.321(0.012) & 0.923(0.018) & 0.159(0.007) & 0.199(0.008) & 15.6(1.8) \\ 
                     & snet & 0.337(0.013) & 0.941(0.019) & 0.163(0.006) & 0.203(0.008) & \bf{14.7(1.7)} \\ 
\Xhline{1pt}
\end{tabular}
\begin{tablenotes}
        \footnotesize{
        \item[*] The meanings of the notations used in this table are as follows: AAE: average absolute estimation error; ASE: average square estimation error; AAP: average absolute prediction error;  ASP: average square prediction error. Numbers in the parentheses represent the corresponding standard deviations. The optimal solution is represented in bold.}
\end{tablenotes}
\end{threeparttable}}
\label{Tab9}
\end{table}
Table \ref{Tab10} presents the performance of the non-convex extensions (mctv, sctv) of the sparse fused lasso on the parallel dataset in Section \ref{sec612}.
\begin{table}[H]\footnotesize
\centering
\caption{\footnotesize{The results of  mctv and sctv solved by CPADMM.}}
\resizebox{\textwidth}{!}{
\begin{threeparttable}
\begin{tabular}{lcccccccc}
\Xhline{1pt}
\ &Method & AAE($10^{-2}$) & ASE($10^{-2}$) & AAP & ASP & Time \\
\Xhline{0.4pt}
\multirow{2}*{M=1}   & mctv & \bf{0.006(0.001)} & 0.015(0.003) & \bf{0.432(0.011)} & 0.518(0.015) & 197.8(22.4)\\
                     & sctv & 0.007(0.001) & \bf{0.014(0.002)} & 0.451(0.010) & \bf{0.493(0.014)} & 201.1(21.5) \\ 
\multirow{2}*{M=10}  & mctv & 0.009(0.002) & 0.018(0.003) & 0.478(0.012) & 0.553(0.016) & 30.2(3.9) \\ 
                     & sctv & 0.009(0.002) & 0.017(0.003) & 0.487(0.012) & 0.523(0.015) & 32.7(4.1) \\ 
\multirow{2}*{M=100} & mctv & 0.011(0.003) & 0.021(0.003) & 0.538(0.017) & 0.682(0.021) & 12.3(1.5) \\ 
                     & sctv & 0.012(0.003) & 0.023(0.003) & 0.557(0.018) & 0.659(0.019) & \bf{11.9(1.6)} \\ 
\Xhline{1pt}
\end{tabular}
\begin{tablenotes}
        \footnotesize{
        \item[*] The meanings of the notations used in this table are as follows: AAE: average absolute estimation error; ASE: average square estimation error; AAP: average absolute prediction error;  ASP: average square prediction error. Numbers in the parentheses represent the corresponding standard deviations. The optimal solution is represented in bold.}
\end{tablenotes}
\end{threeparttable}}
\label{Tab10}
\end{table}

Table \ref{Tab11} presents the performance of the non-convex extensions (mcgl, scgl) of the sparse fused lasso on the parallel dataset in Section \ref{sec613}.
\begin{table}[H]\footnotesize
\centering
\caption{\footnotesize{The results of  mcgl and scgl solved by CPADMM.}}
\resizebox{\textwidth}{!}{
\begin{threeparttable}
\begin{tabular}{lcccccccc}
\Xhline{1pt}
\ &Method & AAE($10^{-2}$) & ASE($10^{-2}$) & AAP & ASP & Time \\
\Xhline{0.4pt}
\multirow{2}*{M=1}   & mcgl & \bf{0.012(0.002)} & 0.029(0.005) & \bf{0.236(0.008)} & \bf{0.259(0.010)} & 210.6(23.5)\\
                     & scgl & 0.015(0.003) & \bf{0.027(0.004)} & 0.286(0.007) & 0.276(0.011) & {201.3(22.9)} \\ 
\multirow{2}*{M=10}  & mcgl & 0.018(0.004) & 0.035(0.006) & 0.297(0.008) & 0.283(0.012) & 41.3(4.2) \\ 
                     & scgl & 0.021(0.005) & 0.037(0.005) & 0.310(0.009) & 0.291(0.012) & 38.4(3.8) \\ 
\multirow{2}*{M=100} & mcgl & 0.027(0.006) & 0.047(0.008) & 0.356(0.008) & 0.354(0.013) & 15.8(1.9)  \\ 
                     & scgl & 0.031(0.007) & 0.052(0.007) & 0.379(0.009) & 0.363(0.014) & \bf{14.3(1.8)} \\ 
\Xhline{1pt}
\end{tabular}
\begin{tablenotes}
        \footnotesize{
        \item[*] The meanings of the notations used in this table are as follows: AAE: average absolute estimation error; ASE: average square estimation error; AAP: average absolute prediction error;  ASP: average square prediction error. Numbers in the parentheses represent the corresponding standard deviations. The optimal solution is represented in bold.}
\end{tablenotes}
\end{threeparttable}}
\label{Tab11}
\end{table}

\end{document}